\documentclass[twoside,11pt]{article}

\usepackage{blindtext}
\usepackage[preprint]{jmlr2e}

\usepackage{lastpage}
\jmlrheading{??}{2026}{1-\pageref{LastPage}}{??/??; Revised ??/??}{??/??}{??-??}{Anya Fries, Markus Reichstein, David Blei, and Jonas Peters}

\ShortHeadings{Worst-case low-rank approximations}{Fries, Reichstein, Blei, and Peters}
\firstpageno{1}

\usepackage{makecell, graphicx}
\usepackage{titletoc}
\usepackage{amsmath, bbm}
\usepackage{enumitem}
\usepackage{float}
\usepackage{multirow}
\usepackage{subfig}
\usepackage[dvipsnames]{xcolor}
\usepackage{xspace}

\newtheorem{setting}{Setting}
\newtheorem{assumption}{Assumption}

\newcommand{\mstar}{m^*}

\newcommand{\Vmaxrcs}{V^{\textrm{maxRCS}}}

\newcommand{\Vminpca}{V^{\textrm{minPCA}}}

\newcommand{\Vnormminpca}{V^{\textrm{norm-minPCA}}}

\newcommand{\Vpool}{V^{\textrm{pool}}}
\newcommand{\Vpoolk}{V^{\textrm{pool}}_k}
\newcommand{\Vsep}{V^{\textrm{sep}}}
\newcommand{\Vsepk}{V^{\textrm{sep}}_k}
\newcommand{\Vseqminpcak}{V^{\textrm{seq-minPCA}}_k}

\newcommand{\VpoolHAT}{\hat V^{\textrm{pool}}}
\newcommand{\VpoolkHAT}{\hat V^{\textrm{pool}}_k}
\newcommand{\VminpcakHAT}{\hat V^{\textrm{minPCA}}_k}
\newcommand{\VmaxrcsHAT}{\hat V^{\textrm{maxRCS}}}
\newcommand{\VmaxrcskHAT}{\hat V^{\textrm{maxRCS}}_k}

\newcommand{\wcPCA}{wcPCA\xspace}

\newcommand{\minPCA}{\ensuremath{\mathtt{minPCA}}\xspace}
\newcommand{\normminPCA}{\ensuremath{\mathtt{norm}\textrm{-}\mathtt{minPCA}}\xspace}

\newcommand{\maxRCS}{\ensuremath{\mathtt{maxRCS}}\xspace}
\newcommand{\normmaxRCS}{\ensuremath{\mathtt{norm}\textrm{-}\mathtt{maxRCS}}\xspace}
\newcommand{\maxMC}{\ensuremath{\mathtt{maxMC}}\xspace}
\newcommand{\poolPCA}{\ensuremath{\mathtt{poolPCA}}\xspace}
\newcommand{\sepPCA}{\ensuremath{\mathtt{sepPCA}}\xspace}
\newcommand{\avgcovPCA}{\ensuremath{\mathtt{avgcovPCA}}\xspace}
\newcommand{\poolMC}
{\ensuremath{\mathtt{poolMC}}\xspace}
\newcommand{\maxregret}{\ensuremath{\mathtt{maxRegret}}\xspace}
\newcommand{\normmaxregret}{\ensuremath{\mathtt{norm}\textrm{-}\mathtt{maxRegret}}\xspace}

\newcommand{\conv}{\xrightarrow{p}}

\newcommand{\E}{\mathcal{E}}
\newcommand{\Ex}{\mathbb{E}} 

\newcommand{\iid}{\overset{\text{iid}}{\sim}}
 
\newcommand{\Ok}{\mathcal{O}_{p\times k}}
\newcommand{\R}{\mathbb{R}}
\newcommand{\Rnp}{\mathbb{R}^{n\times p}}

\newcommand{\Rpk}{\mathbb{R}^{p\times k}}
 
\DeclareMathOperator{\Tr}{Tr}
\DeclareMathOperator*{\argmax}{arg\,max}
\DeclareMathOperator*{\argmin}{arg\,min}

\DeclareMathOperator{\diag}{diag}

\begin{document}
\title{Worst-case low-rank approximations}
\author{\name Anya Fries \email anya.fries@stat.math.ethz.ch \\
       \addr Seminar for Statistics\\
       ETH Z\"urich\\
       Z\"urich, Switzerland
       \AND
\name Markus Reichstein \email mreichstein@bgc-jena.mpg.de \\
\addr Max Planck Institute for Biogeochemistry \\
        Jena, Germany    
\AND
\name David Blei \email david.blei@columbia.edu \\
\addr Department of Computer Science and Department of Statistics \\
    Columbia University \\
    New York, USA
\AND
\name Jonas Peters \email  jonas.peters@stat.math.ethz.ch \\
\addr Seminar for Statistics\\
       ETH Z\"urich\\
       Z\"urich, Switzerland
}
\editor{My editor}
\maketitle
\begin{abstract}%
    Real-world data in health, economics, and environmental sciences are often collected across heterogeneous domains (such as  hospitals, regions, or time periods). 
    In such settings, 
    distributional shifts can make standard PCA unreliable, in that, for example, the leading principal components may explain substantially less variance in unseen domains than in the training domains.
    Existing approaches (such as FairPCA) have proposed to consider 
    worst-case (rather than average) performance across multiple domains.
    This work develops
    a unified framework,  
    called \wcPCA,   
    applies it to other objectives (resulting in the novel estimators 
    such as \normminPCA
    and \normmaxregret, which are better suited for applications with heterogeneous total variance)
    and analyzes their relationship.
    We prove that for all objectives, the estimators 
    are worst-case optimal not only over the observed source domains but also over all target domains whose covariance lies in the convex hull of the (possibly normalized) source covariances. We establish consistency and asymptotic worst-case guarantees of empirical estimators. 
    We extend our methodology to matrix completion, another problem that makes use of low-rank approximations, and prove approximate worst-case optimality for inductive matrix completion.
    Simulations and 
    two
    real-world applications on ecosystem-atmosphere fluxes 
    demonstrate marked improvements in worst-case performance, with only minor losses 
    in average performance.
\end{abstract}
\begin{keywords}
  distribution generalization, worst-case optimality, principal component analysis, matrix completion, low-rank approximation
\end{keywords}

\section{Introduction}\label{sec:intro}
Real-world data frequently originate from multiple heterogeneous domains, characterized by distinct statistical properties or underlying structures. For example, medical data collected from different hospitals or environmental data spanning different ecosystems and time periods often exhibit distributional shifts. Traditional dimensionality reduction methods, such as principal components analysis (PCA), generally 
implicitly 
assume homogeneity across domains. When this assumption is violated, pooled representations can fail to generalize, for example, explaining substantially less variance in unseen domains.

PCA 
summarizes data in a lower-dimensional subspace 
chosen to maximize
explained variance (or, equivalently, to minimize $\ell_2$-reconstruction error). Given a random (row) vector $\mathbf{x} \in\R^{1\times p}$ with zero mean and 
covariance matrix $\Sigma := \Ex[\mathbf{x}^\top \mathbf{x}] \in \R^{p \times p}$, 
we say that an orthonormal matrix $V^*_k \in \Rpk$ solves \emph{rank-$k$ population PCA} if
\begin{equation}\label{eqn:pca}
     V^*_k \in \argmax_{V \in \Ok}  \Tr(V^\top \Sigma V),
\end{equation}   
where $\Ok$ denotes the space of all $p\times k$-dimensional orthonormal matrices 
(in practice, one considers sample covariance matrices instead). In the multi-domain setting, 
where we observe data in different domains $e$ 
(corresponding to different covariance matrices $\Sigma_e$)
no single covariance characterizes 
the data 
and the question arises how to take the different domains into consideration. 
Intuitively,
pooling all samples 
to form a single sample covariance matrix
ignores domain-specific variability, while treating each domain independently discards common structure. 
This work investigates the implications of 
replacing 
$\Tr(V^\top \Sigma V)$ in~\eqref{eqn:pca} 
by a term aggregating over the source domains; for example,
$\Tr(V^\top \sum_e w_e \Sigma_e V)$
(corresponding to a pooling approach)
or 
$\min_e \Tr(V^\top \Sigma_e V)$
(corresponding to a worst-case approach).
These objectives generally yield different answers (see, for example, 
Figures~\ref{fig:appl_fn} and~\ref{fig:eg-minpca}).

Moreover, this work shows  that the different objectives yield different performances on test domains, too: for example, some, but not all, satisfy robustness guarantees that extend beyond the observed source domains (see Section~\ref{sec:robustness} for details).
To illustrate this 
point
empirically, we consider daily averages of FLUXNET data \citep{pastorello2020fluxnet2015}, a global network of eddy covariance towers that measure biosphere–atmosphere exchanges of CO$_2$ (e.g., net ecosystem exchange, gross primary production), water vapor, and energy. 
We treat the TransCom 3 regions (a standard partition of the globe into large-scale climate zones \citep{gurney2002towards, gurney2004transcom}) as domains, and select five of them as source domains. 
We solve PCA on the pooled source data (\poolPCA) and compare it with one of the methods 
proposed
in this paper (\normmaxregret), a worst-case low-rank approximation. 
We consider two PCs 
(see Figure~\ref{fig:scree_plots} for a justification).
The proportion of explained variance is then evaluated for each source and target regions. 
\begin{figure}[ht]
    \centering
    \includegraphics[scale=1]{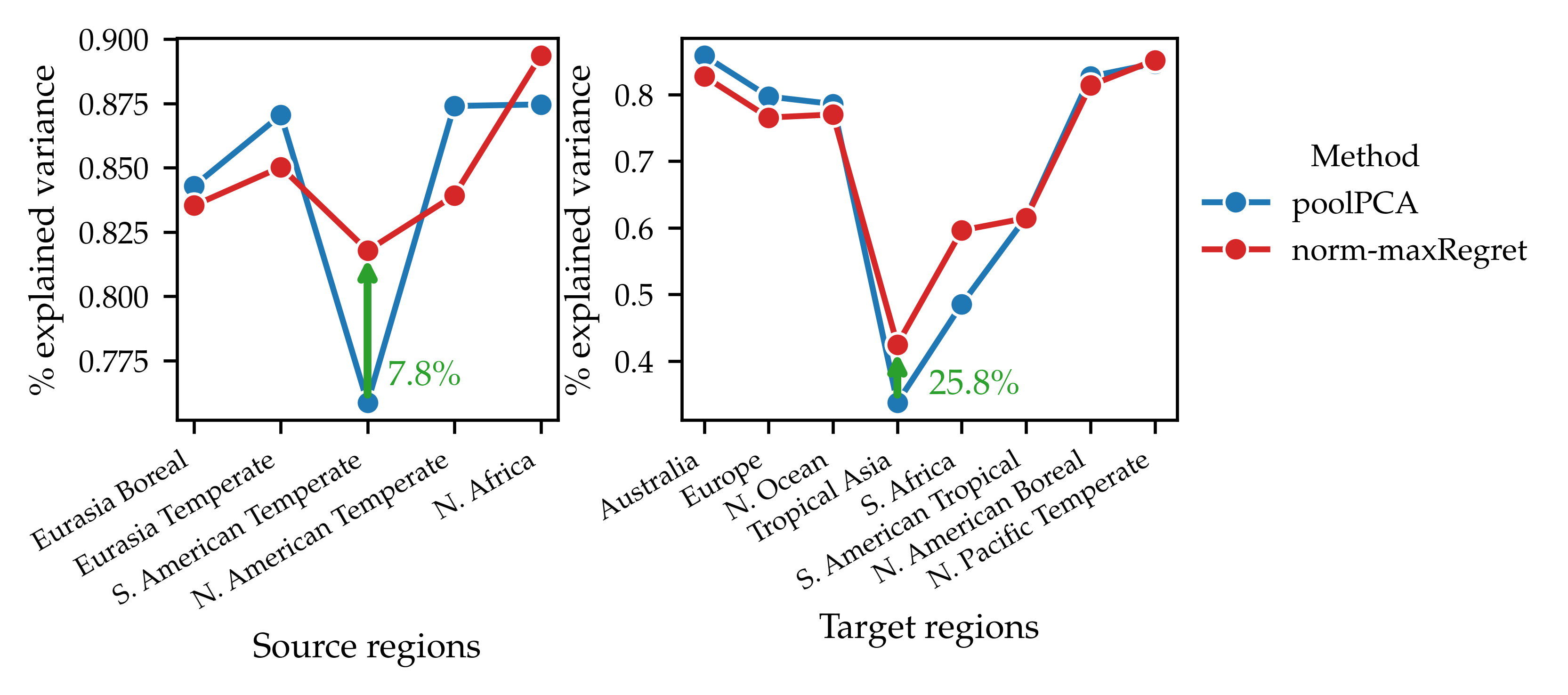}
    \caption{
        \textit{Proportion of explained variance by \poolPCA and \normmaxregret in the FLUXNET example on source (left) and target (right) regions 
                (for one specific split).}
                Unlike \poolPCA, which optimizes a pooled explained variance, \normmaxregret optimizes a worst-case criterion over the source domains, which, here, yields a worst-case improvement of 7.8\%. 
        Section~\ref{sec:robustness} proves that this choice comes with a worst-case improvement over a larger class of target domains, too. Indeed, in this example, the worst-case explained variance over the target domains improves by 25.8\%.  
                (Over 20 different splits, the median relative increase in worst-case improvements equal 25.6\% and 16.6\% for source and target domains, respectively; the median decrease in average performance for source domains equals 7.5\%; see Section~\ref{sec:appl:fluxnet} for more details.)
                                                                                 }
    \label{fig:appl_fn}
\end{figure}
Figure~\ref{fig:appl_fn} shows that
\normmaxregret improves upon \poolPCA in worst-case error 
(with only a moderate reduction in terms of average performance).
Section~\ref{sec:appl:fluxnet} 
further
shows 
that similar 
results hold for other splits into source and target domains, too; it also provides further details on the experimental setup.

We extend the worst-case framework to matrix completion, motivated by the following connection to PCA.
For a fixed $k < p$, PCA is an instance of low-rank approximation: 
in the finite-sample setting,
it
solves
$\min_{V\in\Ok} \|X - X VV^\top\|^2_F$,
where $X\in\Rnp$ is a data matrix (so that $X V$ and $V^\top$ act as left and right factors, respectively).
Low-rank approximations are also used to tackle the problem of matrix completion 
\citep{candes2010mc, candes2012exactmc}: here, a partially observed matrix is approximated by a low-rank factorization in order to predict the missing entries \citep[e.g.,][]{jain2013low}. 
However, traditional matrix completion approaches---much like PCA---implicitly assume that the latent structure is homogeneous across all samples.
We thus apply the worst-case principle to matrix completion and propose to learn a shared representation that minimizes worst-case reconstruction error (on the observed entries) across the source domains, and then use the shared factor for inductive matrix completion in a new, partially observed target domain. Section~\ref{sec:maxmc} proves that, when the source domains are fully observed,
we obtain analogous robustness guarantees to those of the worst-case PCA formulations, up to an approximation factor.

\subsection{Related work}\label{sec:related}
\paragraph{Domain generalization in prediction.}
Domain generalization in prediction has been studied extensively. One approach considers adversarial robustness, where performance is optimized under worst-case training scenarios.
Examples include maximin effects in regression \citep{meinshausen2015maximin, guo2024maxmin}, Magging \citep{buhlmann2016magging}, Distributionally Robust Optimization (DRO) \citep{ben2013robust, duchi2019variance,sinha2017certifying}, Group DRO \citep{sagawa2019distributionally, hu2018does}, and maximum risk minimization (maxRM) \citep{freni2025maximumriskminimizationrandom}.
These approaches focus on predictive risk, whereas our work studies worst-case robustness for unsupervised dimensionality reduction.

\paragraph{Fair PCA.}
Several extensions of PCA have been proposed to address heterogeneous domains, especially in the context of fairness. Fair principal component analysis (Fair PCA),
first proposed by \citet{samadi2018price},
aims to reduce disparities in reconstruction error across groups, which can also be interpreted as domains. 
\citet{samadi2018price} consider two domains, and minimize the maximum increase in reconstruction error incurred by using a shared projection instead of the domain-specific optimal projection.
For multiple domains, this objective is often referred to as the disparity error or regret (see Section~\ref{sec:regret}).
\citet{kamani2022efficient} study trade-offs between reconstruction accuracy and fairness constraints, while \citet{zalcberg2021fairpca} introduce fair robust PCA and fair sparse PCA. Within a broader multi-criteria dimensionality reduction framework, \citet{tantipongpipat2019multi} define Fair PCA as maximizing worst-case explained variance (discussed in Section~\ref{sec:explvar}). These objectives have motivated algorithmic developments, including those of \citet{babu2023fair}.
However, all of these approaches target fairness (i.e., in-sample guarantees), whereas our framework addresses out-of-sample guarantees.

\paragraph{Stable and common PCA.}
Beyond fairness, \cite{wang2025stablepca} seek robustness guarantees across domains and study a Group DRO formulation of PCA with the Fantope relaxation, termed StablePCA. 
While they focus on the optimization algorithm, 
we focus on
explicit out-of-sample guarantees 
(including 
stronger results for the population case
and both consistency and robustness guarantees for the estimators).
We also
consider a broader class of worst-case objectives, 
analyze their relationship,
and extend the framework to matrix completion.
Other multi-domain PCA frameworks, such as the one proposed by \citet{puchhammer-2024}, accommodate heterogeneous data but yield domain-specific projections rather than a unified robust representation. Similarly, common PCA \citep{flury1986asymptotic, flury1987two} 
and common empirical orthogonal functions \citep{hannachi2023common} allow for heterogeneous data, but neither provide worst-case nor out-of-sample guarantees.

\paragraph{Matrix completion} 
Matrix completion is well studied in the single-domain setting, where recovery guarantees are available under an incoherence assumption and assumptions on the sampling 
process
of the observed entries \citep[e.g.,][]{candes2010mc, candes2012exactmc, jain2013low}.
Inductive matrix 
enables
completion for previously unseen rows or columns, for example by incorporating side information \citep{jain2013provableinductivematrixcompletion} or by estimating latent factors from partially observed entries in a learned low-rank model \citep{mf_koren2009}.
Multi-domain extensions include fairness-aware non-negative matrix factorization \citep{kassab2026fairernonnegativematrixfactorization}
and fairness-aware federated matrix factorization \citep{fair_federated_mc_2022}.
These works, however, focus on group fairness rather than robustness across heterogeneous domains.
To our knowledge, 
the results we present in Section~\ref{sec:robustness-mc} are the first
explicit worst-case guarantees for (inductive) matrix completion.

\subsection{Contributions and outline}
We develop a unified framework for worst-case PCA and matrix completion.
While related objectives have appeared in Fair PCA and in relaxed formulations such as StablePCA, these works either do not address out-of-sample robustness or focus on the optimization problem.
Our framework clarifies the relationships between worst-case variance-based, reconstruction-based, and regret-based objectives, which—unlike in classical PCA—generally yield distinct solutions. 
We prove that the resulting worst-case solutions generalize beyond the observed domains, 
more specifically, that
they are worst-case optimal over all distributions whose covariances lie in the convex hull of the source covariances. The guarantees extend ideas from distributionally robust optimisation while requiring only second-moment information.
We further provide finite-sample theory, proving consistency and asymptotic worst-case optimality of the empirical estimators.
Simulations and a FLUXNET case study demonstrate  improvements in worst-case performance with only minor losses on average.

The remainder of this work is organized as follows.
Section~\ref{sec:motivation} motivates worst-case PCA (\wcPCA). 
Section~\ref{sec:variants} introduces and discusses
the variants of the 
objective
and links them to reconstruction error. 
Section~\ref{sec:robustness} establishes the main robustness guarantees, which we extend to the finite-sample setting in Section~\ref{sec:finite-sample}. 
Section~\ref{sec:maxmc} broadens the framework to matrix completion with missing data. 
Section~\ref{sec:simulations} illustrates the theory through simulations
and Section~\ref{sec:applications} presents two real-world applications on ecosystem–atmosphere flux data. 
All proofs can be found in Appendix~\ref{appendix:proofs}.

\subsection{Notation}
Throughout the article, we make use of the following notation. 
For all $n\in\mathbb{N}_+$, we write $[n] := \{1, \ldots, n\}$. 
Generally, 
non-bold
lowercase letters (e.g., $x$) are used for deterministic row vectors, bold lowercase letters (e.g., $\mathbf{x}$) are used for random row vectors, and uppercase letters (e.g., $X$) are used for random matrices. 
For all matrices $X \in \Rnp$  
(random or deterministic),
the Frobenius norm of $X$ is denoted by 
$\|X\|_F = \sqrt{\sum_{i=1}^n \sum_{j=1}^pX_{i j}^2} = \sqrt{\Tr\left(X^\top X\right)} = \sqrt{\sum_{i=1}^{\min \{n, p\}} \sigma_i^2(X)}$, where $\sigma_i(X)$ denotes singular values of $X$ and $\Tr(\cdot)$ the trace.
For all $x\in\R^p$, $\diag(x)\in\R^{p\times p}$ is the matrix formed with $x$ on the diagonal and 0s elsewhere.
The set of $p\times k$ matrices with 
orthonormal
columns is denoted $\Ok := \{V \in\Rpk \mid V^\top V = I_k \}$. 
The convex hull of a set $S$ contained in a real vector space, denoted by $\mathrm{conv}(S)$, is the set of all convex combinations of points in $S$, that is, 
$\mathrm{conv}(S) := 
\{ \sum_{i=1}^k \alpha_i x_i \ | \ x_i \in S, \ \alpha_i \ge 0, \ \sum_{i=1}^k \alpha_i = 1, \ k \in \mathbb{N} \}$.

\section{Worst-case PCA and its variants}
\subsection{Setting and motivation}\label{sec:motivation}
Given data from 
source domains, the goal is to learn low-dimensional representations that explain variance across these domains and on new (possibly unseen) target domains. 
We first consider a population setting, i.e., in each source domain, we are given the whole distribution; we  discuss the finite-sample case in Section~\ref{sec:finite-sample}.
\begin{setting}[Source domains]\label{setting:set-up} 
    Let $\E := \{1,\ldots,E\}$ denote a set of $E$ \emph{source domains}.
    For all $e\in\E$, let $\mathbf{x}_e\in \R^{1\times p}$ be a random row vector drawn
    from a distribution $P_e$ such that $\Ex_{P_e}[\mathbf{x}_e] = 0$ and
    the covariance $\Sigma_e :=  \Ex_{P_e}[\mathbf{x}_e^\top \mathbf{x}_e] \in\R^{p\times p}$ 
    exists and has strictly positive trace.
    For each $e \in \E$, let $w_e > 0$ denote the probability\footnote{    All methods and theoretical results developed in this paper apply both when
    domains are sampled randomly according to $(w_e)_{e\in\E}$ and when the domain sequence is deterministic.} that an observation comes from domain $e$, 
    where the domain assignment is independent of all other randomness and
    $\sum_{e\in\E} w_e = 1$.
    The domain label associated with each observation is assumed to be observed.
\end{setting}
A standard approach, which we refer to as \poolPCA, is to pool all domains. 
This is often done implicitly, for example, when the domain structure is ignored.
More concretely, we say $\Vpoolk \in \Ok$ solves \emph{rank-$k$ \poolPCA} if it maximizes the average variance, i.e., it solves PCA for $\Sigma_\mathrm{pool} := \sum_{e\in\E}w_e \Sigma_e$.
Throughout the paper, we omit the subscript $k$ whenever the rank is clear from context.
One can also solve PCA in each domain.
For each $e \in \E$, let 
$V^{*,e}_k \in \Ok$ 
denote a rank-$k$ PCA solution.
There is no canonical way to combine these into a single representation, but one can consider a conservative option and  select the solution from the domain that achieves the minimal explained variance.
We say $\Vsepk$ solves \emph{rank-$k$ \sepPCA} (separate PCA) if
\begin{equation*}
    \Vsepk := V^{*,e_0}_k
    \quad\textrm{with}\quad
    e_0 := \argmin_{e\in\E} 
        \Tr\big((V^{*,e}_k)^\top \Sigma_e V^{*,e}_k\big) \big\},
\end{equation*}
where
we choose $e_0$ to be the smallest index minimizing the objective.

Neither of these two baselines is necessarily optimal for maximizing explained variance across domains. 
When evaluated on
the observed source domains, the pooled and separate solutions can perform poorly in the worst case
and
Section~\ref{sec:minpca-sim} demonstrates that the same may hold when these solutions are applied in test domains. 
The following example illustrates this 
point
and motivates \wcPCA, which explicitly optimizes for worst-case performance across domains. 
\begin{example}\label{example1}
    Consider two domains $\E := \{1,2\}$,
    where observations are drawn from each domain with equal probability, that is, $w_1 = w_2$.
    The distribution in each
    domain is a zero-mean Gaussian with population covariances
    \begin{equation*}
        \Sigma_1 := \diag(0.9, 0.1, 0) \
        \mathrm{ and } \
        \Sigma_2 := \diag(0, 0.4, 0.6),
    \end{equation*}    
    respectively;
    the example is illustrated in Figure~\ref{fig:eg-minpca}.
    \begin{figure}[t]
        \centering        
        \includegraphics[trim={0 5.15cm 0 4.19cm},clip]{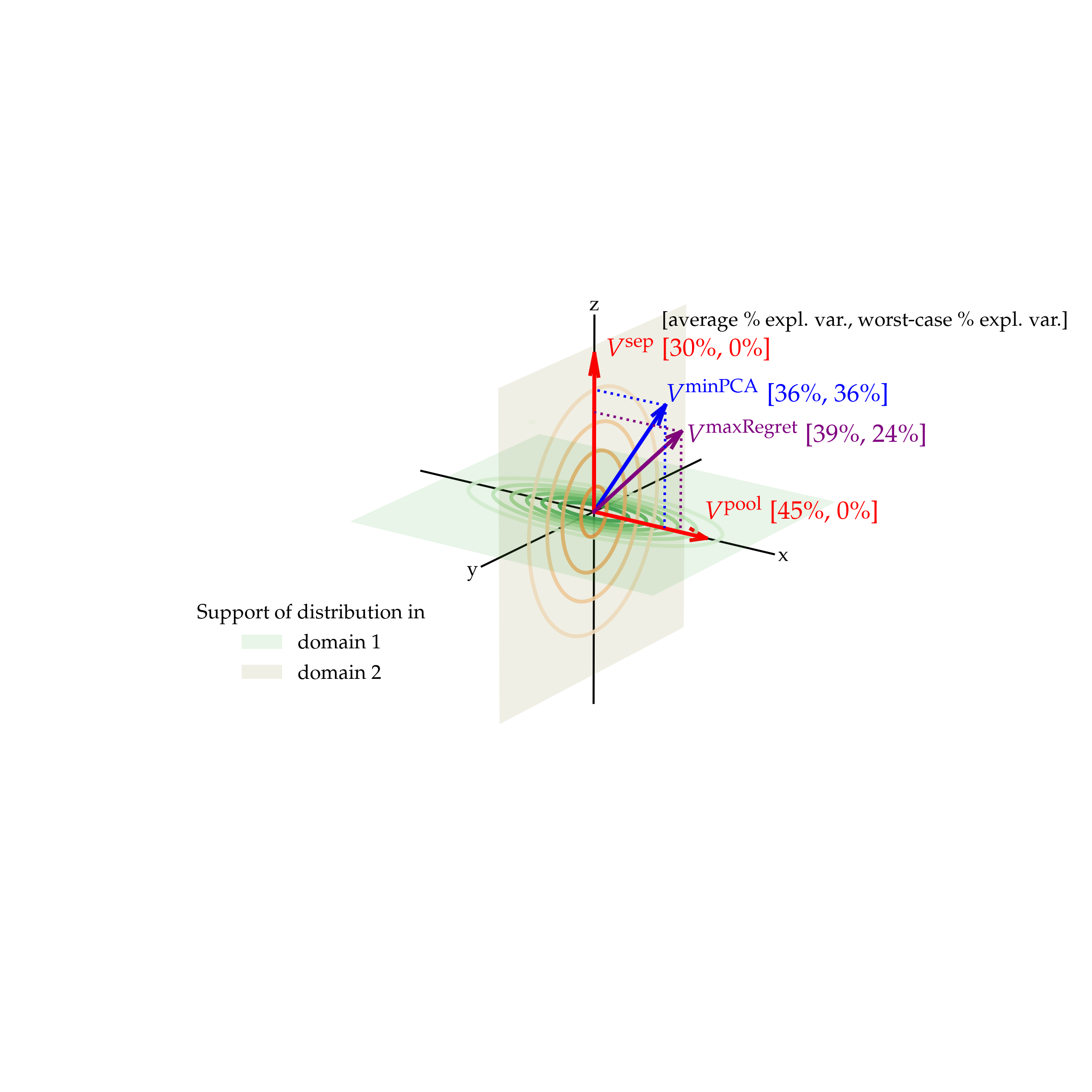}
        \caption{
        \textit{Visualization of possible solutions to the rank-1 approximation problem in a specific example.} 
        The figure 
        shows contour plots of two distributions $\mathcal{N}(0, \Sigma_1)$ (green) and $\mathcal{N}(0, \Sigma_2)$ (orange), together with their supports (shaded planes). The methods
            \poolPCA, \sepPCA, \minPCA, and \maxregret 
            describe different ways to aggregate over different source domains in the objective function (see Section~\ref{sec:variants} for details); their solutions are denoted by $\Vpool$, $\Vsep$, $\Vminpca$, and $V^\textrm{maxRegret}$, respectively (the values in parentheses indicate the pooled
            and worst-case explained variance). 
            For example,
            $\Vpool$ and $\Vsep$ explain $0\%$ of variance in the worst-case domain, as they are orthogonal to the support of that domain. In contrast, $\Vminpca$ maximizes the worst case explained variance (resulting in 36\%) and $V^\textrm{maxRegret}$ explains at least 24\% of variance in each domain. Details are provided in Example~\ref{example1}.
        }
        \label{fig:eg-minpca}
    \end{figure}
   
    Rank-1 \poolPCA considers $\Sigma_\mathrm{pool} = \frac{1}{2}(\Sigma_1 + \Sigma_2)$ and yields $\Vpool = (1,0,0)^\top$, while solving PCA separately in each 
    (source)
    domain gives $(1,0,0)^\top$ for domain 1 and $(0,0,1)^\top$ for domain 2; selecting the solution from the domain with minimal explained variance yields $\Vsep = (0,0,1)^\top$. 
    $\Vpool$ and $\Vsep$, respectively, explain 45\% and 30\% 
    of the average covariance $\Sigma_\mathrm{pool}$. However,  both perform poorly (0\% explained variance)
    in the respective worst-case 
    source domain. 
    This occurs because $\Vpool$ and $\Vsep$ are both orthogonal to the support of one of the domains, thus capturing no variance in that domain.

    In contrast, the \wcPCA variants explicitly optimize worst-case performance across domains. In particular, \minPCA (Definition~\ref{def:minPCA}, below)
    is solved by
    $\Vminpca = \frac{1}{\sqrt{5}} (\sqrt{2}, 0, \sqrt{3})^\top$,
    which explains 36\% of variance in both domains and \maxregret (Definition~\ref{def:maxRegret}, below)
    is solved by $\Vminpca = \frac{1}{\sqrt{5}} (\sqrt{3}, 0, \sqrt{2})^\top$,
    which explains at least 24\% of variance. 
    Thus, while the explained variance with respect to the pooled covariance $\Sigma_{\mathrm{pool}}$ is comparable to that of $\Vsep$ and $\Vpool$, they achieve strictly positive percentage of explained variance in the worst-case source domain.

    We will see in Section~\ref{sec:robustness} that this improvement is not restricted to the source domains: we prove that several variants of \wcPCA (including \minPCA and \maxregret) are worst-case optimal over all domains whose covariances lie in the convex hull of the source covariances.
\end{example}

\subsection{Variants of \wcPCA}\label{sec:variants}
We now formally introduce some variants of \wcPCA 
More concretely, the 
objective 
may focus on 
maximizing variance or minimizing reconstruction error 
and the source covariances may or may not be normalized prior to optimization. 
This leads to four variants: \minPCA, \normminPCA, \maxRCS, and \normmaxRCS. 
In addition, we introduce 
the
regret-based formulations
\maxregret and \normmaxregret, which evaluate performance relative to each domain’s own optimal subspace. 
The objectives \minPCA and \maxregret have been proposed previously; only \minPCA has been considered as a 
``distributionally robust'' PCA by \citet{wang2025stablepca} 
-- but no 
explicit
theoretical guarantees have been provided.
Table~\ref{tab:worstcase-pca} provides an overview of these variants.

For classical PCA, the solution spaces of many of these variants coincide\footnote{E.g., for a covariance matrix $\Sigma$, the sets of solutions to rank-$k$ PCA on $\Sigma$ and $\Sigma / \Tr(\Sigma)$ are the same,
as 
$\argmax_{V\in\Ok} \left\{  \Tr(V^\top \Sigma V) \right\} = \argmax_{V\in\Ok} \left\{  {\Tr(V^\top \Sigma V) }/{\Tr(\Sigma)} \right\}$.}  but we see in Section~\ref{sec:comparing-solutions} that this is generally not the case in the multi-domain setting (Section~\ref{sec:discussion-of-objectives} discusses intuition about when to use which objective).
Section~\ref{sec:robustness} proves that all of these variants satisfy worst-case optimality guarantees.
\begin{table}[t]
    \centering
    \renewcommand{\arraystretch}{1.1}
    \setlength{\tabcolsep}{4pt}
    \begin{tabular}{ | l r | c c c c c c| }
    \hline
    & & \multicolumn{6}{c|}{\textbf{Objective function} \ \ }  \\
    \multirow{5}{*}{\rotatebox{90}{\textbf{Normalized \ }}}
    & & {Explained variance} & & {Reconstruction error} & & {Regret} & {} \\
    & & {(Def.~\ref{def:minPCA})} & & {(Def.~\ref{def:maxRCS})} & & {(Def.~\ref{def:maxRegret})} & {} \\
    \cline{2-8} 
            & No 
            & \minPCA  
            & $\ne$     
            & \maxRCS
            & $\ne$ 
            & \maxregret
            & $\ne$ \\
            &       
            & $\ne$         
            &           
            & $\ne$ 
            &           
            & $\ne$
            &  \\
            & Yes
            & \normminPCA
            & $=$       
            & \normmaxRCS
            & $\ne$ 
            & \normmaxregret
            & $\ne$ \\
    \hline 
\end{tabular} 
\caption{
    The
    six worst-case PCA formulations 
    considered in this work: three objectives (explained variance, reconstruction error, and regret) combined with two normalization choices (normalized vs.\ unnormalized). 
    The symbols 
    $=$ and $\ne$
    indicate whether the solutions coincide (see Theorem~\ref{thm:comparing-solutions}).
        The equality holds if the methods are applied to the full (and usually unknown) distribution or to its empirical counterpart.
}
\label{tab:worstcase-pca}
\end{table}

\subsubsection{Explained variance} \label{sec:explvar}
Let the explained variance and normalized 
(or proportion of)
explained variance of a subspace $V$ for a distribution $P$ with covariance $\Sigma$ be
\begin{equation}\label{eqn:losses-variance}
    \mathcal{L}_\mathrm{var}(V;P) := \Tr(V^\top \Sigma V), \quad \mathcal{L}_\mathrm{normVar}(V;P) := \frac{\Tr(V^\top \Sigma V)}{\Tr(\Sigma)}.
\end{equation}
\begin{definition}[minPCA]\label{def:minPCA}
    Consider Setting~\ref{setting:set-up}.
    An orthonormal matrix $V^* \in \Ok$ solves \emph{rank-$k$ \minPCA} (resp.\ \emph{\normminPCA}) 
    if
    \begin{equation*}
        V^* \in \argmax_{V \in \Ok}
            \left\{
                \min_{e \in \E} \mathcal{L}(V;P_e)
            \right\},
    \end{equation*}
    where $\mathcal{L} = \mathcal{L}_\mathrm{var}$ (resp.\ $\mathcal{L} = \mathcal{L}_\mathrm{normVar}$).
\end{definition}
With respect to the standard metric induced by the Frobenius norm, 
$\Ok$ is compact as a closed and bounded subset of $\R^{p\times k}$ (by Heine-Borel), 
and the objective function is continuous. 
Hence, the maximum is attained.

\subsubsection{Reconstruction error} \label{sec:maxrcs}
PCA does not only maximize explained variance, it also minimizes the expected reconstruction error among all rank-$k$ approximations. Specifically, for a random vector $\mathbf{x}\in\R^{1\times p}$ drawn from distribution $P$ with covariance matrix $\Sigma:=\Ex_P[\mathbf{x}^\top \mathbf{x}]$,
it holds that
    $\argmax_{V \in \Ok} \Tr(V^\top \Sigma V)
    =
    \argmin_{V \in \Ok} \Ex_{\mathbf{x}\sim P}\left[\|\mathbf{x}-\mathbf{x}VV^\top\|_2^2\right]$.
We 
now define a worst-case objective corresponding to the right-hand side of this equation. To do so, we use the following notation for
the expected unnormalized and normalized reconstruction losses: 
\begin{equation}\label{eqn:losses-rcs}
    \mathcal{L}_\mathrm{RCS}(V;P) :=  
        \Ex_{\mathbf{x}\sim P} \left[\| \mathbf{x} - \mathbf{x}VV^\top \|^2_2 \right], \quad 
     \mathcal{L}_\mathrm{normRCS}(V;P) :=  
        \frac{\Ex_{\mathbf{x}\sim P} \left[\| \mathbf{x} - \mathbf{x}VV^\top \|^2_2 \right]}{\Ex_{\mathbf{x}\sim P} \left[\| \mathbf{x}\|^2_2\right]}.
\end{equation}
\begin{definition}[maxRCS]\label{def:maxRCS}
    Consider Setting~\ref{setting:set-up}.
    An orthonormal matrix $V^* \in \Ok $
    solves rank-$k$ \emph{\maxRCS} (resp.\ \emph{\normmaxRCS}) if
    \begin{equation*}
        V^* \in \argmin_{V \in \Ok}
        \left\{
            \max_{e \in \E} \mathcal{L}(V;P_e)
        \right\},
    \end{equation*}
    where $\mathcal{L}=\mathcal{L}_\mathrm{RCS}$ (resp.\ $\mathcal{L}=\mathcal{L}_\mathrm{normRCS}$).
\end{definition}

\subsubsection{Regret}\label{sec:regret}
The 
two
formulations introduced 
above
optimize `absolute performance' in the worst-case domain. 
Alternatively, we can assess performance with respect to each domain's own optimal subspace. 
This leads to a regret-based formulation, which quantifies the increase in reconstruction error incurred by enforcing a common subspace across domains. Let the expected unnormalized and normalized regret be
\begin{align*}
    \mathcal{L}_\mathrm{reg}(V;P) &:=  
        \mathcal{L}_\mathrm{RCS}(V;P_e)   - \min_{W\in\Ok}\mathcal{L}_\mathrm{RCS}(W;P_e),
    \\
    \mathcal{L}_\mathrm{normReg}(V;P) &:=  
        \mathcal{L}_\mathrm{normRCS}(V;P_e)   - \min_{W\in\Ok}\mathcal{L}_\mathrm{normRCS}(W;P_e).
\end{align*}
We can now define a regret-based\footnote{Theorem~\ref{thm:comparing-solutions} below proves that Definition~\ref{def:maxRegret} can equivalently be formulated using explained variance rather than reconstruction error.} variant of \wcPCA.
\begin{definition}[maxRegret]\label{def:maxRegret}
    Consider Setting \ref{setting:set-up}.
    An orthonormal matrix $V^* \in \Ok$ solves rank-$k$ \emph{\maxregret} (resp.\ \emph{\normmaxregret}) if
            \begin{equation*}
                 V^* \in \argmin_{V \in \Ok} 
                    \left\{ \max_{e \in \E} \mathcal{L}(V;P_e)\right\},
            \end{equation*}
            where $\mathcal{L}=\mathcal{L}_\mathrm{reg}$ (resp.\ $\mathcal{L}=\mathcal{L}_\mathrm{normReg}$). 
\end{definition}

\subsubsection{Iterative optimization}\label{sec:iterative}
The PCA solution in~\eqref{eqn:pca} can be computed iteratively by sequentially selecting orthonormal directions that successively maximize explained variance in the remaining space.
For PCA, this sequential procedure is equivalent to solving the joint optimization problem~\eqref{eqn:pca} and therefore yields an optimal solution 
\citep[Property A1, p.11]{Jolliffe2002}. 
Sequential analogues can also be defined for \minPCA and \normminPCA by choosing orthonormal components iteratively to optimize worst-case explained variance. 
In contrast to classical PCA, however, this greedy construction generally fails to recover the optimal joint solution.
\begin{proposition}[Sequential vs.\ joint optimization, informal]\label{prop:seq-ne-joint-informal}
In general, the subspace obtained by sequentially selecting directions to optimize the worst-case explained variance (resp.\ worst-case proportion of explained variance) does not coincide with the solution of the joint \minPCA (resp.\ \normminPCA) problem. 
\end{proposition}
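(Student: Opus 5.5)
We prove the statement by an explicit counterexample that works for both \minPCA and \normminPCA simultaneously. Take $p=3$, $k=2$ and three source domains with covariances $\Sigma_i := e_ie_i^\top$, $i\in\{1,2,3\}$ (any positive weights $w_e$). Since $\Tr(\Sigma_i)=1$ for all $i$, we have $\mathcal{L}_\mathrm{var}=\mathcal{L}_\mathrm{normVar}$ for every domain. Hence the joint and the sequential problems are literally identical for the two variants, and one computation covers both. For a unit vector $v$ we have $v^\top\Sigma_i v=v_i^2$. For $V\in\mathcal{O}_{3\times 2}$ with column span projector $P$, we have $\Tr(V^\top\Sigma_iV)=\|Pe_i\|^2$.

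\emph{Step 1 (joint optimum).} Summing over domains gives $\sum_i\|Pe_i\|^2=\Tr(P)=2$, so $\min_i \|Pe_i\|^2\le 2/3$. The plane orthogonal to $u:=(1,1,1)^\top/\sqrt3$ attains this bound, since $\|Pe_i\|^2=1-(e_i^\top u)^2=2/3$ for every $i$. So the joint optimal value equals $2/3$.

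\emph{Step 2 (greedy first direction).} The first step maximizes $\min_i v_i^2$ over unit vectors $v$. Since $\sum_i v_i^2=1$, this minimum is at most $1/3$, with equality if and only if $v=(\pm1,\pm1,\pm1)^\top/\sqrt3$. By a sign-flip symmetry, which maps the set of domains to itself, we may take $v=u$; any other maximizer gives the same values.

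\emph{Step 3 (greedy second direction).} The second step maximizes $\min_i(1/3+w_i^2)$ over unit vectors $w\perp u$, i.e.\ unit vectors with $w_1+w_2+w_3=0$. Suppose $|w_i|\ge s$ for all $i$. Among three numbers summing to zero, two have the same sign, say $a$ and $b$, and the third equals $-(a+b)$, which has modulus at least $2s$. Then $1=\|w\|^2\ge s^2+s^2+4s^2=6s^2$, so $s^2\le 1/6$. Equality holds, for instance, at $w=(2,-1,-1)^\top/\sqrt6$. Therefore every sequential solution has worst-case (normalized) explained variance $1/3+1/6=1/2<2/3$. This shows it is not a solution of joint \minPCA or \normminPCA.

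The only delicate point is ensuring that no choice in the greedy procedure (ties in Step 2, or a non-unique maximizer in Step 3) can recover the joint value. This is handled by two facts. First, the Step 2 maximizers are exactly the sign vectors, and the sign-flip symmetry reduces them to $u$. Second, the bound $s^2\le1/6$ in Step 3 holds for every $w\perp u$. If the formal version of the proposition requires strictly positive definite covariances, one can replace $\Sigma_i$ by $\Sigma_i+\varepsilon I$. This adds the same constant $\varepsilon$ to every $\mathcal{L}_\mathrm{var}$ value of a unit direction, so the argmax sets in Steps 2 and 3 are unchanged. For \normminPCA, all traces remain equal to $1+3\varepsilon$, so the normalization is again a common rescaling. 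By continuity, the strict gap between $2/3+2\varepsilon$ and $1/2+2\varepsilon$ (up to that common rescaling) persists.
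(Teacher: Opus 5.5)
Your proof is correct and takes essentially the same route as the paper's. Both use an explicit counterexample with three symmetric, equal-trace diagonal covariances, which makes \minPCA and \normminPCA coincide and forces the greedy first direction to be $(1,1,1)^\top/\sqrt3$, handicapping the second step. Your instance is leaner ($p=3$, $\Sigma_i=e_ie_i^\top$ rather than the paper's $p=5$ example), computes both values exactly (joint optimum $2/3$ via $\sum_i\|Pe_i\|^2=\Tr(P)$, greedy value $1/2$), and handles ties in the greedy steps explicitly; the paper instead bounds the greedy value by $7/12$ and exhibits a feasible competitor attaining $5/8$.
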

A formal statement of the proposition and a counterexample are given in Appendix~\ref{app:seq-joint}.

When an ordered basis is required, we first compute the optimal subspace and then, 
starting from the full subspace, we iteratively remove the direction whose removal maximizes the worst-case explained variance of the remaining subspace.
This construction yields an ordered basis consistent with the joint solution (Appendix~\ref{app:ordering-basis} provides further details).

\subsubsection{Relations between the \wcPCA variants}\label{sec:comparing-solutions}
For classical PCA the solution set does not depend on whether one optimizes 
for
explained variance, reconstruction error, or their normalized versions. 
For the worst-case setting, 
Theorem~\ref{thm:comparing-solutions} below proves that
normalizing the objective can alter the solution set 
(statement~(i)), 
equivalence between worst-case explained-variance and reconstruction-error formulations holds only for the normalized objectives (statement~(ii)), and
for regret-based objectives the solution set does not depend on whether regret is formulated using explained variance or reconstruction error (statement~(iii)).
\begin{theorem}[Relations between \wcPCA variants]\label{thm:comparing-solutions}
    Consider Setting~\ref{setting:set-up}.
    Then, the symbols $=$ and $\ne$ 
    in Table~\ref{tab:worstcase-pca} 
    represent
    whether the solutions coincide. More precisely, the following statements hold.
    \begin{enumerate}[label=\roman*)]
        \item \textbf{Effect of normalization.}
        Let $\Vminpca$ and $\Vnormminpca$ solve rank-$k$ \minPCA and \normminPCA, respectively. 
        In general, $\Vminpca$ does not solve \normminPCA and $\Vnormminpca$ does not solve \minPCA.
        Analogous statements hold for \maxRCS versus \normmaxRCS and for \maxregret versus \normmaxregret.
        \label{thm:comparing-solutions-normalization}

        \item \textbf{Explained variance vs.\ reconstruction error.}
        Let $\Vminpca$ and $\Vmaxrcs$ solve rank-$k$ \minPCA and \maxRCS, respectively. 
        In general, $\Vminpca$ does not solve \maxRCS and $\Vmaxrcs$ does not solve \minPCA.      
        In contrast, the set of solutions to rank-$k$ \normminPCA coincides with the set of solutions to rank-$k$ \normmaxRCS.
        \label{thm:comparing-solutions-var-vs-rcs}

        \item \textbf{Regret.}
        The set of solutions to the regret problem formulated using reconstruction error (as in Definition~\ref{def:maxRegret}) coincides 
        with that of the variance-based formulation, that is, with the solutions obtained by replacing $\mathcal{L}_\mathrm{RCS}$ with $-\mathcal{L}_\mathrm{var}$ (or $\mathcal{L}_\mathrm{normRCS}$ with $-\mathcal{L}_\mathrm{normVar}$).
        \label{thm:comparing-solutions-regret}
    \end{enumerate}
\end{theorem}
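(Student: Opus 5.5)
The whole argument rests on one identity. For $V\in\Ok$, the projection $VV^\top$ is symmetric and idempotent, so
\[
\Ex\|\mathbf{x}-\mathbf{x}VV^\top\|_2^2 = \Tr(\Sigma) - \Tr(V^\top\Sigma V),
\]
which gives $\mathcal{L}_\mathrm{RCS}(V;P_e)=\Tr(\Sigma_e)-\mathcal{L}_\mathrm{var}(V;P_e)$. Dividing by $\Tr(\Sigma_e)>0$ gives $\mathcal{L}_\mathrm{normRCS}=1-\mathcal{L}_\mathrm{normVar}$.

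\textbf{Positive claims in (ii) and (iii).}
\begin{itemize}
\item For (ii), the identity gives $\max_e \mathcal{L}_\mathrm{normRCS}(V;P_e)=1-\min_e\mathcal{L}_\mathrm{normVar}(V;P_e)$. Hence the argmin sets of \normmaxRCS and the argmax sets of \normminPCA coincide.
\item For (iii), insert the identity into the regret. The domain constant $\Tr(\Sigma_e)$ cancels:
\[
\mathcal{L}_\mathrm{reg}(V;P_e)=\max_{W\in\Ok}\Tr(W^\top\Sigma_eW)-\Tr(V^\top\Sigma_eV).
\]
This is exactly the regret written with $-\mathcal{L}_\mathrm{var}$. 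The normalized case is identical after dividing by $\Tr(\Sigma_e)$. So the two formulations are the same function of $V$, and their solution sets agree.
\end{itemize}

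\textbf{Negative claims in (i) and (ii).} These are all ``in general'' statements, so a single explicit counterexample per pair suffices. I would use rank $k=1$, two domains, and diagonal covariances in $\R^2$ or $\R^3$. Writing $v=(\cos\theta,\sin\theta)$ makes every objective piecewise trigonometric in $\theta$, and the optimizers can be found in closed form.

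\begin{itemize}
\item \emph{\minPCA vs.\ \maxRCS.} Since $\max_e(\Tr\Sigma_e-\Tr(v^\top\Sigma_ev))$ is not $\Tr$-constant minus $\min_e\Tr(v^\top\Sigma_e v)$, take domains with very different traces. For example, $\Sigma_1=\diag(1,0)$ and $\Sigma_2=\diag(0,a)$. Then \minPCA balances $\cos^2\theta=a\sin^2\theta$. \maxRCS balances $\sin^2\theta= a\cos^2\theta$, i.e.\ $1-\cos^2\theta = a-a\sin^2\theta$. For $a\neq1$ these give different $\theta$, and the non-uniqueness up to sign does not merge them.
\item \emph{Normalization, minPCA.} Take $\Sigma_2=c\,\Sigma_2'$ with $c\neq 1$ in a configuration where \minPCA is determined by an equalization that involves the scale. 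Normalization removes the scale, so the optimizer moves. The same example should serve for \maxRCS vs.\ \normmaxRCS.
\item \emph{Normalization, maxRegret.} I would take $\Sigma_1=\diag(1,0)$ and $\Sigma_2=\diag(0,a)$ again. Regret equalization gives $1-\cos^2\theta=a(1-\sin^2\theta)$. The normalized version gives $\sin^2\theta=\cos^2\theta$. For $a\ne1$ these differ.
\item The remaining $\ne$ entries of Table~\ref{tab:worstcase-pca}, such as \normmaxRCS vs.\ \normmaxregret, can also be handled by an example if needed. Domains with different $\max_W$ values break the equivalence.
\end{itemize}

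\textbf{Main obstacle.} The delicate step is rigorously identifying the full solution set in each counterexample, so that one can show the other method's solution is not among them. The approach is as follows. Each minimum (or maximum) of two monotone functions of $\theta\in[0,\pi/2]$ (by symmetry) is optimized at their crossing point, provided one function increases and the other decreases. This crossing argument is what determines the solution set; it should be written out once and reused for every example.
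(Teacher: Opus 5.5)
Your proposal is correct. For the positive claims it is exactly the paper's argument: the identity $\mathcal{L}_\mathrm{RCS}(V;P_e)=\Tr(\Sigma_e)-\mathcal{L}_\mathrm{var}(V;P_e)$ and its normalized form make the \normminPCA and \normmaxRCS solution sets equal, and make $\Tr(\Sigma_e)$ cancel in the regret.

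The difference is in the counterexamples. The paper uses $\Sigma_1=\diag(0.1,0.9)$ and $\Sigma_2=\diag(9,1)$. Here $\Sigma_2\succeq\Sigma_1$, so \minPCA is just PCA on $\Sigma_1$. It works out only \minPCA versus \normminPCA, and for \minPCA versus \maxRCS it refers to Example~\ref{example1}. Your unequal-trace family $\Sigma_1=\diag(1,0)$, $\Sigma_2=\diag(0,a)$ matters at that point. In Example~\ref{example1} both traces equal $1$, so $\max_e\mathcal{L}_\mathrm{RCS}(V;P_e)=1-\min_e\mathcal{L}_\mathrm{var}(V;P_e)$. Hence \minPCA and \maxRCS have identical solution sets there, whereas your example genuinely separates them.

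Use the same family for the normalization pairs instead of the vague ``$\Sigma_2=c\,\Sigma_2'$'' sketch. Set $t=\cos^2\theta$. Every objective is then a minimum or maximum of two affine functions of $t\in[0,1]$ with slopes of opposite sign, so the optimizer is the unique crossing:
\begin{itemize}
\item $t^*=a/(1+a)$ for \minPCA;
\item $t^*=1/(1+a)$ for \maxRCS and \maxregret;
\item $t^*=1/2$ for all three normalized problems.
\end{itemize}
For $a\neq 1$ these values are pairwise distinct. So the solution sets $\{v:v_1^2=t^*\}$ are disjoint, and both directions of every non-inclusion in (i) and (ii) follow from one computation.

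Your $\Sigma_e$ have rank $1=k$, so regret equals reconstruction error in this family. It therefore cannot separate \maxRCS from \maxregret, a table entry not listed in (i)--(iii). Example~\ref{example1} does that job: its unit traces make \maxRCS and \normmaxRCS coincide with \minPCA there, and that solution differs from the one of \maxregret and \normmaxregret.
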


\subsubsection{Practical considerations for choosing the objective}\label{sec:discussion-of-objectives}
As summarized in Table~\ref{tab:worstcase-pca}, the various objectives lead to different sets of solutions. We 
now
discuss some of the differences
between the objectives and how factors such as scale heterogeneity and noise influence their behavior.
We also discuss how worst-case objectives can be used diagnostically to assess domain heterogeneity.

\paragraph{Sensitivity to varying variances.}
The unnormalized objectives optimize worst-case explained variance or reconstruction error in the original units. 
As a consequence, domains with particularly small (resp.\ large) total variance $\Tr(\Sigma_e)$ can dominate the objective of \minPCA (resp.\ \maxRCS). 
In particular, \minPCA maximizes worst-case explained variance in
absolute terms, so its solution may be 
entirely
determined by a domain whose variance in every direction is smaller than that of 
any other
domain.
In contrast, the normalized variants \normminPCA and \normmaxRCS optimize proportional explained variance or reconstruction error, 
which 
may result in a different
worst-case domain and lead to a different solution. 
These normalized objectives are therefore less sensitive to differences in total variance and align with the interpretation of how much of each domain is explained, 
but unit comparability across domains is lost. 
Similarly, 
\maxregret 
is 
not as
sensitive to particularly large or small $\Tr(\Sigma_e)$ 
either
as it compares performance relative to domain-optimal subspaces.

\paragraph{Choosing between explained variance and reconstruction error.}
For the unnormalized objectives, \minPCA and \maxRCS 
can lead to different solutions.
In contrast, the normalized objectives (\normminPCA and \normmaxRCS), as well as the regret, produce identical solution sets under either formulation, eliminating the need for this decision and aligning more closely with standard PCA.

\paragraph{Effect of heterogeneous noise.}
When noise levels differ across domains, the regret provides a robust way to learn a common subspace even if the ultimate goal is to optimize explained variance or reconstruction error. Indeed, 
let us
consider observations 
with different noise variances,
that is, for all $e\in\E$ we observe
\begin{equation*}
    \mathbf{z}_e := \mathbf{x}_e + \sigma_e \boldsymbol{\varepsilon}_e,
    \qquad \boldsymbol{\varepsilon}_e \sim N(0,I_p),
\end{equation*}
where $\mathbf{x}_e$ is as in Setting~\ref{setting:set-up} 
and for all $e, e'\in\E$, 
$\sigma_e>0$ and
$\boldsymbol{\varepsilon}_e$ is independent 
of
$\boldsymbol{\varepsilon}_{e'}$ and $\mathbf{x}_e$. 
Then, $\mathbf{z}_e$ has covariance $\Sigma_e+\sigma_e^2 I_p$. 
By slight abuse of notation, we say the problem is  \emph{noiseless} if $\sigma_e = 0$, $\forall e \in \mathcal{E}$; otherwise, we say that there is \emph{heterogeneous noise}.
Let us first consider the proposed methods applied to the population covariance matrices.
For any $V\in\Ok$ and $e\in\E$, the explained variance is 
$\Tr(V^\top\Sigma_e V) + k\sigma_e^2$. 
Thus, domains with particularly high noise do not
attain the minimum in \minPCA
but can influence the \maxRCS solution.
In contrast, the regret compares a common subspace $V$ to the
rank-$k$ domain-optimal subspace $V_k^{e,*}$, so the noise terms cancel and the objective reduces to $\Tr((V_k^{e,*})^\top\Sigma_e V_k^{e,*}) - \Tr(V^\top\Sigma_e V)$. Consequently, heterogeneous noise affects the solutions of \minPCA and \maxRCS but not of \maxregret.
Moreover, if each $\Sigma_e$ is exactly rank-$k$, then
$\Tr((V_k^{e,*})^\top\Sigma_e V_k^{e,*})=\Tr(\Sigma_e)$ and minimizing the maximum regret is
equivalent to minimizing the maximum reconstruction error in the noiseless
setting 
(see proof of Theorem~\ref{thm:comparing-solutions}).
For 
finitely many
data, these statements 
may not
hold 
precisely
but 
in our simulations,
the same qualitative
patterns persist 
(see Section~\ref{sec:sims-noise}).

\paragraph{Worst-case losses as diagnostics tools.}
While Section~\ref{sec:robustness} proves that 
the different \wcPCA objectives come with 
out-of-distribution guarantees, the multi-domain losses can also be used diagnostically. If domain-wise losses under \poolPCA vary substantially, this flags heterogeneity across domains that may warrant further investigation. If a worst-case solution differs substantially from \poolPCA, it provides a principled alternative representation that explicitly trades-off a small reduction in average performance for improved worst-case behavior.

\section{Robustness guarantees}
\subsection{Population guarantees}\label{sec:robustness}
Optimizing any of the worst-case objectives 
described in Section~\ref{sec:variants}
the following
strong robustness guarantee: the solution
remains worst-case
optimal not only for the source domains but also for all distributions whose covariances lie in the convex hull of the 
(possibly normalized)  source covariances. In contrast, 
standard approaches such as pooled or separate PCA do not offer such guarantees. 
The following two theorems summarize the results for all worst-case objectives from Section~\ref{sec:variants}.

\begin{theorem}[Robustness of \wcPCA]\label{thm:maxrcs-convex-hull}  
    Consider Setting~\ref{setting:set-up}. Let
    \begin{equation}\label{eqn:mathcalP}
        \mathcal{C} := \mathrm{conv}(\{\Sigma_e\}_{e \in \E})
        \quad \text{and} \quad
        \mathcal{P} := \left\{ P \in \mathcal{P}(\R^p) \mid \Ex[\mathbf{x}] = 0,\ \Ex[\mathbf{x}^\top \mathbf{x}] \in \mathcal{C} \right\}
    \end{equation}
    be the convex hull of the source covariances and the corresponding uncertainty set of distributions with zero mean and covariance in $\mathcal{C}$, 
    respectively. Let $V^*_k \in \Ok$ solve rank-$k$ \maxRCS,
    \minPCA, or \maxregret.
    Let $\mathcal{L}$ denote the corresponding loss function, i.e., $\mathcal{L}=\mathcal{L}_\mathrm{RCS}$ for \maxRCS, $\mathcal{L}=-\mathcal{L}_\mathrm{var}$ for \minPCA, and $\mathcal{L}=\mathcal{L}_\mathrm{reg}$ for \maxregret.
    Then, the following statements hold.
    \begin{enumerate}[label=\roman*)]
        \item 
        For all $V_k\in\Ok,$ the worst-case loss over $\mathcal{P}$ equals the worst-case loss over the source domains, i.e., 
            \begin{equation*}
                \sup_{P \in \mathcal{P}}
                    \mathcal{L}(V_k;P)
                = \max_{e \in \E} \mathcal{L}(V_k;P_e).
            \end{equation*}
        \label{bullet:wc-optimality-equality-sup-max}
        \item Consequently, the ordering induced by the worst-case loss over the source domains is preserved over $\mathcal{P}$: for all $V_k, W_k \in \Ok$, if
            $
            \max_{e \in \E} \mathcal{L}(V_k;P_e)
            <
            \max_{e \in \E} \mathcal{L}(W_k;P_e),
            $
            then
            $
            \sup_{P \in \mathcal{P}} \mathcal{L}(V_k;P)
            <
            \sup_{P \in \mathcal{P}} \mathcal{L}(W_k;P).
            $
        \label{bullet:wc-optimality-ordering}
        \item Hence, $V^*_k$ is worst-case optimal over $\mathcal{P}$, i.e.,
            $V^*_k \in \displaystyle 
                \argmin_{V \in \Ok} 
                    \sup_{P \in \mathcal{P}} 
                        \mathcal{L}(V;P).$
        \label{bullet:wc-optimality-optimality}
        \item Statement~\ref{bullet:wc-optimality-optimality} does not generally hold for the solutions to \poolPCA and \sepPCA.
        \label{bullet:wc-optimality-poolpca}
    \end{enumerate}
\end{theorem}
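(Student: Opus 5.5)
The plan is to reduce everything to the observation that, for fixed $V_k$, each of the three losses depends on $P$ only through its covariance $\Sigma_P := \Ex_P[\mathbf{x}^\top\mathbf{x}]$, and is a convex function of $\Sigma_P$; in fact it is affine or the pointwise maximum of affine functions. The supremum of a convex function over the polytope $\mathcal{C}$ is then attained at one of its extreme points, which lie among the $\Sigma_e$.

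\textbf{Writing the losses as functions of the covariance.} Using $\|\mathbf{x}-\mathbf{x}VV^\top\|_2^2 = \mathbf{x}(I-VV^\top)\mathbf{x}^\top$ and linearity of trace and expectation, we get
\begin{equation*}
\mathcal{L}_\mathrm{RCS}(V;P)=\Tr(\Sigma_P)-\Tr(V^\top\Sigma_P V), \qquad -\mathcal{L}_\mathrm{var}(V;P)=-\Tr(V^\top\Sigma_P V).
\end{equation*}
Both are linear in $\Sigma_P$. For the regret,
\begin{equation*}
\mathcal{L}_\mathrm{reg}(V;P)=\max_{W\in\Ok}\Tr(W^\top\Sigma_P W)-\Tr(V^\top\Sigma_P V).
\end{equation*}
Here the first term is a supremum of linear functions of $\Sigma_P$, hence convex, and the whole expression is convex in $\Sigma_P$.

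\textbf{Statement~i).} Take $P\in\mathcal{P}$ with $\Sigma_P=\sum_e\alpha_e\Sigma_e$. For the two linear losses, $\mathcal{L}(V;P)=\sum_e\alpha_e\mathcal{L}(V;P_e)\le\max_e\mathcal{L}(V;P_e)$. For the regret, convexity gives
\begin{equation*}
\mathcal{L}_\mathrm{reg}(V;P)\le\sum_e\alpha_e\,\mathcal{L}_\mathrm{reg}(V;P_e)\le\max_e\mathcal{L}_\mathrm{reg}(V;P_e).
\end{equation*}
For the reverse inequality, note that each $P_e$ itself lies in $\mathcal{P}$, so the supremum equals the maximum and is attained. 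The step needing most care is the regret, where I must make sure that the convexity bound is stated correctly, i.e.\ that $\max_W\Tr(W^\top\Sigma W)$ is taken to be convex and not concave. The Ky Fan characterization (sum of the top $k$ eigenvalues) confirms this.

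\textbf{Statements~ii) and~iii).} Statement~ii) is immediate from~i), since both sides of the implication are rewritten via~i). Statement~iii) follows as well: $V^*_k$ minimizes $\max_e\mathcal{L}(\cdot;P_e)$, and by~i) this objective equals $\sup_{P\in\mathcal{P}}\mathcal{L}(\cdot;P)$ pointwise in $V$.

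\textbf{Statement~iv).} I would reuse Example~\ref{example1} with $k=1$. There $\Vpool=(1,0,0)^\top$ and $\Vsep=(0,0,1)^\top$ each explain zero variance in one source domain, so by~i) their worst case over $\mathcal{P}$ is $0$. In contrast, $\Vminpca$ attains $0.36$ under \minPCA, so neither $\Vpool$ nor $\Vsep$ is optimal. For \maxRCS and \maxregret I would check in the same example that the worst-case loss of $\Vpool$ and $\Vsep$ is strictly larger than that of some other direction, such as $\Vminpca$; this is a short computation. If it fails for \maxRCS, I would instead rescale the covariances to build an analogous example.
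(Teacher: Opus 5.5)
Your proposal is correct and follows the paper's proof. The paper handles the regret through the bound $\min_W\sum_e\mu_e(\cdot)\ge\sum_e\mu_e\min_W(\cdot)$, which is exactly your convexity/Jensen step, and it proves ii)--iv) the same way via Example~\ref{example1}. Your fallback for iv) is unnecessary: both source traces there equal $1$, so the worst-case losses of $\Vminpca$ are $0.64$ (reconstruction) and $0.54$ (regret), against $1$ and $0.6$/$0.9$ for $\Vpool$/$\Vsep$.
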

With a slight modification of the proof, one can show that
Theorem~\ref{thm:maxrcs-convex-hull} extends to the normalized objectives \normmaxRCS, \normminPCA, and \normmaxregret after modifying the uncertainty set.
\begin{theorem}[Robustness of \wcPCA, normalized]\label{thm:normmaxrcs}
Let $\mathcal{C}_{\mathrm{norm}} := \mathrm{conv}\left( \left\{ \Sigma_e / \Tr(\Sigma_e) \right\}_{e \in \E} \right)$ and
    \begin{equation}\label{eqn:mathcalP-norm}
        \mathcal{P}_{\mathrm{norm}} := 
            \left\{ P \in \mathcal{P}(\R^p) \middle| \Ex_{\mathbf{x}\sim P}[\mathbf{x}] = 0, \frac{\Ex_{\mathbf{x}\sim P}[\mathbf{x}^\top\mathbf{x}]}{\Ex_{\mathbf{x}\sim P}[\|\mathbf{x}\|_2^2]} \in \mathcal{C}_{\mathrm{norm}} \right\}.
    \end{equation}
    Then the same statements 
    as in Theorem~\ref{thm:comparing-solutions}
    hold when replacing 
    $\mathcal{L}$ with 
    $\mathcal{L}_\mathrm{normRCS}$ (for reconstruction), 
    $-\mathcal{L}_\mathrm{normVar}$ (for variance), 
    or $\mathcal{L}_\mathrm{normReg}$ (for regret), 
    and replacing $\mathcal{P}$ with $\mathcal{P}_{\mathrm{norm}}$.
\end{theorem}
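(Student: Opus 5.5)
The plan is to reduce everything to linearity of the normalized losses in the normalized covariance, and then to use the fact that a linear (hence convex) function on a polytope attains its maximum at a vertex. (The statement refers to ``the same statements as in Theorem~\ref{thm:comparing-solutions}''; I read this as the four statements of Theorem~\ref{thm:maxrcs-convex-hull}, with $\mathcal{L}$ and $\mathcal{P}$ replaced as indicated.)

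\textbf{Step 1: losses depend only on the normalized covariance.} For $P\in\mathcal{P}_{\mathrm{norm}}$ write $\Sigma_P := \Ex_P[\mathbf{x}^\top\mathbf{x}]$. Note $\Tr(\Sigma_P)=\Ex_P[\|\mathbf{x}\|_2^2]$, and set $\tilde\Sigma_P := \Sigma_P/\Tr(\Sigma_P)\in\mathcal{C}_{\mathrm{norm}}$. Since $I-VV^\top$ is a symmetric idempotent, $\mathcal{L}_\mathrm{RCS}(V;P)=\Tr(\Sigma_P)-\Tr(V^\top\Sigma_P V)$. Hence
\[
\mathcal{L}_\mathrm{normRCS}(V;P) = 1-\Tr(V^\top\tilde\Sigma_P V), \qquad -\mathcal{L}_\mathrm{normVar}(V;P) = -\Tr(V^\top\tilde\Sigma_P V).
\]
Both are affine in $\tilde\Sigma_P$. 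For the regret,
\[
\mathcal{L}_\mathrm{normReg}(V;P) = \max_{W\in\Ok}\Tr(W^\top\tilde\Sigma_P W) - \Tr(V^\top\tilde\Sigma_P V).
\]
This is not affine, because of the first term.

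\textbf{Step 2: statement~(i).} For the first two losses, write $\tilde\Sigma_P=\sum_e\alpha_e\Sigma_e/\Tr(\Sigma_e)$. Linearity gives $\mathcal{L}(V;P)=\sum_e\alpha_e\mathcal{L}(V;P_e)\le\max_e\mathcal{L}(V;P_e)$. The bound is attained, since each $P_e$ lies in $\mathcal{P}_{\mathrm{norm}}$; this uses $\Tr(\Sigma_e)>0$ from Setting~\ref{setting:set-up}. So the supremum equals the maximum.

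For the regret, which I expect to be the main obstacle, I would rewrite the loss as a maximum of affine functions:
\[
\mathcal{L}_\mathrm{normReg}(V;P)=\max_{W\in\Ok}\Tr\bigl((W^\top \tilde\Sigma_P W) - (V^\top\tilde\Sigma_P V)\bigr).
\]
For each fixed $W$, the map $\tilde\Sigma\mapsto\Tr(W^\top\tilde\Sigma W)-\Tr(V^\top\tilde\Sigma V)$ is linear. Its value at the convex combination is therefore at most $\sum_e\alpha_e[\Tr(W^\top\tilde\Sigma_e W)-\Tr(V^\top\tilde\Sigma_e V)]$. Each summand is at most $\mathcal{L}_\mathrm{normReg}(V;P_e)$, by maximizing over $W$ separately for each $e$. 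Taking the maximum over $W$ yields $\mathcal{L}_\mathrm{normReg}(V;P)\le\max_e\mathcal{L}_\mathrm{normReg}(V;P_e)$, and the reverse inequality again follows because $P_e\in\mathcal{P}_{\mathrm{norm}}$. In short, the loss is convex in $\tilde\Sigma$, so its maximum over the polytope $\mathcal{C}_{\mathrm{norm}}$ is attained at a vertex.

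One subtlety is a distribution $P$ whose normalized covariance lies in $\mathcal{C}_{\mathrm{norm}}$ but with $\Ex\|\mathbf{x}\|^2=0$. This is excluded, since the ratio in \eqref{eqn:mathcalP-norm} must be defined.

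\textbf{Step 3: statements~(ii)--(iv).} Statement~(ii) follows immediately from~(i), since the two worst-case objectives coincide as functions of $V$. Statement~(iii) follows because a minimizer of $\max_e\mathcal{L}(\cdot;P_e)$ then minimizes $\sup_{P\in\mathcal{P}_{\mathrm{norm}}}\mathcal{L}(\cdot;P)$. For~(iv) I would reuse Example~\ref{example1}, with $\Tr(\Sigma_1)=\Tr(\Sigma_2)=1$, so that normalized and unnormalized objectives coincide there. \poolPCA and \sepPCA give $0$ normalized variance in one source domain, and this source domain lies in $\mathcal{P}_{\mathrm{norm}}$. The \normminPCA solution attains $36\%$, so neither baseline is worst-case optimal. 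The same example handles the regret and reconstruction versions, because these are equivalent to the variance version there (Theorem~\ref{thm:comparing-solutions}\,(ii), and the explicit values of the worst-case losses).
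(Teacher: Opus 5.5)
Your proposal is correct and follows the paper's argument: the normalized reconstruction and variance losses are linear in $\Sigma/\Tr(\Sigma)$, and for the regret you use exactly the paper's convexity step (your $\max_W\sum_e\alpha_e(\cdot)\le\sum_e\alpha_e\max_W(\cdot)$ is its $\min_W\sum_e\mu_e(\cdot)\ge\sum_e\mu_e\min_W(\cdot)$), after which (ii)--(iv) follow as in Theorem~\ref{thm:maxrcs-convex-hull}, with Example~\ref{example1} (whose covariances indeed have unit trace) as the counterexample. One small correction in (iv): in that example the regret problem is \emph{not} equivalent to the variance problem, since its solution $\frac{1}{\sqrt{5}}(\sqrt{3},0,\sqrt{2})^\top$ differs from the \minPCA solution; argue directly instead that \poolPCA and \sepPCA have worst-case normalized regret $0.6$ and $0.9$, respectively, whereas $\frac{1}{\sqrt{5}}(\sqrt{3},0,\sqrt{2})^\top$ attains $0.36$.
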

Although similar in spirit, the results do not follow from Group DRO \citep{sagawa2019distributionally}, as we outline in Appendix~\ref{app:groupdro}.

\subsection{Finite-sample guarantees and consistency}\label{sec:finite-sample}
So far, we have focused on population-level objectives. 
We now prove analogous results in a finite-sample setting. Moreover, we prove
that the empirical estimators are consistent for the population solutions and asymptotically worst-case optimal.
\begin{setting}[Source domains, finite sample]\label{setting:set-up-finite-sample}
    Consider Setting~\ref{setting:set-up}. 
    For all $e\in\E$, let $X_e\in\R^{n_e\times p}$ be a data matrix of $n_e$ i.i.d.\ observations 
        from $P_e$. 
    Define the empirical covariance as $\hat\Sigma_e := \frac{1}{n_e}X_e^\top X_e$. 
    Let  $n:=\sum_{e\in\E}n_e$, $X_{\mathrm{pool}} := [X_1^\top,\ldots,X_E^\top]^\top \in\R^{n \times p}$
    be the pooled data and $\hat\Sigma_{\mathrm{pool}} := \frac{1}{n} X_{\mathrm{pool}}^\top X_{\mathrm{pool}}$ its covariance. We assume that for all $e\in\E$, $\Tr(\hat\Sigma_e) > 0$.
\end{setting}
Given Setting~\ref{setting:set-up-finite-sample}, we define the empirical analogues of \poolPCA and the worst-case PCA objectives. We say:
\begin{gather*}
    \VpoolkHAT \textrm{ solves \emph{rank-$k$ empirical \poolPCA} if }\VpoolkHAT \in \argmax_{V\in\Ok} \Tr(V^\top \hat {\Sigma}_\mathrm{pool} V), \\
    \VminpcakHAT \textrm{ solves \emph{rank-$k$ empirical \minPCA} if } \VminpcakHAT\in \argmax_{V\in\Ok} \min_{e\in\E} \Tr(V^\top \hat \Sigma_e V), \\
    \VmaxrcskHAT \textrm{ solves \emph{rank-$k$ empirical \maxRCS} if } \VmaxrcskHAT\in \argmin_{V\in\Ok} \max_{e\in\E} \frac{1}{n_e} \|X_e - X_e VV^\top\|^2_F.
\end{gather*}
Analogous definitions of empirical \maxregret, \normminPCA, \normmaxRCS, and \normmaxregret are given in Appendix~\ref{app:finite-sample}.
\begin{remark}\label{rem:fsrobust}
    The empirical formulations exhibit optimality guarantees similar to their population counterparts 
    when considering the convex hull of empirical covariance matrices rather than population covariances. More concretely, we provide
    a representative result in Appendix~\ref{app:finite-sample},  Proposition~\ref{prop:maxrcs-convex-hull-emp}). 
\end{remark}
We now prove that the worst-case estimators are asymptotically consistent for the population subspace (Proposition~\ref{prop:consistency}) and the worst-case guarantees of Theorem~\ref{thm:maxrcs-convex-hull} hold asymptotically (Proposition~\ref{prop:consistency-of-guarantees}). To establish these results,
we require a uniqueness condition. 
\begin{assumption}\label{ass:uniqueness}
    Each population problem \maxRCS, \normmaxRCS, \minPCA, \normminPCA, \maxregret, and \normmaxregret admits a unique solution up to right multiplication by an orthonormal 
    matrix.
    That is, if $V_k^*$ and $\tilde V_k$ are both solutions, then there exists $Q\in \R^{k\times k}$ with $Q^\top Q = QQ^\top = I_k$ such that 
    $V_k^*=\tilde V_k Q$, or, equivalently, $\mathrm{span}(V_k^*)=\mathrm{span}(\tilde V_k)$.
\end{assumption}
Assumption~\ref{ass:uniqueness}
avoids
set-valued limits (up to rotation).
It holds under several conditions. 
For example,
if a single domain strictly determines the worst-case loss at the optimum, the problem reduces to standard PCA on that domain, and 
Assumption~\ref{ass:uniqueness}
follows from a strict eigengap 
between the $k$-th and $(k+1)$-th eigenvalue
in the corresponding covariance matrix (normalized where applicable).
In addition, if multiple domains are 
`active' at the optimum, Assumption~\ref{ass:uniqueness}
holds 
if
their (normalized) covariances share a common top-$k$ eigenspace and each has a strict eigengap
between its $k$-th and $(k+1)$-th eigenvalue.
In our simulations, Assumption~\ref{ass:uniqueness} seems to hold.
For example, 
repeating the experiment of Section~\ref{sec:sims-avg-vs-wc} and solving the worst-case low-rank problem twice over 100 runs, the median projection distance (see Proposition~\ref{prop:consistency}) between the two solutions in each run is approximately $0.04$, corresponding (if concentrated in one principal angle) to about $1.66^\circ$.
Finally, we hypothesize that Propositions~\ref{prop:consistency} and~\ref{prop:consistency-of-guarantees}  hold more generally -- even in (possibly non-generic) settings, where Assumption~\ref{ass:uniqueness} is violated.
\begin{proposition}[Consistency of the estimators]\label{prop:consistency}    
    Consider Setting~\ref{setting:set-up-finite-sample}. 
    Let $V_k^*$ and $\hat V_k$ denote the population and empirical solutions, respectively, to 
    \maxRCS, \normmaxRCS, \minPCA, \normminPCA, \maxregret, or \normmaxregret, and let $n_{\min}:=\min_{e\in\E} n_e$. 
    Under Assumption~\ref{ass:uniqueness}, as $n_{\min}\to\infty$, we have
    \begin{equation*}
            d(\hat V_k, V^*_k) \conv 0
        \end{equation*}
    where $d(V_1,V_2):=\|V_1V_1^\top-V_2V_2^\top\|_F$ 
    and $\conv$ denotes convergence in probability.
\end{proposition}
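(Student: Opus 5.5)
This is a standard argmax-consistency argument for M-estimators, carried out on the Grassmannian. Every objective depends on $V$ only through the projection $\Pi=VV^\top$. So I would work on the compact set $\mathcal{G}_k:=\{VV^\top : V\in\Ok\}$ with the metric $d$, on which Assumption~\ref{ass:uniqueness} says the population optimizer $\Pi^*=V_k^*V_k^{*\top}$ is unique.

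Write $F(\Pi)$ and $\hat F_n(\Pi)$ for the population and empirical worst-case objectives, oriented so that both are minimized (for \minPCA use $-\min_e$). In every case these have the form
\[
F(\Pi)=\max_{e\in\E} g_e(\Pi;\Sigma_e), \qquad \hat F_n(\Pi)=\max_{e\in\E} g_e(\Pi;\hat\Sigma_e).
\]
The functions $g_e$ are as follows:
\begin{itemize}
\item $-\Tr(\Pi\Sigma)$ for \minPCA;
\item $\Tr(\Sigma)-\Tr(\Pi\Sigma)$ for \maxRCS, using $\frac1{n_e}\|X_e-X_eVV^\top\|_F^2=\Tr(\hat\Sigma_e)-\Tr(V^\top\hat\Sigma_eV)$;
\item these divided by $\Tr(\Sigma)$ for the normalized variants;
\item $\lambda_{1:k}(\Sigma)-\Tr(\Pi\Sigma)$ (sum of the top $k$ eigenvalues), possibly normalized, for the regret variants, via Theorem~\ref{thm:comparing-solutions}\ref{thm:comparing-solutions-regret}.
\end{itemize}

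\textbf{Step 1: convergence of the sample covariances.} By the weak law of large numbers (covariances exist), $\hat\Sigma_e\conv\Sigma_e$ for each $e$ as $n_e\to\infty$, and since $\E$ is finite this holds jointly as $n_{\min}\to\infty$.

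\textbf{Step 2: uniform convergence of the objective.} I would show $\sup_{\Pi\in\mathcal{G}_k}|\hat F_n(\Pi)-F(\Pi)|\conv0$. The map $\Sigma\mapsto g_e(\cdot;\Sigma)$ is continuous into the sup-norm over $\mathcal{G}_k$, for three reasons:
\begin{itemize}
\item $|\Tr(\Pi(\hat\Sigma-\Sigma))|\le\sqrt k\|\hat\Sigma-\Sigma\|_F$ uniformly in $\Pi$;
\item $\Tr(\cdot)$ and $\lambda_{1:k}(\cdot)$ are continuous, the latter by Weyl or Ky Fan;
\item the normalizations divide by $\Tr(\Sigma_e)>0$, which is bounded away from zero near $\Sigma_e$.
\end{itemize}
Since $|\max_e a_e-\max_e b_e|\le\max_e|a_e-b_e|$, the uniform convergence follows by the continuous mapping theorem.

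\textbf{Step 3: well-separatedness.} This is the step where uniqueness enters, and I expect it to be the only real subtlety. $F$ is continuous on the compact set $\mathcal{G}_k$ and has a unique minimizer $\Pi^*$. Hence for every $\varepsilon>0$,
\[
\delta(\varepsilon):=\min\{F(\Pi): d(\Pi,\Pi^*)\ge\varepsilon\}-F(\Pi^*)>0,
\]
because the minimum over a compact set is attained and cannot equal $F(\Pi^*)$.

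\textbf{Step 4: the argmin argument.} If $d(\hat\Pi,\Pi^*)\ge\varepsilon$, then
\[
F(\hat\Pi)-F(\Pi^*)\le \bigl[F(\hat\Pi)-\hat F_n(\hat\Pi)\bigr]+\bigl[\hat F_n(\Pi^*)-F(\Pi^*)\bigr]\le 2\sup|\hat F_n-F|,
\]
using that $\hat\Pi$ minimizes $\hat F_n$. So $P(d(\hat\Pi,\Pi^*)\ge\varepsilon)\le P(2\sup|\hat F_n-F|\ge\delta(\varepsilon))\to0$. This argument works for any measurable selection of the empirical solution. The empirical objective requires $\Tr(\hat\Sigma_e)>0$, which Setting~\ref{setting:set-up-finite-sample} assumes (and which in any case holds with probability tending to one).
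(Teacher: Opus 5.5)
Your proposal is correct and follows essentially the same route as the paper: uniform convergence of each domain-wise objective (via Cauchy--Schwarz, Weyl/Ky Fan, and the trace normalization staying away from zero), then the worst case via $|\max_e a_e-\max_e b_e|\le\max_e|a_e-b_e|$, well-separatedness from compactness plus Assumption~\ref{ass:uniqueness}, and finally the standard argmax-consistency step. The differences are cosmetic: you work on the set of projections $\{VV^\top : V\in\Ok\}$, where $d$ is a genuine metric and uniqueness holds literally, and you prove the argmin step directly, whereas the paper works on $\Ok$ and cites Theorem~5.7 of van der Vaart.
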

We further show that the empirical solutions are asymptotically worst-case optimal, in the sense of Theorem~\ref{thm:maxrcs-convex-hull}. 
First, the empirical estimator achieves the optimal worst-case population loss in the limit. 
Second, the empirical objective provides an asymptotic upper bound on the population loss over all distributions in the uncertainty set.
\begin{proposition}[Asymptotic worst-case optimality]\label{prop:consistency-of-guarantees}    
    Consider Setting~\ref{setting:set-up-finite-sample}. 
    Let $\hat V_k$ and $V^*_k$ be the empirical and population solutions, respectively, to 
    \maxRCS, \normmaxRCS, \minPCA, \normminPCA, \maxregret, or \normmaxregret. 
    Let $\mathcal{L}$ denote the corresponding loss\footnote{ 
        So $\mathcal{L} \in \{-\mathcal{L}_{\mathrm{var}},\ -\mathcal{L}_{\mathrm{normVar}},\ \mathcal{L}_{\mathrm{RCS}},\ \mathcal{L}_{\mathrm{normRCS}},\ \mathcal{L}_{\mathrm{reg}},\ \mathcal{L}_{\mathrm{normReg}}\}$, as in Theorems~\ref{thm:maxrcs-convex-hull} and~\ref{thm:normmaxrcs}.
    }
    and let $\mathcal{Q}$ denote the associated distributional class ($\mathcal{P}$ for the unnormalized losses, $\mathcal{P}_{\mathrm{norm}}$ for the normalized losses). 
    Under Assumption~\ref{ass:uniqueness}, as $n_{\min}:=\min_{e\in\E} n_e \to\infty$, 
    the following two statements hold.
    \begin{enumerate}[label=\roman*), ref=\roman*)]
        \item The worst-case population loss of $\hat V_k$ converges in probability to the optimal value,
        $$
        \sup_{P\in\mathcal{Q}} \mathcal{L}(\hat V_k;P)
        \xrightarrow{p}
        \sup_{P\in\mathcal{Q}} \mathcal{L}(V^*_k;P). 
        $$
        \label{prop:consistency-of-guarantees-i}
        \item For every $\epsilon>0$,
        $$
        \lim_{n_{\min}\to\infty} 
        \Pr\left(
        \sup_{P\in\mathcal{Q}} \mathcal{L}(\hat V_k;P) > \hat m_k + \epsilon
        \right) = 0,
        $$
        where $\hat m_k := \max_{e\in\E} \mathcal{L}(\hat V_k;\hat\Sigma_e)$ is the empirical loss\footnote{\label{fn:loss-sigma}
            In Equation~\eqref{eqn:losses-variance} and~\eqref{eqn:losses-rcs} we define the losses $\mathcal{L}$ as a function of the distribution $P$. Each loss depends on the distribution $P$ only through its covariance matrix $\Sigma$, so, by slight abuse of notation, we can write the losses in terms of $\Sigma$. For example, 
            for all $V \in \Ok$, 
            $\mathcal{L}_\mathrm{RCS}(V;P) = \mathcal{L}_\mathrm{RCS}(V;\Sigma) := \Tr(\Sigma) - \Tr(V^\top \Sigma V)$.
            }.
        \label{prop:consistency-of-guarantees-ii}
    \end{enumerate}
\end{proposition}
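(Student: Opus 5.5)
The plan is to reduce both statements to uniform convergence of the empirical worst-case objective to the population worst-case objective over the compact set $\Ok$, and then to use Theorems~\ref{thm:maxrcs-convex-hull} and~\ref{thm:normmaxrcs}, part~\ref{bullet:wc-optimality-equality-sup-max}, to replace $\sup_{P\in\mathcal{Q}}$ by the maximum over source domains.

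Write $F(V):=\max_{e\in\E}\mathcal{L}(V;\Sigma_e)$ and $\hat F(V):=\max_{e\in\E}\mathcal{L}(V;\hat\Sigma_e)$, using the covariance form of the losses from footnote~\ref{fn:loss-sigma}. By the cited theorem part~\ref{bullet:wc-optimality-equality-sup-max}, $\sup_{P\in\mathcal{Q}}\mathcal{L}(V;P)=F(V)$ for every $V\in\Ok$.

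\textbf{Step 1 (uniform convergence).} By the weak law of large numbers, $\hat\Sigma_e\conv\Sigma_e$ for each $e$ as $n_{\min}\to\infty$; there are finitely many domains. Each loss is a Lipschitz function of $\Sigma$ uniformly over $V\in\Ok$ on a neighbourhood of $\Sigma_e$.
\begin{itemize}
\item The maps $\Tr(V^\top\Sigma V)$ and $\Tr(\Sigma)-\Tr(V^\top\Sigma V)$ satisfy $|\Tr(V^\top(\hat\Sigma-\Sigma)V)|\le \sqrt{k}\|\hat\Sigma-\Sigma\|_F$.
\item The regret adds the term $\max_W \Tr(W^\top\Sigma W)$, the sum of the top $k$ eigenvalues, which is Lipschitz in $\Sigma$.
\item The normalized versions divide by $\Tr(\Sigma)$. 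Since $\Tr(\Sigma_e)>0$, the denominators stay bounded away from zero with probability tending to one.
\end{itemize}
Since a maximum of finitely many functions preserves uniform bounds, $\sup_{V\in\Ok}|\hat F(V)-F(V)|\conv 0$.

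\textbf{Step 2, part~\ref{prop:consistency-of-guarantees-i}.} Let $\Delta_n:=\sup_V|\hat F-F|$. Because $\hat V_k$ minimizes $\hat F$ and $V_k^*$ minimizes $F$,
\[
F(V_k^*)\le F(\hat V_k)\le \hat F(\hat V_k)+\Delta_n\le \hat F(V_k^*)+\Delta_n\le F(V_k^*)+2\Delta_n.
\]
For minPCA the signs flip; it is handled by taking $\mathcal{L}=-\mathcal{L}_\mathrm{var}$. Hence $F(\hat V_k)\conv F(V_k^*)$, which is the claim after rewriting both sides via the cited theorem. Assumption~\ref{ass:uniqueness} is not needed for this part. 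Alternatively, the claim follows from Proposition~\ref{prop:consistency} together with continuity of $F$ in $VV^\top$, since every loss depends on $V$ only through $VV^\top$.

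\textbf{Step 3, part~\ref{prop:consistency-of-guarantees-ii}.} Since $\hat m_k=\hat F(\hat V_k)$, we have $\sup_{P\in\mathcal{Q}}\mathcal{L}(\hat V_k;P)-\hat m_k=F(\hat V_k)-\hat F(\hat V_k)\le\Delta_n$. Therefore the probability in question is at most $\Pr(\Delta_n>\epsilon)\to0$.

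\textbf{Main obstacle.} The only delicate point is the uniform Lipschitz control for the normalized losses and for the regret's eigenvalue term. For the eigenvalue term I would invoke Ky Fan's variational characterization: a maximum of functions that are uniformly $\sqrt{k}$-Lipschitz is itself $\sqrt{k}$-Lipschitz. For the normalized losses I would work on the event $\{\Tr(\hat\Sigma_e)>\Tr(\Sigma_e)/2\ \forall e\}$, whose probability tends to one, so that the denominators are controlled.
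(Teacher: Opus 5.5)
Your proposal is correct. The uniform-convergence step and part~\ref{prop:consistency-of-guarantees-ii} match the paper's argument. The paper proves the same Frobenius-norm bounds in the proof of Proposition~\ref{prop:consistency}, and then bounds $F(\hat V_k)-\hat F(\hat V_k)$ by $\max_e|\mathcal{L}(\hat V_k;\Sigma_e)-\mathcal{L}(\hat V_k;\hat\Sigma_e)|$. It differs only in minor details:
\begin{itemize}
\item It handles the normalized denominators deterministically via $\Tr(V^\top\hat\Sigma_e V)\le\Tr(\hat\Sigma_e)$, rather than on a high-probability event.
\item It uses Weyl's inequality (constant $k$) where you use Ky Fan (constant $\sqrt{k}$).
\item For $\mathcal{L}_{\mathrm{RCS}}$ it explicitly includes the $V$-independent term $|\Tr(\hat\Sigma_e-\Sigma_e)|\le\sqrt{p}\,\|\hat\Sigma_e-\Sigma_e\|_F$, which your first bullet leaves implicit but which is harmless.
\end{itemize}

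The genuine difference is in part~\ref{prop:consistency-of-guarantees-i}. The paper does not use your sandwich inequality. It shows that $F$ is Lipschitz in the projection distance $d$ with constant $\max_e c_e$, where $c_e=\|\Sigma_e\|_F$ (resp.\ $\|\Sigma_e\|_F/\Tr(\Sigma_e)$ for the normalized losses). It then invokes Proposition~\ref{prop:consistency}, $d(\hat V_k,V_k^*)\conv 0$. This is exactly your ``alternative'' remark.

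That route needs Assumption~\ref{ass:uniqueness}, which enters through the well-separated-maximum condition of the argmax theorem. Your argument instead gives $0\le F(\hat V_k)-F(V_k^*)\le 2\Delta_n\le C\max_e\|\hat\Sigma_e-\Sigma_e\|_F$ with no identification assumption. Part~\ref{prop:consistency-of-guarantees-ii} does not use the assumption either, so your route shows the whole proposition holds without Assumption~\ref{ass:uniqueness}. This confirms, for this proposition, the hypothesis the authors state after that assumption. Your route also yields an explicit rate inherited from covariance estimation, which the paper's route cannot, since the argmax theorem gives no rate for $d(\hat V_k,V_k^*)$.

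What the paper's route buys is brevity once Proposition~\ref{prop:consistency} is in hand. It also gives a direct link between excess worst-case loss and subspace estimation error.
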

    Here, ii) does not follow directly from finite-sample robustness, see Remark~\ref{rem:fsrobust}, because the distribution class $\mathcal{Q}$ is formed from the population not the empirical covariance matrices.

\section{Extension: worst-case matrix completion}\label{sec:maxmc}
When data 
are
only partially observed, low rank approximation extends naturally to matrix completion.
In the single-domain setting, a common formulation \citep[e.g.,][]{jain2013low} approximates $X \in \R^{n\times p}$ by a rank-$k$ factorization $X \approx LR^\top$, where $L \in \R^{n\times k}$ and $R \in \R^{p\times k}$ (with $k \ll p$), by minimizing reconstruction error over the observed entries.
A classic example is movie recommendation 
\citep[e.g.,][]{harper2015movielens}, where rows correspond to users and columns to movies.
Only a subset of ratings is observed, and the goal is to predict the missing entries under a low-rank assumption. 
The factorization can be interpreted as modeling latent features (e.g., genre or style) that explain preferences: 
the left factor represents user-specific weights on these features, while the right factor encodes the weights of the movies.

As before,
the data may arise from multiple domains.
In the movie example, this occurs when users belong to heterogeneous subpopulations (e.g., regions or age groups) with differing rating patterns.
We 
now
extend the worst-case framework developed for PCA to 
matrix completion.
In particular, we introduce a worst-case objective for multi-domain matrix completion and formalize inductive matrix completion in target domains. We then show 
the following worst-case optimality guarantee:
when the source domains are fully observed but the target domains incur missingness, subspaces that are worst-case optimal for reconstruction error remain $\epsilon$-worst-case optimal over the convex hull of the source covariances.
\begin{setting}[Multiple domains with missingness]\label{setting:mc}
    Consider Setting~\ref{setting:set-up}.
    For each domain $e \in \E$, 
    the
    entries of $\mathbf{x}_e \in \R^p$ are observed according to a mask $\omega_e \in \{0,1\}^p$, 
    where $\omega_e$ is
    drawn independently of $\mathbf{x}_e$ from a distribution $Q$:
    $(\omega_e)_i =1$ if the $i$-th entry of $\mathbf{x}_e$ is observed and $(\omega_e)_i =0$ otherwise
\end{setting}
\begin{setting}[Finite-sample version]\label{setting:mc-empirical}
    Consider Setting~\ref{setting:mc}. For each $e \in \E$, let $X_e \in \R^{n_e\times p}$ be a data matrix of $n_e$ i.i.d.\ rows from $P_e$, and let $\Omega_e \in \{0,1\}^{n_e \times p}$ contain $n_e$ 
    masks 
    drawn i.i.d.\
    from $Q$; 
    only entries with $(\Omega_e)_{ij}=1$ are observed.
    Let $n :=\sum_{e\in\E} n_e$ and $X_{\mathrm{pool}} \in \Rnp$ and $\Omega_{\mathrm{pool}} \in \{0,1\}^{n\times p}$ 
    denote the row-wise concatenation of the domain-specific matrices.
\end{setting}
In the multi-domain setting, a natural baseline is to ignore domain labels and solve matrix completion on
the pooled data $(X_{\mathrm{pool}}, \Omega_{\mathrm{pool}})$, minimizing average reconstruction error across domains;  
we term this solving \emph{\poolMC}.
Similarly as for \wcPCA, we can also optimize the worst-case reconstruction error.
\begin{definition}[maxMC]
    Consider Setting~\ref{setting:mc-empirical}. A 
    (shared)
    right factor $R^* \in \Rpk$ and 
    (domain-specific)
    left factors $L_e^*\in \R^{n_e\times k}$ for $e\in\E$ solve \emph{\maxMC} if
    \begin{equation}\label{eqn:maxMC}
        (R^*, L_1^*,\ldots,L_E^*)
        \in
        \argmin_{\substack{R \in \Ok\\ \forall e\in\E,\ L_e \in \R^{n_e\times k}}}
        \max_{e \in \E} 
        \frac{1}{n_e}
        \|P_{\Omega_e}(X_e) - P_{\Omega_e}(L_e R^\top)\|_F^2,
    \end{equation}
    where, for all $e\in\E$, $P_{\Omega_e}: \R^{n_e \times p} \to \R^{n_e \times p}$ is 
    defined as $[P_{\Omega_e}(X_e)]_{ij} = [X_e]_{ij}$ if $[\Omega_e]_{ij} = 1$ and 
    $[P_{\Omega_e}(X_e)]_{ij} = 0$
    otherwise.
\end{definition}
This objective learns a shared right factor that minimizes worst-case reconstruction error across domains.
In the single-domain setting, this coincides with the standard matrix completion objective. 
Additionally, in absence of missingness in the source domains, the shared right factor that solves \maxMC coincides with the solution to empirical \maxRCS, 
since $P_{\Omega_e}$ reduces to the identity and, for all $e\in\E$, the optimal choice of $L_e$ is $L_e = X_e R$.

\subsection{Inductive matrix completion}\label{sec:inductive-mc}
After learning a shared right factor $R^*$ on the source domains, we 
can also
use it to reconstruct
new, partially observed 
observations
in a target domain 
(in the movie recommendation example, this could correspond to new users in a new domain) -- a problem that is sometimes referred to as 
inductive matrix completion.
We 
can, for example,
reconstruct a row $\mathbf{x}\in\R^{1\times p}$ observed on coordinates $\omega$, by estimating coefficients $\ell\in\R^k$ from the observed entries and forming $\hat{\mathbf{x}}=\ell (R^{*})^\top$.
In this paper, 
we focus on the 
ordinary least squares estimator,
\begin{equation}\label{eqn:ell-ols}
   \ell_\mathrm{OLS} = \ell_\mathrm{OLS}(\mathbf{x}, \omega, R^*) 
    \in
    \argmin_{\ell \in \R^k}
    \sum_{\substack{i \in [p]:\, \omega_i=1}}
    (\mathbf{x}_i - [\ell (R^{*})^\top]_i)^2,
\end{equation}
an approach that is commonly used \citep[e.g.,][]{mf_koren2009};
alternative methods 
such as 
incorporating side information \citep[e.g.,][]{jain2013provableinductivematrixcompletion} 
could also be considered.
\subsection{Robustness guarantees}\label{sec:robustness-mc} 
We consider the setting in which the source domains are fully observed and missingness arises only in the target domain. In this setting,
we prove that
a subspace that is worst-case optimal for reconstruction remains $\epsilon$-worst-case optimal for inductive matrix completion over the convex hull of the source covariances.
We 
impose incoherence of the right factor---a standard assumption in matrix completion---and require a minimum number of observed entries;
later, this allows us to control the reconstruction error under missingness.
\begin{definition}[$\mu$-incoherence]
Let $R \in \Ok$. We say that $R$ is \emph{$\mu$-incoherent} if, for all $i \in [p]$, $\|R^{(i)}\|_2 \le \mu \sqrt{\tfrac{k}{p}}$,
where $R^{(i)}$ denotes the $i$-th row of $R$.
\end{definition}
\begin{assumption}[Sufficient observations]\label{ass:sufficient-obs}
    Let $\epsilon \in (0,1)$ and $\mu \ge 1$. 
    For $\omega \sim Q$, the number of unobserved entries $s$ satisfies
    $s \le \frac{p \epsilon}{k \mu^2 (2\epsilon + 1)}$
    almost surely.
\end{assumption}
As in Theorem~\ref{thm:maxrcs-convex-hull}, let $\mathcal{C}$ be the convex hull of the domain covariance matrices and let $\mathcal{P}$ denote the set of distributions with covariance in $\mathcal C$. 
Additionally, for all distributions $P, Q$ over $\mathbf{x}$ and $\omega$, for all $R\in\Ok$, let 
$\mathcal{L}_{P,Q}(R) := \Ex_{\mathbf{x}\sim P, \omega \sim Q}\| \mathbf{x} - \ell_\mathrm{OLS}(\mathbf{x}, \omega, R) R^\top \|^2_2$ be the expected reconstruction error of $\mathbf{x}\sim P$ under sampling distribution $Q$. 
\begin{theorem}[Worst-case robustness for inductive completion]\label{thm:mc}
    Consider Setting~\ref{setting:mc}
    and let $\epsilon\in(0,1)$
    Assume the source domains are fully observed, i.e., the source mask distribution $Q_{\mathrm{source}}$ satisfies $Q_{\mathrm{source}}(\omega=\mathbf{1})=1$. Let
    $R^* \in \Ok$ 
    solve the population worst-case matrix completion problem, defined as
    \begin{equation}\label{eq:pop-maxmc}
    R^* \in \argmin_{R\in\Ok}
    \max_{e\in\E}
    \mathcal{L}_{P_e,Q_\mathrm{source}}(R).
    \end{equation}
    ($R^*$ thus also solves \maxRCS.) 
    Let $Q_\mathrm{target}$ be the distribution of the masks in the target domains.
    If
    $R^*$ is $\mu$-incoherent,
    and 
    Assumption~\ref{ass:sufficient-obs} holds for $Q_\mathrm{target}$
    with this $\mu$,
    then $R^*$ is $\epsilon$-worst-case optimal for inductive matrix completion over $\mathcal{P}$, that is,
    \begin{equation*}
        \sup_{P \in \mathcal{P}}
        \mathcal{L}_{P,Q_\mathrm{target}}(R^*)
        \le
        (1+\epsilon)\,
        \min_{R \in \Ok}
        \sup_{P \in \mathcal{P}}
        \mathcal{L}_{P,Q_\mathrm{target}}(R).
        \end{equation*}
\end{theorem}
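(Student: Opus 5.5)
Our strategy is to sandwich the missing-data loss between the full-data reconstruction loss $\mathcal{L}_{\mathrm{RCS}}$ and a multiple of it. Then we transfer worst-case optimality from Theorem~\ref{thm:maxrcs-convex-hull}.

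\textbf{Step 1 (source problem reduces to \maxRCS).} With full observation, $\ell_\mathrm{OLS}(\mathbf{x},\mathbf{1},R)=\mathbf{x}R$. Hence $\mathcal{L}_{P,Q_\mathrm{source}}(R)=\mathcal{L}_\mathrm{RCS}(R;P)$, and $R^*$ solves \maxRCS. By Theorem~\ref{thm:maxrcs-convex-hull}\ref{bullet:wc-optimality-optimality}, $R^*$ minimizes $R\mapsto\sup_{P\in\mathcal{P}}\mathcal{L}_\mathrm{RCS}(R;P)$.

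\textbf{Step 2 (lower bound).} For every $R$, $\mathbf{x}$ and $\omega$, the vector $\ell_\mathrm{OLS}R^\top$ lies in the row space of $R^\top$. The orthogonal projection $\mathbf{x}RR^\top$ is the closest point of that space to $\mathbf{x}$. Therefore $\|\mathbf{x}-\ell_\mathrm{OLS}R^\top\|_2^2\ge\|\mathbf{x}-\mathbf{x}RR^\top\|_2^2$. Taking expectations gives $\mathcal{L}_{P,Q}(R)\ge\mathcal{L}_\mathrm{RCS}(R;P)$ for all $P,Q,R$. Consequently,
\[
\min_{R}\sup_{P\in\mathcal{P}}\mathcal{L}_{P,Q_\mathrm{target}}(R)\ge\min_R\sup_{P\in\mathcal{P}}\mathcal{L}_\mathrm{RCS}(R;P)=\sup_{P\in\mathcal{P}}\mathcal{L}_\mathrm{RCS}(R^*;P).
\]

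\textbf{Step 3 (upper bound for the incoherent $R^*$).} We aim to show, pointwise in $\mathbf{x}$ and for every admissible $\omega$,
\[
\|\mathbf{x}-\ell_\mathrm{OLS}(\mathbf{x},\omega,R^*)R^{*\top}\|_2^2\le(1+\epsilon)\|\mathbf{x}-\mathbf{x}R^*R^{*\top}\|_2^2 .
\]

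Write $\mathbf{x}=a+r$, where $a=\mathbf{x}R^*R^{*\top}$ and $r$ is orthogonal to the column space of $R^*$. Let $R_\omega$ denote the rows of $R^*$ indexed by the observed coordinates. Let $S$ be the set of $s$ unobserved coordinates and $R_S$ the corresponding rows.

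By incoherence, $\|R_S\|_2^2\le\|R_S\|_F^2\le s\mu^2k/p$. Write $\delta:=s\mu^2 k/p$. Assumption~\ref{ass:sufficient-obs} gives $\delta\le\epsilon/(1+2\epsilon)<1/2$. Then $R_\omega^\top R_\omega=I_k-R_S^\top R_S\succeq(1-\delta)I_k$, so OLS is uniquely defined.

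OLS reproduces $a$ exactly, so $\ell_\mathrm{OLS}=\mathbf{x}R^*+r_\omega R_\omega(R_\omega^\top R_\omega)^{-1}$. Using $r R^*=0$, we have $r_\omega R_\omega=-r_SR_S$. The error is therefore $r-\Delta R^{*\top}$ with $\Delta:=-r_SR_S(R_\omega^\top R_\omega)^{-1}$.

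Since $r\perp\mathrm{col}(R^*)$, the total error splits orthogonally as $\|r\|^2+\|\Delta\|^2$. We bound
\[
\|\Delta\|\le\|r_S\|\,\|R_S\|\,/(1-\delta)\le\|r\|\sqrt{\delta}/(1-\delta).
\]
This gives the factor $1+\delta/(1-\delta)^2$. We then check that $\delta\le\epsilon/(1+2\epsilon)$ implies $\delta/(1-\delta)^2\le\epsilon$. If the constant does not match exactly, we use the sharper bound $\|r_S\|^2\le\|r\|^2$ combined with $\|R_S\|^2\le\delta$, or the cruder bound $\delta/(1-\delta)$ from a slightly different arrangement. The constant $2\epsilon+1$ in the assumption suggests the intended inequality is $\delta/(1-2\delta)\le\epsilon$ or similar.

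Taking expectations gives $\mathcal{L}_{P,Q_\mathrm{target}}(R^*)\le(1+\epsilon)\mathcal{L}_\mathrm{RCS}(R^*;P)$ for every $P$.

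\textbf{Step 4 (conclusion).} Take the supremum over $\mathcal{P}$ in Step 3 and combine with Step 2. This yields $\sup_P\mathcal{L}_{P,Q_\mathrm{target}}(R^*)\le(1+\epsilon)\min_R\sup_P\mathcal{L}_{P,Q_\mathrm{target}}(R)$.

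The main obstacle is Step 3. The difficulties are getting the perturbation bound on the OLS coefficients with the exact constant matching Assumption~\ref{ass:sufficient-obs}, and handling the orthogonality and conditioning of $R_\omega$ cleanly.
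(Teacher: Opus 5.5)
Your proposal is correct and follows essentially the same route as the paper. The paper derives exactly your Step 3 factor $1+\delta/(1-\delta)^2$, using the same Pythagorean split and Neumann-series bound, packaged as a stability-of-OLS-under-row-removal lemma; it then combines this with Theorem~\ref{thm:maxrcs-convex-hull} and the lower bound $\|\mathbf{x}-\mathbf{x}VV^\top\|_2^2\le\|\mathbf{x}-\ell V^\top\|_2^2$, just as you do. Your hedging about the constant is unnecessary: Assumption~\ref{ass:sufficient-obs} is equivalent to $\delta\le\epsilon(1-2\delta)$, and since $(1-\delta)^2\ge 1-2\delta>0$ this gives $\delta/(1-\delta)^2\le\delta/(1-2\delta)\le\epsilon$ directly.
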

Thus, a subspace that is worst-case optimal for low-rank approximation remains (approximately) worst-case optimal for inductive matrix completion over all distributions in $\mathcal{P}$. 
Theorem~\ref{thm:mc} analyzes the regime in which the source domains are fully observed (so that \maxMC reduces to \maxRCS). 
Our simulations indicate that, even 
when the source domains are only partially observed,
\maxMC improves worst-case reconstruction error relative to \poolMC (see Section~\ref{sec:sims-maxmc}).
Furthermore, 
the bound on the number of unobserved entries imposed by Assumption~\ref{ass:sufficient-obs} is 
restrictive
(e.g., for $(k,\epsilon,\mu)=(2,0.1,1)$, 
the bound yields $s \le 0.0417\,p$, i.e., approximately $4\%$ missing entries).
Nevertheless, empirically, robustness degrades only mildly as missingness increases, and \maxMC continues to outperform pooling well beyond the guaranteed regime (see Section~\ref{sec:sims-maxmc}). 
We thus believe that stronger versions of Theorem~\ref{thm:mc} may hold.

\section{Numerical experiments}\label{sec:simulations}
We investigate the worst-case estimators through synthetic simulations. 
In each experiment, we consider five source domains, obtained by sampling five population covariances matrices 
$(\Sigma_1, \ldots, \Sigma_e)$
in $\mathbb{R}^{p\times p}$
($p=20$ unless otherwise stated) 
of rank $10$ 
from a measure $\mathbb{Q}_{\alpha, \beta}$ that 
depends on parameters $\alpha, \beta$ 
(the role and standard choices of the parameters are explained in Appendix~\ref{app:datagen}); in each experiment we resample the covariance matrices. 
Under the measures $\mathbb{Q}_{\alpha,\beta}$, 
the sampled covariances consist of
a rank-$5$ shared component and a rank-$5$ domain-specific component
with shared eigenvalues across domains, and are trace-normalized.
These properties ensure that the solutions to \minPCA, \maxRCS, \maxregret, and their normalized variants all coincide.
We therefore report only the results of \maxRCS and evaluate performance using reconstruction error. 
The only exception is Section~\ref{sec:sims-noise}, where we modify the data-generating process to study the effect of heterogeneous noise.
Appendix~\ref{app:datagen} provides more details on the data generating process.
The \wcPCA variants are solved using projected gradient descent (Appendix~\ref{app:algo} provides further details including a comparison of our implementation to the approximations of \citet{wang2025stablepca} and \citet{tantipongpipat2019multi}) and \poolMC and \maxMC are solved with alternating minimization (also see Appendix~\ref{app:algo}).
Code to reproduce all simulations and figures is publicly available at \url{https://github.com/anyafries/wcPCA}.

\subsection{Worst-case low-rank approximation}\label{sec:minpca-sim}
\subsubsection{Convex-hull robustness in the population case}\label{sec:sims-illustrate}
We illustrate Theorem~\ref{thm:maxrcs-convex-hull}(i)
stating
that, for \maxRCS, the maximum reconstruction error
on the source domains provides a minimal
upper bound for the reconstruction error of all covariances in their convex hull.
To do so,
we sample five source covariances 
$\mathcal{C}_\mathrm{source} := (\Sigma_1, \dots, \Sigma_5) \sim \mathbb{Q}_{\alpha, \beta}$
and compute the rank-5 population solutions to \poolPCA and \maxRCS, $\Vpool$ and $\Vmaxrcs$. 
We sample 50 target covariances, 
collected in the set
$\mathcal{C}_\mathrm{target}$,
from the convex hull of the source covariances
(as described in Appendix~\ref{app:datagen}).
For every $\Sigma \in \mathcal{C}_\mathrm{target} \cup \mathcal{C}_\mathrm{source}$, 
we compute the reconstruction errors using \poolPCA and \maxRCS, that is,
$\mathcal{L}_\mathrm{RCS}(\Vpool;\Sigma)$ 
and 
$\mathcal{L}_\mathrm{RCS}(\Vmaxrcs;\Sigma)$.
Figure~\ref{fig:sim_maxrcs_bound} shows that all test errors of \maxRCS lie below the bound $\mstar_\mathrm{maxRCS} := \max_{1\le e\le5} \mathcal{L}_{\mathrm{RCS}}(\Vmaxrcs; \Sigma_e)$, as guaranteed by Theorem~\ref{thm:maxrcs-convex-hull}(i). In contrast, \poolPCA exceeds this bound in several cases. 
\begin{figure}[t]
  \centering
  \begin{minipage}[t]{0.48\linewidth}
    \centering
    \includegraphics[scale=1]{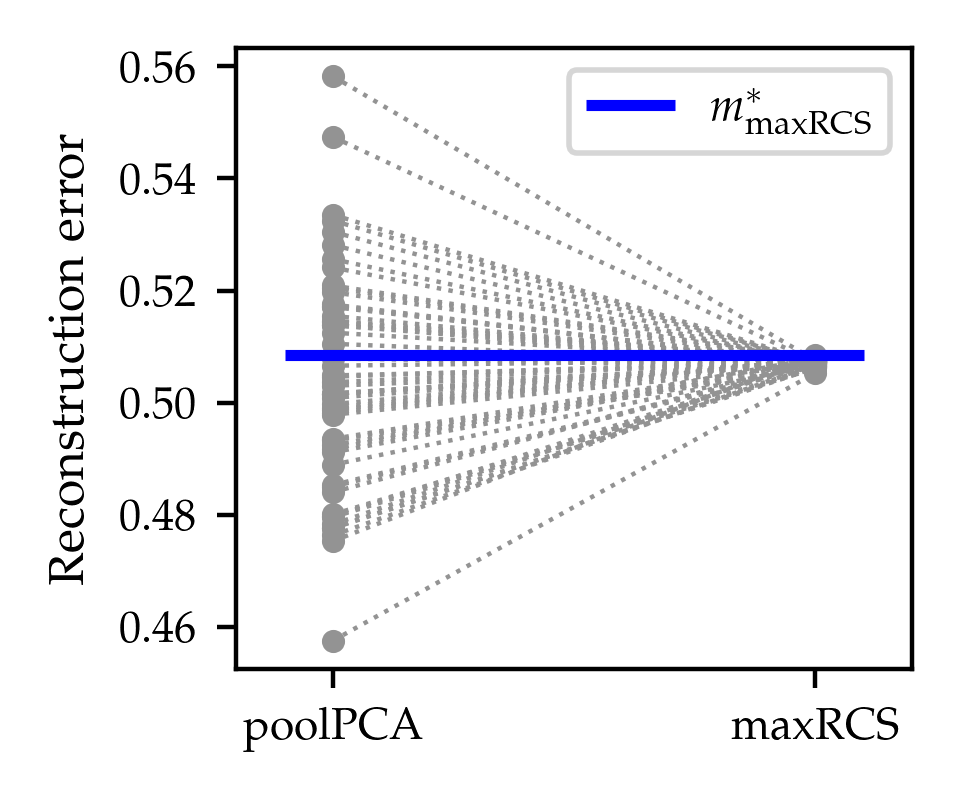}
    \caption{
        \textit{Illustration of Theorem~\ref{thm:maxrcs-convex-hull}(i); 
        see Section~\ref{sec:sims-illustrate}}.
        Reconstruction errors for the source domains and 50 target domains are shown for \poolPCA 
        and \maxRCS 
        in a population setting.
        The blue line marks $\mstar_\mathrm{maxRCS}$,
        the maximum reconstruction error over the source domains. 
        As expected from Theorem~\ref{thm:maxrcs-convex-hull}(i), all \maxRCS errors lie below this bound, whereas \poolPCA exceeds it in several cases.
        } 
    \label{fig:sim_maxrcs_bound}
  \end{minipage}\hfill
  \begin{minipage}[t]{0.48\linewidth}
    \centering
    \includegraphics{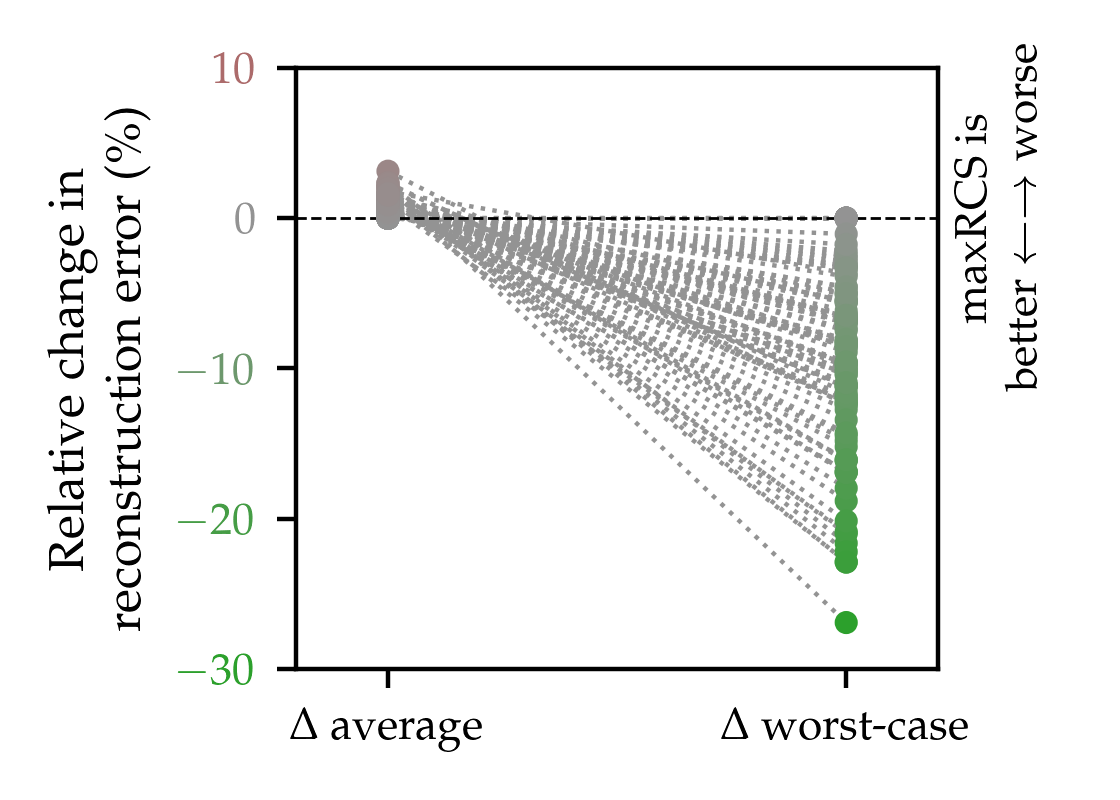}
    \caption{
        \textit{Average vs.\ worst-case reconstruction error of \maxRCS and \poolPCA; 
        see Section~\ref{sec:sims-avg-vs-wc}}.
        The difference in errors are shown relative to \poolPCA's average performance. 
        Values below zero indicate that \maxRCS outperforms \poolPCA. 
        \maxRCS 
        improves worst-case performance, while incurring only small losses on average.
        }
    \label{fig:sim_wc_vs_avg}
  \end{minipage}
\end{figure}

\subsubsection{Average versus worst-case performance}\label{sec:sims-avg-vs-wc}
We next study how much average performance \maxRCS loses in exchange for improved worst-case robustness compared to \poolPCA.
We vary the heterogeneity between domains 
by varying the parameters $\alpha, \beta$ in the distribution of the source covariances $\mathbb{Q}_{\alpha, \beta}$ (see Appendix~\ref{app:datagen} for more details);
we consider $(\alpha,\beta) \in \{(0.1, 0.5),\allowbreak(0.5, 1),\allowbreak(1, 2),\allowbreak(2, 5)\}$.
Higher values of $(\alpha,\beta)$ increase the contribution of domain-specific components, creating greater variation between domains. 
For each $(\alpha,\beta)$, we repeat the following 
steps
25 times. We sample $(\Sigma_1, \dots, \Sigma_5) \sim \mathbb{Q}_{\alpha,\beta}$, 
compute rank-5 solutions $\Vpool$ and $\Vmaxrcs$, and report the difference in average reconstruction error (over the source domains) and worst-case reconstruction error (over the convex hull of the source covariances),
relative to the average performance of \poolPCA (for details, see equations~\eqref{eqn:rel-error-avg} and~\eqref{eqn:rel-error-wc} in Appendix~\ref{app:eval-metrics}).
The results are shown in Figure~\ref{fig:sim_wc_vs_avg}. 
Solutions to \maxRCS consistently improve worst-case error while incurring only small losses in average error, across varying domain heterogeneity. 
These findings complement Theorem~\ref{thm:maxrcs-convex-hull}(iii): while the theorem guarantees that there exist settings where \maxRCS strictly outperforms \poolPCA in the worst case, the simulations suggest that such improvements occur generically.

\subsubsection{Convergence and finite-sample robustness}\label{sec:sims-finite-sample}
Proposition~\ref{prop:consistency-of-guarantees}(i)
states that the worst-case population loss of the empirical \maxRCS solution converges in probability to the optimal worst-case population loss. 
We illustrate this convergence empirically and additionally assess the finite-sample robustness of \maxRCS relative to pooled PCA.
To do so, we vary the sample size $n \in \{100, 250, 500, 1000, 2000, 5000\}$ and heterogeneity $(\alpha,\beta)$ as in Section~\ref{sec:sims-avg-vs-wc}. 
For each $(n,\alpha,\beta)$ we repeat the following 
steps
25 times. We sample  source covariances $\mathcal{C}_\mathrm{source} := (\Sigma_1,\dots,\Sigma_5) \sim\mathbb{Q}_{\alpha,\beta}$ and, for all domains $1\le e\le 5$, we draw $n$ observations from $\mathcal{N}(0,\Sigma_e)$, and compute the rank-5 empirical and population solutions of \poolPCA and \maxRCS, denoted as $\VpoolHAT$, $\VmaxrcsHAT$, $\Vpool$, and $\Vmaxrcs$, respectively.

We assess convergence by 
reporting the difference in worst-case population reconstruction error (evaluated over the convex hull $\mathrm{conv}(\mathcal{C}_\mathrm{source})$) between empirical and population \maxRCS 
(for details, see equation~\eqref{eqn:diff-in-rcs} in Appendix~\ref{app:eval-metrics}).
Values near zero indicate that the empirical estimator 
attains nearly optimal worst-case performance
over the convex hull of the population matrices.
Figure~\ref{fig:sim_finite_sample} (left) illustrates behavior 
in line
with Proposition~\ref{prop:consistency-of-guarantees}(i), 
with the distribution of the 
difference
increasingly concentrated around zero as $n$ increases.
\begin{figure}[t]
    \centering
    \includegraphics{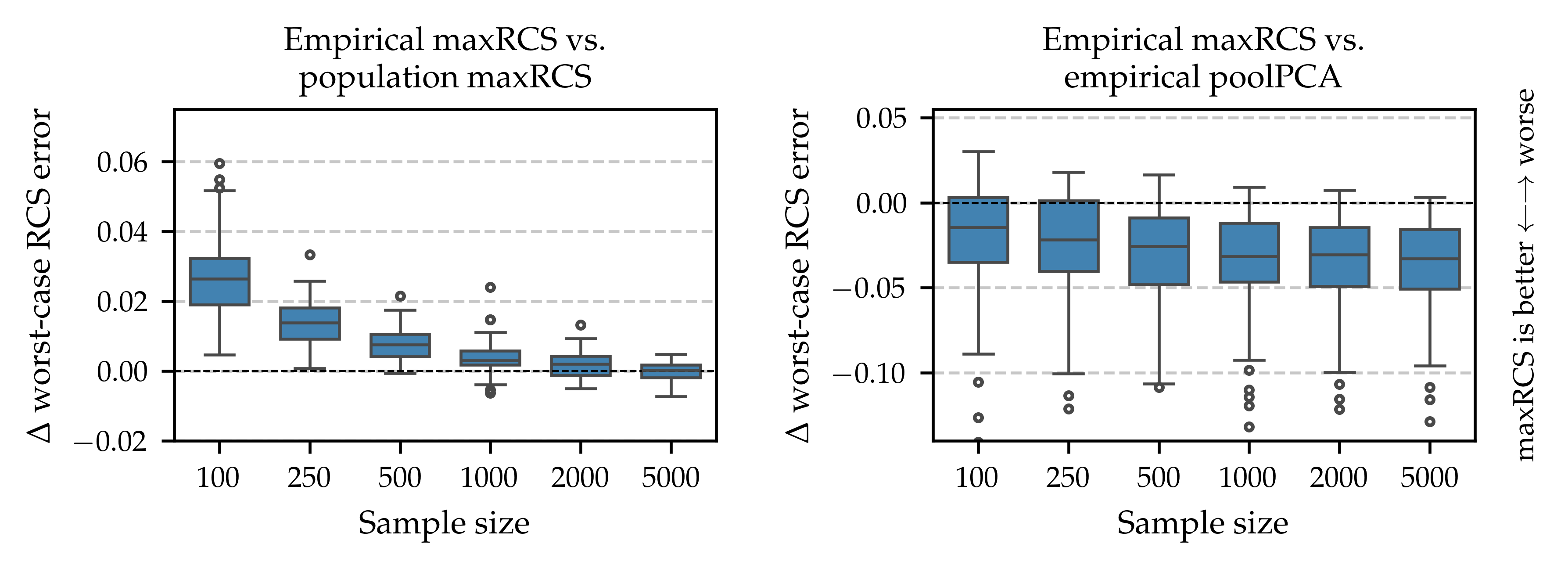}
    \caption{
    \textit{Finite-sample performance of \maxRCS; 
    see Section~\ref{sec:sims-finite-sample}}.
    \textit{Left}: Difference in worst-case 
    loss (reconstruction error) 
    over the convex hull of the population covariance matrices
    between empirical and population \maxRCS.
    \textit{Right}: 
    Analogous difference between 
    between empirical \maxRCS and empirical \poolPCA.
    Negative values indicate that \maxRCS attains lower worst-case error than \poolPCA.
    Across sample sizes, \maxRCS 
    outperforms \poolPCA.
    }
    \label{fig:sim_finite_sample}
\end{figure}

In the same way, we also compare 
empirical \maxRCS and empirical \poolPCA (for details, see equation~\eqref{eqn:rel-error-fs} in Appendix~\ref{app:eval-metrics}).
Figure~\ref{fig:sim_finite_sample} (right) shows that \maxRCS 
outperforms \poolPCA across sample sizes, showing that the practical benefits of \maxRCS 
may
emerge even at modest sample sizes.

\subsubsection{Using regret as a proxy for reconstruction error under heterogeneous noise}\label{sec:sims-noise}
This experiment shows that, under heterogeneous noise across domains, worst-case regret may outperform worst-case reconstruction- or variance-based objectives, even when evaluated using reconstruction error, consistent with the discussion in Section~\ref{sec:discussion-of-objectives}.
We 
again
sample five source covariances $\Sigma_1,\ldots,\Sigma_5$, as described in Appendix~\ref{app:datagen}, 
using the variant in which the
domain-specific 
eigenvalues
differ across domains.
Then,
for all domains $e\in\E$, we draw a noise level $\sigma_e \sim U([0,0.1])$ and sample $n=2000$ 
i.i.d.\ training observations 
according to
\begin{equation*}
    \mathbf{z}_e = \mathbf{x}_e + \sigma_e \boldsymbol{\varepsilon}_e, 
    \quad 
    \mathbf{x}_e \sim \mathcal{N}(0,\Sigma_e),\ 
    \boldsymbol{\varepsilon}_e \sim \mathcal{N}(0,I_p);
\end{equation*}
thus,
the population covariance is $\Sigma_e + \sigma_e^2 I_p$.
We solve rank-$10$ and rank-$5$ \maxRCS and \maxregret\footnote{
    As the domain-specific eigenvalues vary between domains, the solutions to \maxRCS and \maxregret no longer necessarily coincide. 
    However, all covariances are trace-normalized, so the solutions to \minPCA, \maxRCS, and their normalized variants all coincide, and we report results only for \maxRCS. 
    Likewise, the solutions to \maxregret and \normmaxregret coincide, and we report results only for \maxregret.
} 
using the empirical covariances, and evaluate both methods  in terms of worst-case reconstruction error on $n$ independent observations from $\mathcal{N}(0,\Sigma_e)$ without additional heterogeneous noise. 
The experiment is repeated 25 times and results are shown in Figure~\ref{fig:sim_het_noise}.

When the approximation rank $k$ matches the rank of the covariances (rank $10$), the solutions of \maxregret and \maxRCS coincide in the noiseless population case 
and the population solution of \maxregret is insensitive to heterogeneous noise 
(see Section~\ref{sec:discussion-of-objectives} for both statements)
For finitely many observations, these equalities do not hold exactly. 
Nevertheless, Figure~\ref{fig:sim_het_noise} (left) 
\begin{figure}[t]
    \centering
    \includegraphics[scale=1]{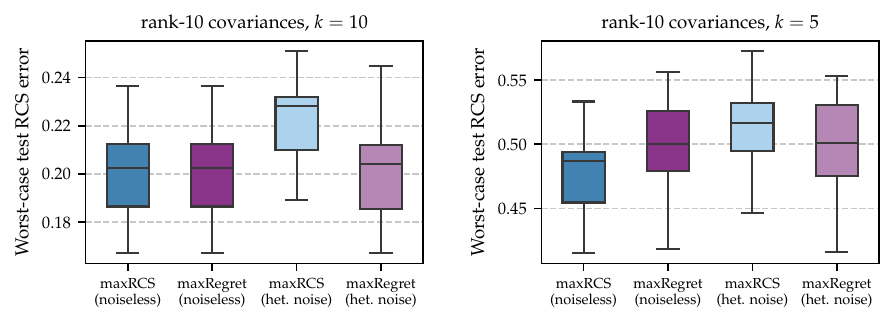}
    \caption{
        \textit{Regret 
        may be
        preferable under heterogeneous noise; 
        see Section~\ref{sec:sims-noise}.}
        \textit{Left}: As suggested by the population theory, solutions to \maxRCS and \maxregret are almost identical in the noiseless case. With heterogeneous noise, \maxregret behaves similarly as if the data were noiseless. 
        \textit{Right}: When $k$ is strictly smaller than the rank of the covariance matrices, 
        solutions to \maxRCS and \maxregret differ in the noiseless case.
        Nevertheless,
        under heterogeneous noise, \maxregret can still be preferable over \maxRCS even if the evaluation criteria is reconstruction error.
    }
    \label{fig:sim_het_noise}
\end{figure}
illustrates empirical behavior consistent with the
population theory: in the noiseless case, \maxRCS and \maxregret yield similar worst-case reconstruction errors, and under heterogeneous noise, \maxregret 
approximately recovers the solution obtained in the noiseless case,
whereas \maxRCS incurs increased worst-case reconstruction error.
When using a lower-rank approximation ($k=5$), 
the population solution of \maxregret is still unaffected by heterogeneous noise (see Section~\ref{sec:discussion-of-objectives}), but those of
\maxregret and \maxRCS no longer coincide
(see, e.g., Example~\ref{example1}).
Figure~\ref{fig:sim_het_noise} (right) shows similar empirical results: 
in the noiseless case, \maxRCS and \maxregret differ, and under heterogeneous noise, \maxregret exhibits similar behavior while 
\maxRCS incurs higher worst-case reconstruction error.
These results indicate that regret-based objectives may be preferable under heterogeneous noise, even when the goal is robust reconstruction.

\subsection{Worst-case inductive matrix completion}\label{sec:sims-maxmc}
We study worst-case matrix completion in two regimes: 
(i) fully observed source domains (the setting of Theorem~\ref{thm:mc}), and 
(ii) partially observed source domains. 
Our simulations show that \maxMC generally reduces worst-case reconstruction error relative to \poolMC, including when source domains are partially observed and missingness exceeds the theoretical bound.

As in Section~\ref{sec:sims-avg-vs-wc}, 
we vary domain heterogeneity, which is controlled by $(\alpha,\beta)$.
For each configuration, we repeat the following steps 25 times. 
We sample five rank-$10$ source covariances $(\Sigma_1,\ldots,\Sigma_5)\sim\mathbb{Q}_{\alpha,\beta}$ with dimension $p=500$ and, for all domains $1\le e \le 5$, we draw $n=1000$ observations from $\mathcal N (0, \Sigma_e)$. 
In regime (i), source data are fully observed; in regime (ii), 90\% of entries in each source row are selected uniformly at random
and masked. 
In each regime, we learn shared right factors via \poolMC and \maxMC.
For evaluation, we draw 1000 independent test observations per domain and mask 90\% of entries uniformly at random.
Reconstruction is performed using the least-squares rule of Section~\ref{sec:inductive-mc}. 
Performance is measured by per-entry mean squared reconstruction error, 
either
averaged across domains 
or
in the worst-case domain (see Appendix~\ref{app:eval-metrics} 
for details).

Figure~\ref{fig:maxmc_all} shows the results of the more difficult regime~(ii);
\begin{figure}
    \centering
    \includegraphics[scale=1]{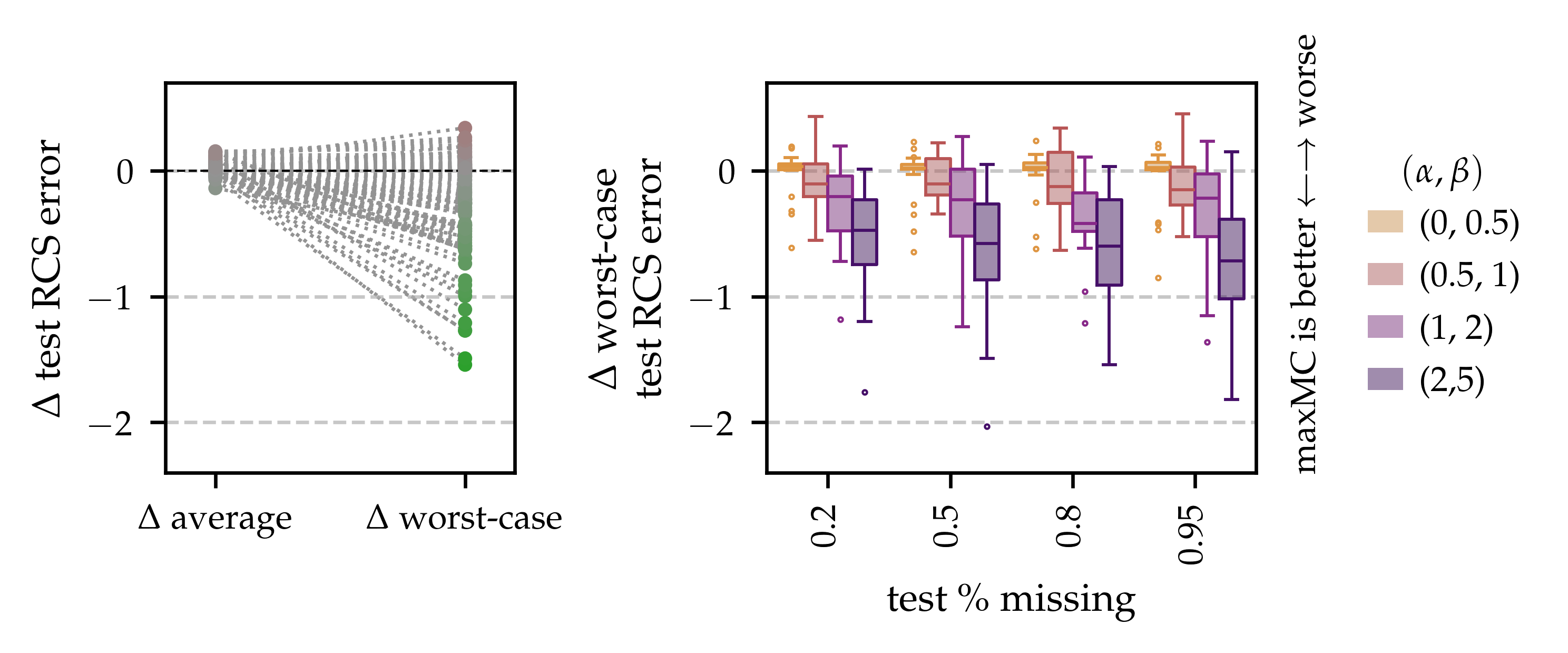}
    \caption{
        \textit{Worst-case inductive matrix completion with partially observed source domains (90\% missing entries; 
        regime (ii)).}
        \textit{Left:} Difference in average and worst-case test reconstruction error between \maxMC and \poolMC. 
        In line with Theorem~\ref{thm:mc} (which covers the population case and allows for some $\epsilon$ deviation),
        \maxMC 
        generally
        improves worst-case performance; 
        in terms of average error, it incurs only minor changes.
        \textit{Right:} Worst-case difference as the proportion of missing target entries varies, for different heterogeneity levels $(\alpha,\beta)$.  \maxMC achieves lower median worst-case error than \poolMC, with larger gains under stronger domain heterogeneity.  
    }
    \label{fig:maxmc_all}
\end{figure}
the left panel
shows that \maxMC generally reduces worst-case target reconstruction error relative to pooling while incurring only small changes in average performance. 
The right panel varies the proportion of missing target entries and shows that, under high domain heterogeneity, the worst-case advantage of \maxMC persists even when substantially more entries are missing than 
specified
by Assumption~\ref{ass:sufficient-obs}. Under low heterogeneity, performance is comparable to \poolMC, reflecting the reduced variation across domains.
Results for the simpler regime (i) are qualitatively similar and are shown in Figure~\ref{fig:maxmc_all_observed_sources} in Appendix~\ref{app:sims-mc}.

\section{Applications}\label{sec:applications}
\subsection{Explaining variance in held-out FLUXNET regions}\label{sec:appl:fluxnet}
We illustrate the benefits of worst-case PCA objectives on FLUXNET data \citep{pastorello2020fluxnet2015, baldocchi2008breathing}. 
These data contain measurements of variables related to biosphere and atmosphere interactions obtained at different places on the Earth (see Figure~\ref{fig:appl-fluxnet} left). More precisely
the variables comprise gross primary productivity (GPP), meteorological variables (air temperature, vapor pressure deficit, and day and night land surface temperature), and remote-sensing indices (EVI, NDVI, NDWI, LAI, NIRv, and fPAR). We group the data into TransCom regions 
\citep{gurney2004transcom}
as domains (see Figure~\ref{fig:appl-fluxnet}, left), which spans heterogeneous climates and measurement conditions.
The data pre-processing is described in Appendix~\ref{app:appl:fluxnet}. 
Based on scree plots and the Kaiser criterion (Appendix~\ref{app:appl:fluxnet}, Figure~\ref{fig:scree_plots}), we consider rank-$2$ approximations to the data. 

We perform 20 random splits of the regions into five source regions and eight target regions. On each split, we solve rank-$2$ \poolPCA, \maxregret, \normmaxregret, and \normminPCA on the source regions. 
This setting extends the illustrative example in Section~\ref{sec:intro}, where we compared \poolPCA and \normmaxregret on a single split of the TransCom regions.
We use the normalized and regret-based variants because the total variance in each domain varies 
(see Section~\ref{sec:discussion-of-objectives}).
For comparison, we solve rank-$2$ PCA on the average empirical covariance ($\frac{1}{|\E|} \sum_{e\in\E}\hat\Sigma_e $), denoted \emph{\avgcovPCA}, which removes the effect of unequal sample sizes across regions; 
to the best of our knowledge, it does not come with any extrapolation guarantee.

Figure~\ref{fig:appl-fluxnet} (right) reports the relative difference in worst-case proportion of explained variance between \poolPCA and the other methods over the target regions (results for additional components are reported in Appendix~\ref{app:appl:fluxnet}, Figure~\ref{fig:fn_reps}.). 
\begin{figure}
    \hfill
    \begin{minipage}[t]{0.5\linewidth}
        \vspace{0pt}
        \centering
        \includegraphics[scale=1]{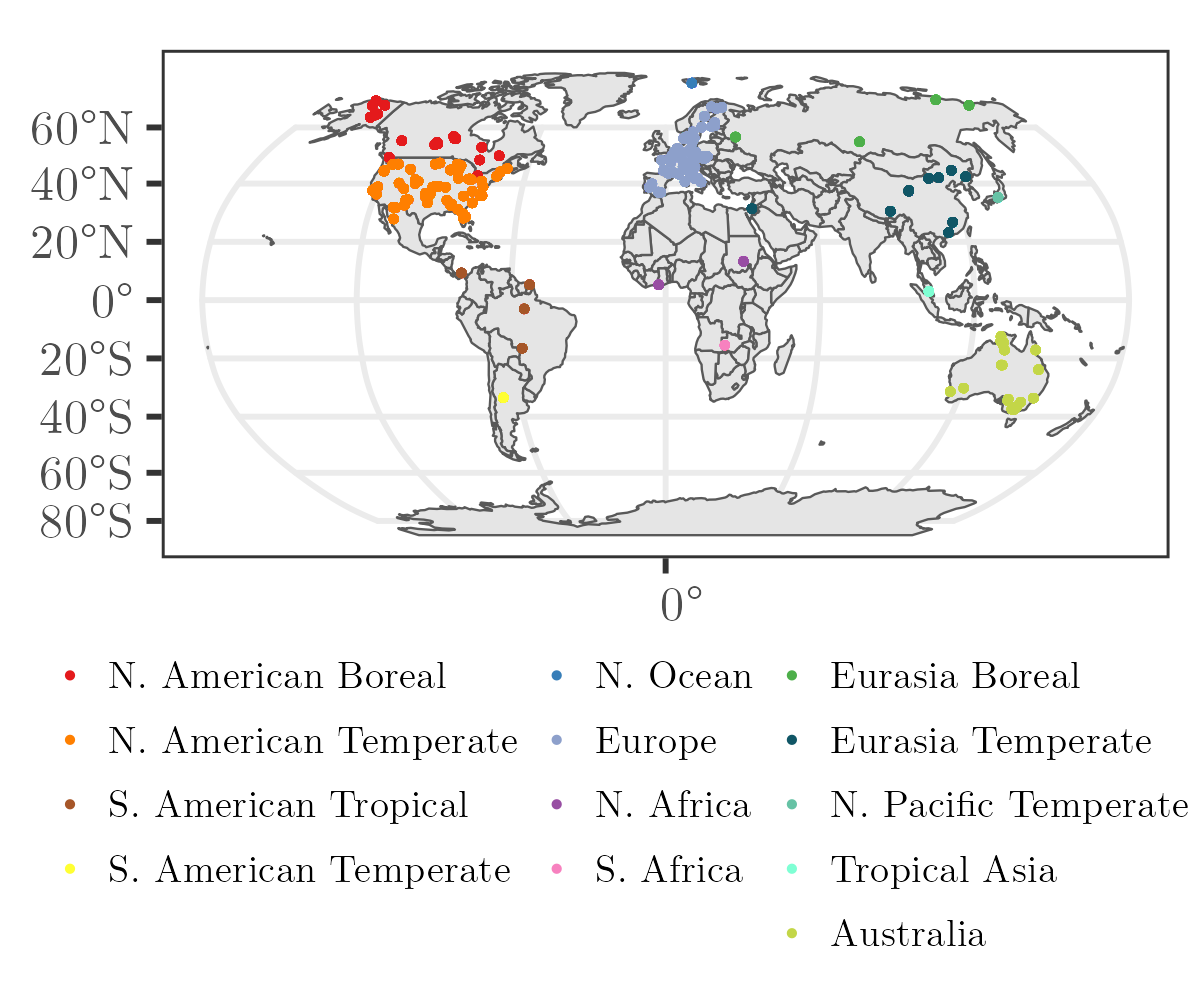}
    \end{minipage}
    \hfill
    \begin{minipage}[t]{0.45\linewidth}
        \vspace{0pt} 
        \centering
        \includegraphics[scale=1]{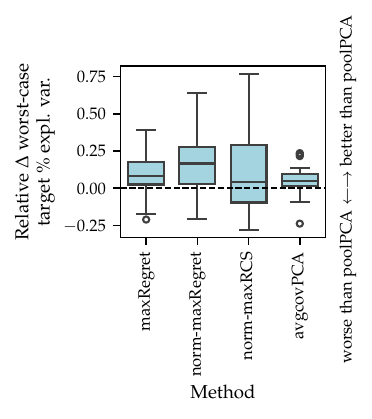}
    \end{minipage}
    \hfill
  \caption{
    \textit{Left}: 
        Map of FLUXNET sites, colored by TransCom region. 
    \textit{Right}: 
        Relative difference in worst-case proportion of explained variance between \poolPCA and \maxregret, \normmaxregret, \normmaxRCS, and \avgcovPCA, evaluated on held-out target regions over 20 random train–test splits. 
        Positive values indicate improved worst-case performance relative to pooled PCA.
        The worst-case objectives generally improve robustness, with \normmaxregret achieving the strongest
        gains in 
        median
        worst-case proportion of explained variance.
    }
  \label{fig:appl-fluxnet}
\end{figure}
Positive values indicate improved worst-case performance relative to \poolPCA. 
All alternatives to \poolPCA improve worst-case performance on the target regions, including PCA on the average covariance. However, the worst-case objectives typically perform best: \normmaxregret achieves the strongest gains and improves worst-case explained variance in 16 out of 20 splits (p-value of a binomial test equals 
ca.\ 0.01) 
and a median improvement of 0.07 in absolute worst-case proportion of explained variance.

\subsection{Reanalysis of the three major axes of terrestrial ecosystem function}\label{sec:appl:ecosys}
We 
now revisit part of the
analysis 
conducted by
\citet{migliavacca2021three}.
The authors aggregate
FLUXNET data 
over time 
to derive site-level functional ecosystem properties
related
to carbon uptake and release, energy and water exchange, and growing-season water-use efficiency (see Table~\ref{tab:ecosys_traits} in Appendix~\ref{app:appl:ecosys}). 
Motivated,
for example,
by the 
task of predicting
ecosystem responses to climatic changes, the authors sought a low-dimensional representation of ecosystem functioning, defining three 
axes of variation of terrestrial ecosystem function
via PCA on the pooled dataset. 
Based on the loadings, these three axes are then interpreted to represent 
``maximum ecosystem productivity'', ``ecosystem water-use strategies'', and
``ecosystem carbon-use efficiency'' \citep{migliavacca2021three}.

Arguably, such interpretations should be consistent over different contintents.
We
thus
evaluate the robustness of the 
three
axes, 
when treating
continents as distinct domains. We compare 
\poolPCA (which corresponds\footnote{Here, we pre-process the data similarly to Section~\ref{sec:appl:fluxnet}, which 
deviates slightly from the procedure used by \citet{migliavacca2021three}.
The differences are minor: for example, 
the loadings from \poolPCA shown in Figure~\ref{fig:appl_ecosys} are almost identical to those presented by \citet{migliavacca2021three}.
All details on pre-processing are provided in Appendix~\ref{app:appl:ecosys}.}
to the analysis by
\citet{migliavacca2021three})
against \normmaxRCS and \normmaxregret, which explicitly optimize for worst-case performance across domains. 
(We use the normalized variants because total variance differs across continents, see Section~\ref{sec:discussion-of-objectives}; 
for completeness, results for PCA based on the average empirical covariance and for \maxregret are reported in Appendix~\ref{app:appl:ecosys}.)

Figure~\ref{fig:appl_ecosys} compares the pooled and worst-case solutions. 
\begin{figure}[t]
    \centering
    \includegraphics[scale=1]{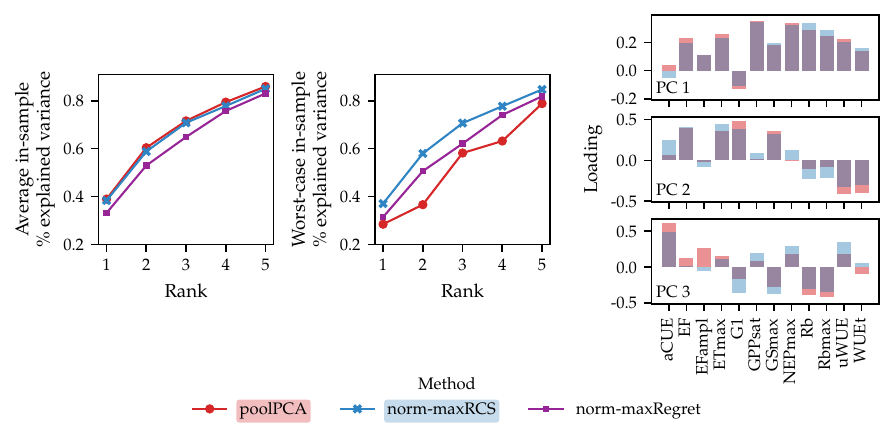}
    \caption{
        Comparison of dimensionality reduction methods on ecosystem function data across continents. 
        \textit{Left}: Proportion of variance explained on the pooled dataset.
        \textit{Centre}: Worst-case proportion of variance explained across continents. 
        \textit{Right}: Loadings of the top three principal components for \poolPCA and \normmaxRCS.
        \normmaxRCS improves worst-case proportion of explained variance with negligible loss in average performance, while largely retaining the physical interpretability of the principal axes.
    }
    \label{fig:appl_ecosys}
\end{figure}  
For \poolPCA, we observe a substantial gap between the average in-sample proportion of explained variance and the worst-case performance across continents.
In contrast, \normmaxRCS achieves nearly identical average proportion of explained variance while substantially improving worst-case performance. \normmaxregret shows a similar, though smaller, improvement.

We further compare the loadings of \poolPCA and \normmaxRCS (Figure~\ref{fig:appl_ecosys}, right), providing a diagnostic for whether the identified axes reflect domain-invariant ecological structure
(Appendix~\ref{app:ordering-basis} details the construction of the ordered worst-case basis).
The leading axis remains largely stable, reflecting maximum ecosystem productivity and its coupling with ecosystem respiration. 
The second axis 
may still be interpreted
as a water-use strategy gradient dominated by the same six
metrics,
however, \normmaxRCS reduces the influence of the water use efficiency (WUE) loadings in favor of the remaining properties. 
The third axis, 
originally interpreted as
carbon-use efficiency 
(aCUE, Rb, Rb$_{\max}$, EF$_\mathrm{ampl}$), 
shifts most prominently:
EF$_\mathrm{ampl}$ is effectively removed while G1 and GS$_{\max}$ are upweighted, indicating a stronger contribution of water-regulation traits.
In our view, the stability of the first two axes under worst-case optimization strengthens the interpretation of these axes as fundamental ecological gradients; 
our results also suggest that the interpretation of the third axis may warrant further investigation.

\section{Discussion}\label{sec:discussion}
This paper studies worst-case formulations of low-rank approximation across multiple domains.
Considering the worst-case instead of average-case can substantially improve performance in new target domains.
The key
theoretical guarantee 
is out-of-sample worst-case optimality over the convex hull of the source covariances (or of their normalized versions). 
Empirically, these objectives improve worst-case performance with only modest losses on average, 
as shown both on simulated data and case studies using real-world FLUXNET data.

The different worst-case objectives are not interchangeable.
Normalized objectives mitigate sensitivity to scale differences by measuring proportional explained variance (or normalized reconstruction error), but unit comparability across domains is lost.
Because worst-case objectives depend on `extreme' domains, noisy or small domains can affect their solutions; the extent of this effect depends on the chosen objective. 
Normalization and the use of the regret can mitigate these effects to some extent.
In particular, regret-based objectives 
evaluate performance relative to each domains'
optimal subspace and are robust to heterogeneous noise. 
On real data,
we found that regret performs best even when the objective is explained variance or reconstruction error.

As in classical PCA, pre-processing choices (such as centering, scaling, standardization) matter. These choices can change which domain is worst-case and therefore change the solution. 
An additional choice is whether to form the pooled covariance from pooled data or to average the domain covariances (these coincide when domain sample sizes are equal). While averaging can reduce the influence of large domains, as far as we know, it does not come with 
comparable
out-of-sample guarantees.

When data are only partially observed, low-rank approximation 
can be used to tackle
matrix completion. 
In the fully observed training regime, our convex-hull robustness guarantee transfers to inductive completion (up to an approximation factor). When source domains are also partially observed, \maxMC remains empirically effective but our theory does not yet cover this setting; extending the out-of-sample guarantees to jointly handle domain heterogeneity and missingness in the source domains is left to future work. An additional direction is to allow domain-dependent observation mechanisms 
and to improve the bounds we establish

We regard the following additional extensions to be
particularly promising.
First, worst-case objectives can be used for nonlinear representation learning (e.g., autoencoders) by replacing the reconstruction loss with an appropriate domain-wise worst-case criterion.
Second, robustness to distribution shift can be combined with robustness to corruption and outliers by integrating worst-case domain objectives with robust 
(in the sense of robustness against outliers)
PCA-type modeling.
Third, it is natural to consider interpolations between the pooled and worst-case objectives that trade average and worst-case performance, for instance via convex combinations or regularized formulations.
Solutions to these objectives may provide smoother behavior when the worst-case domain is small or noisy, without discarding the robustness perspective.
Fourth, when taking the diagnostics perspective, see Section~\ref{sec:discussion-of-objectives}, in settings where domains are ordered (e.g., through time), tracking domain-wise losses can be used to detect distribution shifts; for worst-case methods, a sharp degradation may additionally indicate that the target domain has moved outside the uncertainty set implied by the sources (e.g., beyond the convex-hull condition).
Fifth, 
our guarantees formalize robustness of reconstruction criteria, not invariance or stability: for example, a direction shared across domains need not be selected if it explains little variance in each domain. Investigating notions closer to invariance
seems a promising route with potential connections to causality. 

\acks{We thank
    Jacob Nelson for facilitating access to the FLUXNET data and preparing it for our use, and
    Maybritt Schillinger for many helpful 
    discussions. 
    We also thank Francesco Freni for 
    helpful discussions
    on optimization and Linus K\"uhne for ideas related to Section~\ref{sec:sims-noise}. 
    We have used LLMs for debugging code and improving language formulations.
} 

\bibliography{bib}

\appendix
\numberwithin{figure}{section}
\renewcommand{\thefigure}{\thesection.\roman{figure}}
\clearpage

\startcontents[appendices]
\section*{Contents of the Appendix}
\printcontents[appendices] 
  {l}                      
  {1}                      
  {\setcounter{tocdepth}{1}} 
\vspace{5mm}
\section{Additional details on \wcPCA}\label{appendix:add-results}
\subsection{Sequential versus joint optimization for \minPCA}\label{app:seq-joint}
This section provides a formal version of Proposition~\ref{prop:seq-ne-joint-informal}. 
In particular, we show that the sequential extension of PCA to the \minPCA objective can fail to recover the solution of the corresponding joint optimization problem.
\begin{proposition}\label{prop:seq-ne-joint}
    Consider Setting \ref{setting:set-up}. Define $\Vseqminpcak:= [v_1, \ldots, v_k] \in \Rpk$ as
    $k$ orthonormal vectors such that 
    \begin{align*}
        v_1 &\in \argmax_{v\in\R^p, \|v\| = 1} \min_{e\in\E} \mathcal{L}_\mathrm{var}(v;P_e) \\
        v_j &\in \argmax_{\substack{v\in\R^p, \|v\| = 1, \\ \forall i < j, v \perp v_i }} 
             \min_{e\in\E} \mathcal{L}_\mathrm{var}([v_1,\ldots,v_{j-1},v]; P_e) \qquad \textrm{for}\ 2\le j \le k.
    \end{align*}
    Then, in general, $\Vseqminpcak$ does not solve \minPCA.
\end{proposition}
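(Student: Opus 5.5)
We prove the statement by an explicit counterexample, and the two domains of Example~\ref{example1} already work with $p=3$ and $k=2$. Take $\Sigma_1=\diag(0.9,0.1,0)$ and $\Sigma_2=\diag(0,0.4,0.6)$, so $\Tr(\Sigma_1)=\Tr(\Sigma_2)=1$. The plan is to compute the best worst-case explained variance reachable by the sequential construction, compute (a lower bound on) the joint \minPCA optimum, and show the first is strictly smaller.

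\emph{Step 1: the first sequential direction.} For a unit vector $v$, write $x=v_1^2$, $y=v_2^2$, $z=v_3^2$. Since the covariances are diagonal,
\[
\mathcal{L}_\mathrm{var}(v;P_1)=0.9x+0.1y, \qquad \mathcal{L}_\mathrm{var}(v;P_2)=0.4y+0.6z .
\]
So $v_1$ solves a small linear program over the simplex: maximize $\min(0.9x+0.1y,\ 0.4y+0.6z)$.
\begin{itemize}
\item Putting weight on $y$ is wasteful. Moving mass from $y$ to $x$ and $z$ in the ratio $2{:}3$ changes each objective by a strictly positive amount ($0.36-0.1>0$ and $0.36-0.4<0$ must be checked jointly, so this needs a short argument).
\item I would verify the exact optimum by checking the KKT/vertex conditions of the LP.
\end{itemize}
This gives $y=0$, $x=2/5$, $z=3/5$, with value $0.36$, matching Example~\ref{example1}. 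Up to sign, $v_1=(\sqrt2,0,\sqrt3)^\top/\sqrt5$ is then the unique maximizer, so the sequential procedure has no freedom at this step.

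\emph{Step 2: the second sequential direction.} The orthogonal complement of $v_1$ is spanned by $e_2$ and $u:=(\sqrt3,0,-\sqrt2)^\top/\sqrt5$.
\begin{itemize}
\item The cross terms vanish: $u^\top\Sigma_e e_2=0$ for both domains, and likewise $v_1^\top\Sigma_e u$ enters nothing because the trace is additive over the orthonormal columns.
\item Hence $v=ae_2+bu$ with $a^2+b^2=1$ contributes $0.54b^2+0.1a^2$ in domain 1 and $0.24b^2+0.4a^2$ in domain 2.
\item The minimum of the two is maximized at $a^2=b^2=1/2$, giving $0.32$ in each domain.
\end{itemize}
Every admissible choice of $v_2$ therefore yields worst-case explained variance at most $0.36+0.32=0.68$.

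\emph{Step 3: the joint problem.} Since $p=3$ and $k=2$, $\Tr(V^\top\Sigma_e V)=1-w^\top\Sigma_e w$, where $w$ is the unit normal to $\mathrm{span}(V)$. Joint \minPCA thus amounts to minimizing $\max(0.9x+0.1y,\ 0.4y+0.6z)$ over the simplex.
\begin{itemize}
\item Take $z=0$, $x=1/4$, $y=3/4$, i.e.\ $w=(1/2,\sqrt3/2,0)^\top$. Both terms then equal $0.3$.
\item The corresponding $V$ achieves worst-case explained variance $0.7>0.68$.
\end{itemize}
It follows that no output of the sequential procedure solves \minPCA, which proves the proposition.

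The main obstacle is making Step 1 rigorous, namely uniqueness of $v_1$ up to sign. I would handle it by the elementary LP argument on $(x,y,z)$: at the optimum both objectives must be equal, and along the line of equality the objective is linear and maximized at $y=0$. Step 2 also needs care to cover all $v\perp v_1$, but the vanishing cross terms reduce it to the one-parameter computation above.
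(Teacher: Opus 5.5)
Your proof is correct, up to two small repairs noted below, and it takes a different route from the paper.

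\textbf{Comparison with the paper.} The paper builds a separate, cyclically symmetric counterexample with three domains in $\R^5$ (the first three diagonal entries of $\tfrac14\diag(2,2,0,1,1)$ permuted across domains). It proceeds in three steps:
\begin{enumerate}
\item A perturbation argument forces $a^2=b^2=c^2$, which fixes the first greedy direction as $(1,1,1,0,0)^\top/\sqrt3$.
\item The sign fact $\min\{b^2+bc,-bc,c^2+bc\}\le 0$ bounds every admissible second direction, giving greedy value at most $7/12$.
\item An explicit joint solution attains $5/8$.
\end{enumerate}
You instead reuse Example~\ref{example1} with $k=2$. Diagonality turns both greedy steps into max--min problems of two affine functions on the simplex of squared coordinates. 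For $p=3$, $k=2$, the identity $\Tr(V^\top\Sigma_e V)=\Tr(\Sigma_e)-w^\top\Sigma_e w$ reduces the joint rank-$2$ problem to a rank-$1$ min--max problem, so the joint problem becomes as easy as the first greedy step. Its optimum is in fact exactly $0.7$.

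Your numbers check out. Every admissible $v_2$ gives greedy value at most $0.36+0.32=0.68$, while $w=(1/2,\sqrt3/2,0)^\top$ gives $0.7$. Your example is smaller and fully elementary, and it shows the failure already in the paper's running example. The paper's example gives a somewhat larger gap: at least $1/24$, versus your $0.02$. Since $\Tr(\Sigma_1)=\Tr(\Sigma_2)=1$, your example also settles the normalized version, as the paper's does (its traces all equal $3/2$).

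\textbf{First repair: $v_1$ is not unique up to sign.} The objective depends only on $(v_1^2,v_2^2,v_3^2)$, so $(\sqrt2,0,-\sqrt3)^\top/\sqrt5$ is also a maximizer, and it is not $\pm(\sqrt2,0,\sqrt3)^\top/\sqrt5$. This is harmless: $D:=\diag(1,1,-1)$ satisfies $D\Sigma_e D=\Sigma_e$. Equivalently, redoing Step 2 with $u'=(\sqrt3,0,\sqrt2)^\top/\sqrt5$ gives the same contributions $0.1a^2+0.54b^2$ and $0.4a^2+0.24b^2$. Still, replace ``no freedom at this step'' with this symmetry remark, so that Step 2 genuinely covers every sequential output.

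\textbf{Second repair: the first bullet of Step 1 is wrong as written.} Moving mass $\delta$ from $y$ to $(x,z)$ in ratio $2{:}3$ changes the domain-2 objective by $(0.36-0.4)\delta<0$. Drop that bullet and use your closing argument instead, adding a one-line justification that any maximizer lies on $\{f_1=f_2\}$:
\begin{enumerate}
\item If $f_1<f_2$ at some point, that point is not the vertex $x=1$ (there $f_1=0.9>0=f_2$). Moving slightly toward that vertex strictly increases $f_1$ while keeping $f_1<f_2$, so the point is not a maximizer.
\item If $f_2<f_1$, the same argument works with the vertex $z=1$.
\item On the equality line, $f_1=f_2=0.36-0.08y$ for $y\in[0,3/4]$, which is uniquely maximized at $y=0$.
\end{enumerate}
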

\begin{proposition}[Normalized case]
The statement of
Proposition~\ref{prop:seq-ne-joint} holds for the normalized objective, that is, when $\mathcal{L}_\mathrm{var}$ is replaced by $\mathcal{L}_\mathrm{normVar}$.
\end{proposition}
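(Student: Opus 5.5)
The plan is to reduce the normalized statement to the unnormalized one by using source covariances with equal trace, and then to exhibit an explicit counterexample. If $\Tr(\Sigma_e)=c$ for all $e\in\E$, then $\mathcal{L}_\mathrm{normVar}(V;P_e)=\mathcal{L}_\mathrm{var}(V;P_e)/c$ for every $V$. Hence both the sequential and the joint problems have the same solution sets under $\mathcal{L}_\mathrm{normVar}$ as under $\mathcal{L}_\mathrm{var}$. It therefore suffices to find trace-one covariances for which sequential and joint \minPCA differ. I will reuse Example~\ref{example1}, with $\Sigma_1=\diag(0.9,0.1,0)$ and $\Sigma_2=\diag(0,0.4,0.6)$, both of trace $1$, and take $k=2$.

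\textbf{Sequential solution.} By Example~\ref{example1}, $v_1=\frac{1}{\sqrt5}(\sqrt2,0,\sqrt3)^\top$ is the rank-1 \minPCA solution, with value $0.36$ in both domains. (I would quickly re-verify this: writing $v=(a,b,c)$, one maximizes $\min(0.9a^2+0.1b^2,\,0.4b^2+0.6c^2)$ over the simplex of squared coordinates.) The orthogonal complement of $v_1$ is spanned by $e_2$ and $u=\frac{1}{\sqrt5}(\sqrt3,0,-\sqrt2)^\top$. For $v_2=\alpha e_2+\beta u$ with $\alpha^2+\beta^2=1$, the cross terms vanish because the $\Sigma_e$ are diagonal and $e_2$ is orthogonal to the support of $u$. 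The two domain values are therefore $0.46+0.44\beta^2$ and $0.76-0.16\beta^2$. Their minimum is maximized at $\beta^2=1/2$, which gives the sequential value $0.68$.

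\textbf{Joint solution.} For $p=3$ and $k=2$, a rank-2 subspace is the orthogonal complement of a unit vector $w$. Since $\Tr(\Sigma_e)=1$, we have $\Tr(V^\top\Sigma_eV)=1-w^\top\Sigma_ew$, so the joint problem is $\min_w\max_e w^\top\Sigma_e w$. With $(\alpha,\beta,\gamma)=(w_1^2,w_2^2,w_3^2)$ in the simplex, this becomes a small linear program. Its value is $0.3$, certified by the dual weights $(1/3,2/3)$, which give coefficients $(0.3,0.3,0.4)$; it is attained at $\gamma=0$, $\alpha=1/4$, $\beta=3/4$, where both domains equal $0.3$. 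The joint optimum is therefore $0.7>0.68$, so $\Vseqminpcak$ does not solve \normminPCA.

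\textbf{Remaining care.} The sequential argmax at step 1 must be unique, or at least every maximizer must lead to a suboptimal value. This follows because the rank-1 problem in squared coordinates has a unique optimum, $(2/5,0,3/5)$, so $v_1$ is unique up to sign flips of coordinates, and these do not affect the values above. This uniqueness check, together with confirming the LP optimum, is the only step needing attention; the rest is arithmetic.
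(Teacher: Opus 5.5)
Your proof is correct. Its skeleton is the same as the paper's. The paper also handles the normalized case with a counterexample whose source covariances share a common trace: its matrices $\frac14\diag(2,2,0,1,1)$, $\frac14\diag(2,0,2,1,1)$, $\frac14\diag(0,2,2,1,1)$ each have trace $3/2$. Hence $\mathcal{L}_\mathrm{normVar}$ is a positive multiple of $\mathcal{L}_\mathrm{var}$, and both the sequential and the joint argmax sets are unchanged. The paper leaves this reduction implicit (``the same counter-example can be used to prove both statements''); you state it, which is the cleaner way to present it.

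Where you differ is the counterexample. The paper uses $E=3$, $p=5$, $k=2$. It determines $v_1=\frac{1}{\sqrt3}(1,1,1,0,0)$ by a symmetry and perturbation argument, and bounds the step-2 value by $7/12$ for every $v_2\perp v_1$. It then exhibits a feasible pair with value $5/8$; it never computes the joint optimum.

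You reuse Example~\ref{example1} with $k=2=p-1$, which lets you compute everything exactly:
\begin{itemize}
\item the rank-one optimum and its uniqueness up to coordinate signs, via the linear program in squared coordinates;
\item the sequential value $0.68$;
\item the joint optimum $0.7$, via $\Tr(V^\top\Sigma_eV)=1-w^\top\Sigma_ew$ and the dual certificate $(1/3,2/3)$.
\end{itemize}
Your version is smaller, reuses an example already in the text, and explicitly ensures that every admissible sequential output fails. Computing the joint optimum exactly is more than needed, since any feasible $V$ with value above $0.68$ suffices. The complement trick is specific to $k=p-1$, whereas the paper's example shows the phenomenon with $k<p-1$; for an existence statement like this, either counterexample is enough.
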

\begin{proof}
The same counter-example can be used to prove both statements; we present it in terms of \minPCA.
Consider three domains $\E = \{1,2,3\}$ with $p=5$ and the following diagonal population covariances:
\begin{align*}
\Sigma_1 := \tfrac{1}{4}\,\mathrm{diag}(2,2,0,1,1),\quad
\Sigma_2 := \tfrac{1}{4}\,\mathrm{diag}(2,0,2,1,1),\quad
\Sigma_3 := \tfrac{1}{4}\,\mathrm{diag}(0,2,2,1,1).
\end{align*}
We find 
a
rank-2 solution of the sequential procedure and then show that it is not optimal for the joint objective. 

Step 1:
We determine the first sequential component,
i.e.,
$$v_1^* \in \argmax_{v\in\R^5;\,\|v\|=1} \min_{e\in\E} v^\top \Sigma_e v.$$
Let 
$v_1^*$
$=(a,b,c,d,f) \in \R^5$, 
$\|v_1^*\|=1$.
Then,
$a^2 = b^2 = c^2$. We show this by contradiction.
Since $a, b,$ and $c$ appear symmetrically in the objective, 
i.e., $\min_{e\in\E} (v_1^*)^\top \Sigma_e v_1^* = \tfrac{1}{2}\min(a^2 + b^2, b^2 + c^2, a^2 + c^2) + \tfrac{1}{4}(d^2 + f^2)$, we assume w.l.o.g.\ 
that $a^2\le b^2\le c^2$. 
Then, $\min_{e\in\E} (v_1^*)^\top \Sigma_e v_1^* = \tfrac{1}{2}(a^2 + b^2) + \tfrac{1}{4}(d^2 + f^2) =: m_1^*$.
We consider the two cases that lead to strict inequalities.
First,
if $a^2<b^2$, let 
$\delta := \tfrac{1}{4}(b^2-a^2) > 0$
and let
$\tilde v := (\sqrt{a^2 + \delta}, \sqrt{b^2 - \delta/2}, \sqrt{c^2-\delta/2}, d, f)$. Then, 
$
\min_{e\in\E} \tilde{v}^\top \Sigma_e \tilde{v} =
\tfrac{1}{2}(a^2 + b^2) + \tfrac{1}{4}(d^2 + f^2) + \tfrac{1}{4}\delta 
> m_1^*
$.
Second, if $a^2=b^2$ and $b^2<c^2$, let 
$\delta:=\tfrac{1}{4}(c^2-b^2)$
and let
$ 
\tilde v = (\sqrt{b^2+\delta/2}, \sqrt{b^2+\delta/2}, \allowbreak \sqrt{c^2 - \delta}, d, f)$.
Then,
$
\min_{e\in\E} \tilde{v}^\top \Sigma_e \tilde{v} = 
a^2 + \tfrac{1}{4}(d^2 + f^2) + \tfrac{1}{2}\delta 
> m_1^*
$.
Hence, 
this proves that
$a^2 = b^2 = c^2$.
Then, $\|v_1^*\|=1$ implies $3a^2+d^2+f^2=1$. Hence,
\begin{equation*}
\min_{e\in\E} (v_1^*)^\top\Sigma_e v_1^*
= 0.25 + 0.25a^2.
\end{equation*}
This is maximized 
for
$a^2=1/3$, 
yielding
$v_1^*:=\frac{1}{\sqrt{3}}(1,1,1,0,0)$.

Step 2:
We determine the second sequential component, i.e., 
\begin{equation*}
v_2^* \in \argmax_{\substack{v\in\R^5;\,\|v\|=1;\\ v\perp v_1^*}} \min_{e\in\E} \left\{v^\top\Sigma_e v + (v_1^*)^\top\Sigma_e v_1^* \right\}= \argmax_{\substack{v\in\R^5;\,\|v\|=1;\\ v\perp v_1^*}} \min_{e\in\E} v^\top\Sigma_e v + \frac{1}{3}.
\end{equation*}
Let $v_2^*=(a,b,c,d,f)$. 
$v_2^*\perp v_1^*$ implies $a=-(b+c)$ 
and $\|v_2^*\|=1$ implies $d^2 + f^2 = 1 - a^2 - b^2 - c^2$. Simplifying gives
\begin{equation*}
\min_{e\in\E} \left\{ (v_2^*)^\top\Sigma_e v_2^* + (v_1^*)^\top \Sigma_e v_1^* \right\} = \frac{1}{3} + \frac{1}{4}  + \frac{1}{2} \min_{e\in\E} \{b^2 + bc, -bc, c^2 +bc\}.
\end{equation*}
For all $b,c\in\R$, $\min\{b^2+bc,-bc,c^2+bc\} \leq 0$ with equality 
if and 
only if $bc=0$ or $b=-c$. 
Thus, the sequential solution has explained variance at most $\frac{1}{3}+\frac{1}{4}=\frac{7}{12}$.

Step 3:
We show that the sequential solution is suboptimal for \minPCA.
Consider the rank-2 orthonormal matrix  $\tilde V := [\tilde v_1, \tilde v_2]$ where $\tilde v_1 := \tfrac{1}{\sqrt2}(1,1,0,0,0)$ and $\tilde v_2 := \tfrac{1}{\sqrt2}(0,0,1,1,0)$. For each domain, the total variance equals $\tfrac{5}{8}$, which is strictly larger than$\frac{7}{12}$.
Thus, the sequential solution does not solve \minPCA.
\end{proof}

\subsection{Ordering the basis of \normmaxRCS}\label{app:ordering-basis}
Sequential constructions are, in general, not optimal for the joint worst-case objective (see Proposition~\ref{prop:seq-ne-joint}). 
Nevertheless, in some applications an ordered basis is required. 
We therefore describe a post-hoc procedure that imposes an ordering within the optimal worst-case subspace.
We present the construction for \normminPCA; the same idea can be applied to the other objectives.

Consider Setting~\ref{setting:set-up} and let $V_k^* \in \mathbb{R}^{p \times k}$ solve \normminPCA.
Starting with the full subspace $S_1 := \mathrm{span}(V_k^*)$, we iteratively identify and remove the direction whose removal maximizes the worst-case explained variance of the remaining subspace.
For $i \in \{1,\ldots, k-1\}$, let
\begin{align*}
    &v_i
    \in
    \argmax_{\substack{v \in \mathrm{span}(V_i); \\ \|v\|=1}}
    \,
    \min_{e \in \mathcal{E}}\frac{1}{\Tr(\Sigma_e)}
    \,
    \Tr\left(
    U_{S_i \cap v^\perp}^\top \Sigma_e U_{S_i \cap v^\perp}
    \right),
\end{align*}
where $S_{i+1} := \mathrm{span}(V_k^*) \cap \mathrm{span}(\{v_1, \ldots, v_{i}\})^\perp$ and for any subspace $S \subset \R^p$, $U_S$ denotes an orthonormal basis of $S$.
Reversing the resulting sequence $(v_1,\ldots,v_{k-1})$ and appending the final remaining direction yields $(v_1',\ldots,v_k')$ such that, for any $j \le k$, the span of $(v_1',\ldots,v_j')$ maximizes worst-case explained variance among all $j$-dimensional subspaces contained in $\mathrm{span}(V_k^*)$.

\subsection{Comparison to Group DRO}\label{app:groupdro}
Group DRO \citep{sagawa2019distributionally, hu2018does} is a widely studied framework for robustness to distributional shifts across a finite set of domains. Our guarantees share a similar structure but apply to a broader class of distributions and rely on weaker assumptions.
\begin{remark}
In Group DRO, the goal is to minimize the worst-case loss over a finite set of distributions $ \{P_e\}_{e \in \E} $. 
Losses of the form $\mathcal{L}(V;P) = \Ex_{\mathbf{x}\sim P}[\ell(V;\mathbf{x})]$ are linear with respect to convex combinations of distributions.
The worst-case loss over all mixtures of the source distributions (i.e., the convex hull of the source distributions) is thus attained at one of the source distributions, that is, 
\begin{equation}\label{eqn:group-dro}
    \max_{e \in \E} \mathcal{L}(V;P_e) = \sup_{P \in \mathcal{Q}} \mathcal{L}(V;P) 
    \quad \mathrm{where} \quad
    \mathcal{Q} := \mathrm{conv}(P_1, \ldots, P_E).
\end{equation}
Our results for \maxRCS, and \minPCA apply in this setting and are strictly stronger, establishing worst-case optimality over a strictly larger uncertainty set $\mathcal{P}$ 
(based on covariance matrices rather than distributions, see~\eqref{eqn:mathcalP}).
even a weaker form of the result does not follow directly from Group DRO.
For \normmaxRCS, \normminPCA, and \normmaxregret the loss is a ratio of expectations and thus not linear 
with respect to convex combinations of distributions.
In this case, the standard Group DRO guarantees over the mixture of source distributions no longer hold, and our guarantees instead apply to an uncertainty set defined via constraints on the normalized covariance matrices.
\end{remark}

\subsection{Empirical objectives and finite-sample guarantees}\label{app:finite-sample}
This appendix collects the finite-sample counterparts of the worst-case objectives introduced in the main text that are not presented in Section~\ref{sec:finite-sample}.  
It also states the finite-sample optimality guarantees of the unnormalized objectives, which mirrors the population results (Theorem~\ref{thm:maxrcs-convex-hull}) with population covariances replaced by empirical covariances.
Consider Setting~\ref{setting:set-up-finite-sample}. For $\hat V\in\Ok$, we say 
\begin{flalign*}
    &\hat V \textrm{ solves \emph{rank-$k$ empirical \normminPCA} if }\hat V \in \argmax_{V\in\Ok} \max_{e\in\E} \left\{ \frac{\Tr(V^\top \hat \Sigma_e V)}{\Tr(\hat \Sigma_e)} \right\},&& \\
    &\hat V \textrm{ solves \emph{rank-$k$ empirical \normmaxRCS} if } \hat V\in \argmin_{V\in\Ok} \max_{e\in\E} \left\{ \frac{\|X_e - X_e VV^\top\|^2_F}{\|X_e\|^2_F} \right\},&&
\end{flalign*}
$\hat V$  solves \emph{rank-$k$ empirical \maxregret} if 
$$
\hat V \in \argmin_{V\in\Ok} \max_{e\in\E} \left\{ \|X_e - X_eVV^\top\|^2_F - \left(\min_{W\in\Ok} \|X_e - X_eWW^\top\|^2_F \right)\right\},
$$
and $\hat V$  solves \emph{rank-$k$ empirical \normmaxregret} if 
$$
\hat V \in \argmin_{V\in\Ok} \max_{e\in\E} \left\{ \frac{\|X_e - X_eVV^\top\|^2_F}{\|X_e\|^2_F} - \left(\min_{W\in\Ok} \frac{\|X_e - X_eWW^\top\|^2_F}{\|X_e\|^2_F} \right)\right\}.
$$

\begin{proposition}[Empirical robustness of \wcPCA]\label{prop:maxrcs-convex-hull-emp}  
    Consider Setting~\ref{setting:set-up-finite-sample} and
    let 
        $
        \hat{\mathcal{C}} := \mathrm{conv}(\{\hat\Sigma_e\}_{e \in \E})
    $
        be the convex hull of the empirical source covariances.
    Let $\hat V_k \in \Ok$ solve empirical rank-$k$ \maxRCS,
    \minPCA, or \maxregret.
    Let $\mathcal{L}$ denote the corresponding loss function, i.e., $\mathcal{L}=\mathcal{L}_\mathrm{RCS}$ for \maxRCS, $\mathcal{L}=-\mathcal{L}_\mathrm{var}$ for \minPCA, and $\mathcal{L}=\mathcal{L}_\mathrm{reg}$ for \maxregret.
    Then, the following statements hold.
    \begin{enumerate}[label=\roman*)]
        \item For all $V_k\in\Ok$, the worst-case loss over 
                        $\hat{\mathcal{C}}$
        equals the worst-case loss over the source domains, i.e.,
            \begin{equation*}
                \sup_{\Sigma \in \hat{\mathcal{C}}}
                    \mathcal{L}(V_k;\Sigma)
                = \max_{e \in \E} \mathcal{L}(V_k;\hat\Sigma_e).
            \end{equation*}
                        \item Consequently, the ordering induced by the worst-case loss over the source domains is preserved over $\hat{\mathcal{C}}$: for all $V_k, W_k \in \Ok$, if
            $
            \max_{e \in \E} \mathcal{L}(V_k;\hat\Sigma_e)
            <
            \max_{e \in \E} \mathcal{L}(W_k;\hat\Sigma_e),
            $
            then
            $
            \sup_{\Sigma \in \hat{\mathcal{C}}} \mathcal{L}(V_k;\Sigma)
            <
            \sup_{\Sigma \in \hat{\mathcal{C}}} \mathcal{L}(W_k;\Sigma).
            $
        \item Hence, $\hat V_k$ is worst-case optimal over $\hat{\mathcal{C}}$, i.e.,
            $\hat V_k \in \displaystyle 
                \argmin_{V \in \Ok} 
                    \sup_{\Sigma \in \hat{\mathcal{C}}} 
                        \mathcal{L}(V;\Sigma).$
        \label{bullet-empirical-optimality}
        \item 
        Statement \ref{bullet-empirical-optimality} does not generally hold for the solutions to empirical \poolPCA and \sepPCA.
    \end{enumerate}
\end{proposition}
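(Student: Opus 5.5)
The plan is to show that, for fixed $V_k$, each loss is an affine function of the covariance argument $\Sigma$, with one correction for the regret, where the domain-specific optimum must be handled. Statement~(i) then follows because an affine (or convex) function on a polytope attains its supremum at a vertex. For \maxRCS, footnote~\ref{fn:loss-sigma} gives $\mathcal{L}_\mathrm{RCS}(V_k;\Sigma)=\Tr(\Sigma)-\Tr(V_k^\top\Sigma V_k)$, which is linear in $\Sigma$. For \minPCA, $-\mathcal{L}_\mathrm{var}(V_k;\Sigma)=-\Tr(V_k^\top\Sigma V_k)$ is also linear.

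For the regret, $\mathcal{L}_\mathrm{reg}(V_k;\Sigma)=\Tr(\Sigma)-\Tr(V_k^\top\Sigma V_k)-\min_{W}\{\Tr(\Sigma)-\Tr(W^\top\Sigma W)\}$. This equals $\max_W \Tr(W^\top\Sigma W)-\Tr(V_k^\top\Sigma V_k)$, i.e.\ $\max_W \Tr(W^\top \Sigma W) - \Tr(V_k^\top\Sigma V_k)$. As a supremum of linear functions minus a linear function, it is convex in $\Sigma$ (by Ky Fan, it is the sum of the top $k$ eigenvalues minus a linear term).

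Now take $\Sigma=\sum_e\alpha_e\hat\Sigma_e\in\hat{\mathcal{C}}$. By linearity or convexity, $\mathcal{L}(V_k;\Sigma)\le\sum_e\alpha_e\mathcal{L}(V_k;\hat\Sigma_e)\le\max_e\mathcal{L}(V_k;\hat\Sigma_e)$. Conversely, each $\hat\Sigma_e$ lies in $\hat{\mathcal{C}}$, so the supremum is at least the maximum. This gives equality in (i). The supremum is also attained, so "$\sup$" is really a max.

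Statement~(ii) is immediate from (i), since both sides of the strict inequality are rewritten identically. Statement~(iii) follows because $\hat V_k$ minimizes $\max_e\mathcal{L}(\cdot;\hat\Sigma_e)$, which by (i) coincides as a function of $V$ with $\sup_{\Sigma\in\hat{\mathcal{C}}}\mathcal{L}(\cdot;\Sigma)$. The argument is purely deterministic given the $\hat\Sigma_e$; the positive-trace assumption is not needed here, since the unnormalized losses are used.

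Statement~(iv) requires counterexamples, and I expect it to be the main place needing care. I would reuse Example~\ref{example1}, but realized as empirical covariances. For instance, take $n_e=2$ or $3$ rows with suitably scaled coordinate vectors so that $\hat\Sigma_1=\diag(0.9,0.1,0)$ and $\hat\Sigma_2=\diag(0,0.4,0.6)$, with equal $n_1=n_2$ so that $\hat\Sigma_\mathrm{pool}=\frac12(\hat\Sigma_1+\hat\Sigma_2)$.

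Then empirical \poolPCA yields $(1,0,0)^\top$ and \sepPCA yields $(0,0,1)^\top$, each with worst-case explained variance $0$ over the sources, and hence by (i) over $\hat{\mathcal{C}}$. Meanwhile $\frac{1}{\sqrt5}(\sqrt2,0,\sqrt3)^\top$ achieves $0.36>0$. So for the \minPCA and \maxRCS losses, neither baseline is in the argmin; note that here the trace is $1$ for both domains, so $\mathcal{L}_\mathrm{RCS}=1-\mathcal{L}_\mathrm{var}$. For the regret, the baselines have worst-case regret $0.9$ and $0.6$ respectively, while $\frac{1}{\sqrt5}(\sqrt3,0,\sqrt2)^\top$ has regret at most $0.9-0.54=0.36$. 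The only real check is that the data matrices reproduce these covariances exactly: take rows $\sqrt{1.8}\,e_1,\sqrt{0.2}\,e_2$ for domain~1 and $\sqrt{0.8}\,e_2,\sqrt{1.2}\,e_3$ for domain~2, with $n_e=2$.
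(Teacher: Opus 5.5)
Your proof is correct and follows the paper's route: the paper proves the population version (Theorem~\ref{thm:maxrcs-convex-hull}) using linearity of $\mathcal{L}_\mathrm{RCS}$ and $-\mathcal{L}_\mathrm{var}$ in $\Sigma$. For the regret it uses the inequality $\min_W\sum_e\mu_e(\cdot)\ge\sum_e\mu_e\min_W(\cdot)$, which is exactly your convexity of the Ky Fan term; it then derives (ii)--(iii) from (i) and uses Example~\ref{example1} for (iv), with the empirical statement obtained by replacing $\Sigma_e$ with $\hat\Sigma_e$. Your explicit data matrices realizing Example~\ref{example1} as empirical covariances are a nice addition, but there is one small slip: the worst-case regrets of the pooled solution $(1,0,0)^\top$ and the separate solution $(0,0,1)^\top$ are $0.6$ and $0.9$, respectively (not the reverse), which does not affect the conclusion.
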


\section{Additional experimental details and results}
\subsection{Data generating process for source and target covariances}\label{app:datagen}
In each experiment, we consider $E$ source domains, obtained by sampling $E$ population covariances matrices $(\Sigma_1,\ldots,\Sigma_E)$ in $\R^{p\times p}$ of rank $10$ from a measure $\mathbb{Q}_{\alpha,\beta}$ that depends on parameters $\alpha, \beta$ (explained below). 
Unless otherwise specified, $E=5$ and $p=20$.
Under the measures $\mathbb{Q}_{\alpha,\beta}$, the sampled covariances consist of a rank-5 shared component and a rank-5 domain-specific component, and 
each of them is
trace-normalized. 
For the simulations in Section~\ref{sec:sims-illustrate}--\ref{sec:sims-finite-sample} and Appendix~\ref{app:algo}, the domain-specific components have 
an equal set of
eigenvalues across domains, and for Section~\ref{sec:sims-noise}, these 
sets of
eigenvalues vary across domains.
We now describe the generation of the source covariances in more detail.
\paragraph{Source covariances.}
To sample $E$ 
source covariances, denoted as 
$(\Sigma_1, \ldots, \Sigma_E)\sim\mathbb{Q}_{\alpha,\beta}$,
we proceed as follows. 
\begin{enumerate}
    \item 
        Sample the eigenvalues and eigenvectors of the shared component as $\lambda_1,\ldots,\lambda_5 \iid U([0.1,1])$ and $Q \sim \mathbb P_\mathbb{O}$, where $\mathbb P_\mathbb{O}$ denotes the Haar distribution 
        over 
        orthonormal
        matrices $Q\in\R^{p\times p}$ with $Q^\top Q = QQ^\top = I_p$. 
        Let $V \in \R^{p\times 5}$ denote the first five columns of $Q$, forming an orthonormal basis of the shared rank-$5$ subspace, and let 
        let $\boldsymbol{\lambda} := (\lambda_1,\ldots,\lambda_5) \in \R^5$.
    \item 
        Sample the 
        eigenvalues of the domain-specific component as $\gamma_1,\ldots,\gamma_5 \iid U([\alpha, \beta])$ (unless otherwise specified, $\alpha=0.1$ and $\beta=1$).
        Let $\boldsymbol{\gamma} := (\gamma_1,\ldots,\gamma_5) \in \R^5$ 
        (we use the same $\boldsymbol{\gamma}$ for all source domains).
    \item 
        Sample the (domain-specific) eigenvectors of the domain-specific component as 
        follows. For $1 \le e \le E$, sample a Gaussian matrix $G_e \in \R^{p \times 5}$ with i.i.d.\ standard normal entries.  
        Let $P_\perp := I - VV^T$ be the projector onto the orthogonal complement of $\mathrm{span}(V)$.  
        We define $V_e \in \R^{p \times 5}$ as the $Q$ factor in the QR decomposition of $P_\perp G_e$.  
        Then the columns of $V_e$ form an orthonormal basis of a rank-$5$ subspace orthogonal to $V$.
    \item For all $1\le e\le E$, define
        \begin{equation*}
            \Sigma_e := \frac{1}{\sum_{i=1}^5(\lambda_i + \gamma_i)} \Big(
                V \diag(\boldsymbol{\lambda})V^\top 
                + V_e \diag(\boldsymbol{\gamma}) {V_e^\top}
            \Big).
        \end{equation*}
\end{enumerate}
\paragraph{Modifications for Section~\ref{sec:sims-noise}.}
In Section~\ref{sec:sims-noise}, we modify the procedure as follows. In step 2, instead of sampling shared domain-specific eigenvalues, we sample different eigenvalues for each domain. That is, for all domains $1\le e\le E$, we sample $\gamma_1^e,\ldots,\gamma_5^e \iid U([\alpha, \beta])$ and let $\boldsymbol{\gamma}_e := (\gamma_1^e,\ldots,\gamma_5^e)\in \R^5$. 
In step 4, we then replace the eigenvectors with the domain specific ones, that is, for all $1\le e\le E$, for $1\le i\le 5$, we replace~$\gamma_i$ with~$\gamma_i^e$ and we replace $\boldsymbol{\gamma}$ with $\boldsymbol{\gamma}_e$.

\paragraph{Target covariances.} The set of target covariances, $\mathcal{C}_\mathrm{target}^t$, consists of $t$ covariances sampled 
i.i.d.\
from the convex hull of the source covariances, $\mathcal{C} := \mathrm{conv}(\{\Sigma_e\}_{e=1}^E)$. 
Each target covariance $\Sigma$ is obtained by sampling $w$ uniformly from the $E$-dimensional simplex and setting $\Sigma = \sum_{e=1}^E w_e \Sigma_e$. 
The superscript $t$ is omitted when the number of target covariances is clear from context.

\subsection{Optimization algorithms}\label{app:algo}
\subsubsection{wcPCA}
The optimization problems  of the \wcPCA variants are non-convex due to the orthogonality constraint set~$\Ok$. 
In principle, any optimization scheme designed for orthogonality constraints may be employed. Several relaxations have been proposed for
some of the variants.
    \citet{tantipongpipat2019multi} (FairPCA) provides a semidefinite programming (SDP) relaxation and multiplicative weights (MW) algorithm for the \minPCA and \maxregret objectives.
   \citet{wang2025stablepca} (StablePCA) provide an optimization scheme for \minPCA, \maxRCS, and \maxregret with convergence guarantees.
    Neither of them consider the other objectives. 
Other approaches, such as extra-gradient methods for saddle-point problems, may also apply in this setting but have not yet been explored for these objectives.

Our focus in this paper lies on the generalization guarantees rather than the optimization. For the implementation, we therefore employ a straightforward projected gradient descent (PGD) scheme implemented in PyTorch \citep{pytorch}, with updates computed via the Adam optimizer. Because the objectives involve pointwise maxima or minima, gradients are propagated only through the active component at each step. Concretely, with current estimate $V_t$, one iteration update is
\begin{align*}
    \nabla \mathcal{L}(V_t) &\leftarrow \text{autograd}(\mathcal{L}, V), \\
    V' &\leftarrow \text{AdamStep}(V_t, \nabla \mathcal{L}(V_t)), \\
    V_{t+1} &\leftarrow \Pi_{\Ok}(V'),
\end{align*}
where $\Pi_{\Ok}$ denotes projection onto $\Ok$, implemented via an SVD-based orthogonalization. To mitigate poor local minima, we run five random restarts and retain the best solution.

To assess the 
effectiveness
of our implementation, we compare the proposed PGD scheme with
    StablePCA and FairPCA.
 We consider three configurations for the number of features $p$ and domains $E$: $(p, E) \in \{(10,5), (10,50), (50,5)\}$, corresponding, respectively, to few features–few domains, few features–many domains, and many features–few domains. For each configuration, 
we repeat the following steps 25 times.
We generate domain-specific covariance matrices as described in Appendix~\ref{app:datagen}. 
For $k = 1, \ldots, \min(p-1,30)$, we solve the rank-$k$ population \minPCA problem using our PGD implementation, StablePCA \citep{wang2025stablepca} and the SDP and MW variants of \cite{tantipongpipat2019multi}. 
When the solution matrix is not rank-$k$ in the 
output of the FairPCA
optimization, we consider only the top~$k$ eigenvectors.

\begin{figure}[t]
    \centering
    \includegraphics[scale=1]{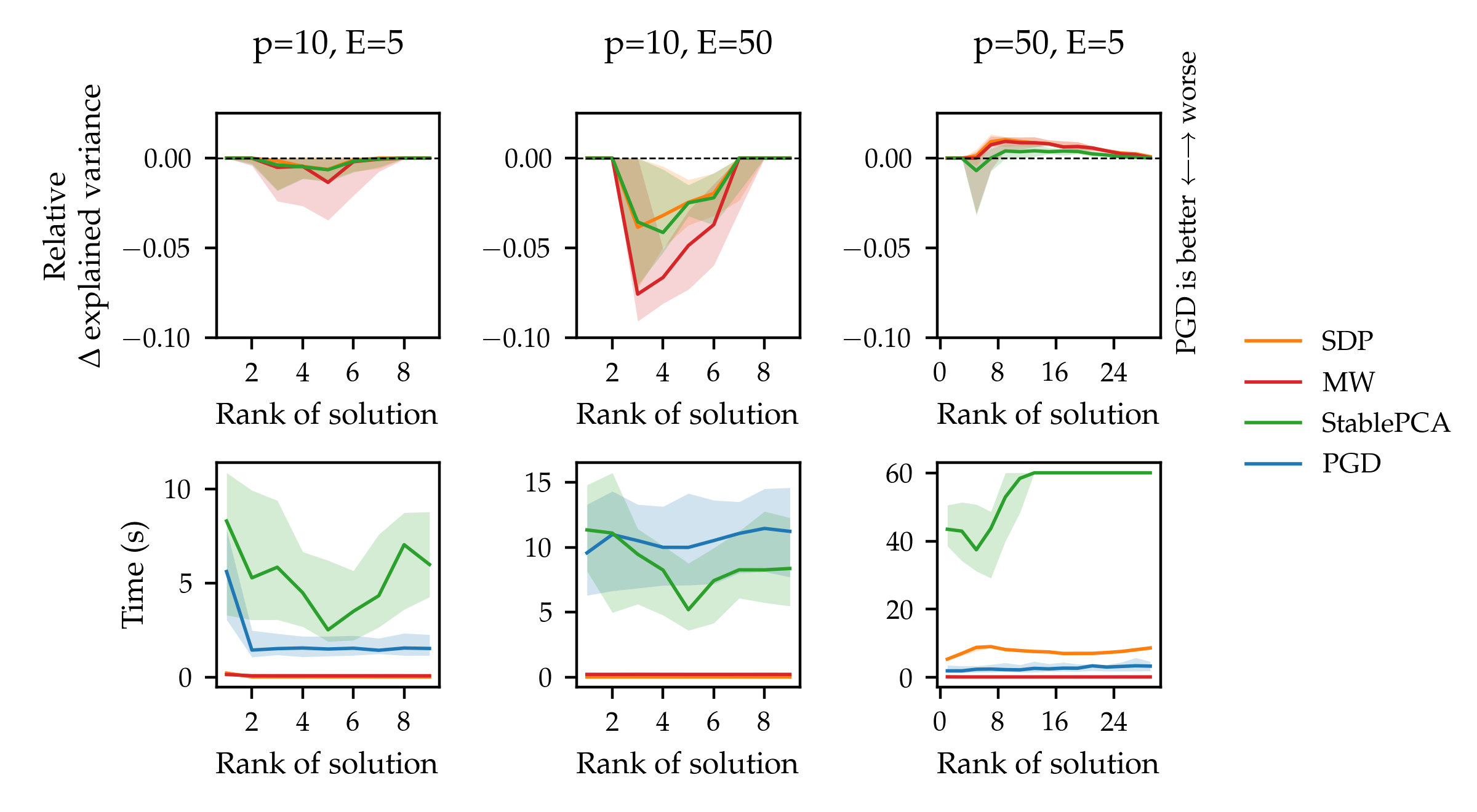}
    \caption{
        Comparison of projected gradient descent (PGD; 
        used in our experiments) 
        with 
        the optimization schemes of \cite{wang2025stablepca} (StablePCA) and \citet{tantipongpipat2019multi} (MW and SDP)
        for solving \minPCA.
        Results are shown for $(p, E) \in \{(10,5), (10,50), (50,5)\}$ where $p$ is the number of features and $E$ is the number of domains.
        \textit{Top row}: median difference in minimum explained variance relative to that of PGD (        negative values indicate that PGD explains more variance in the worst case)
        as a function of the number of components~$k$. 
        \textit{Bottom row}: runtime in seconds for each algorithm.
        Shaded regions represent the 25\%-75\% quantiles.
        PGD attains similar objective values to both baselines, with acceptable runtime.
    }
    \label{fig:comp1}
\end{figure}
Figure~\ref{fig:comp1} reports the 
median
value of
 $(\textrm{minvar}_\textrm{other} - \textrm{minvar}_\textrm{PGD}) \,/ \, | \;\textrm{minvar}_\textrm{PGD}\;|$
where $\textrm{minvar}_\textrm{other}$ and $\textrm{minvar}_\textrm{PGD}$ are, respectively, the minimum explained variance of 
the other
optimization scheme and of PGD and the median running time.
Across all configurations, the objective values obtained by PGD closely match those by StablePCA, SDP, and MW and for smaller problems, PGD explains more variance. 
For smaller problems, the SDP and MW optimization schemes are faster, whereas for higher-dimensional settings only MW is faster than PGD.
As computational cost is not a limiting factor for our analysis, we use the PGD implementation throughout the paper. 
For completeness, we also consider solutions to population \maxregret for $(p,E)=(10,5)$ (shown in Figure~\ref{fig:comp2}), where PGD attains objective values on par with the other optimization schemes
with runtimes that are acceptable for the purpose of our paper.
Here, the other optimization schemes occasionally returns solutions with considerably worse (i.e., higher) objective value. 
\begin{figure}[t]
    \centering
    \includegraphics[scale=1]{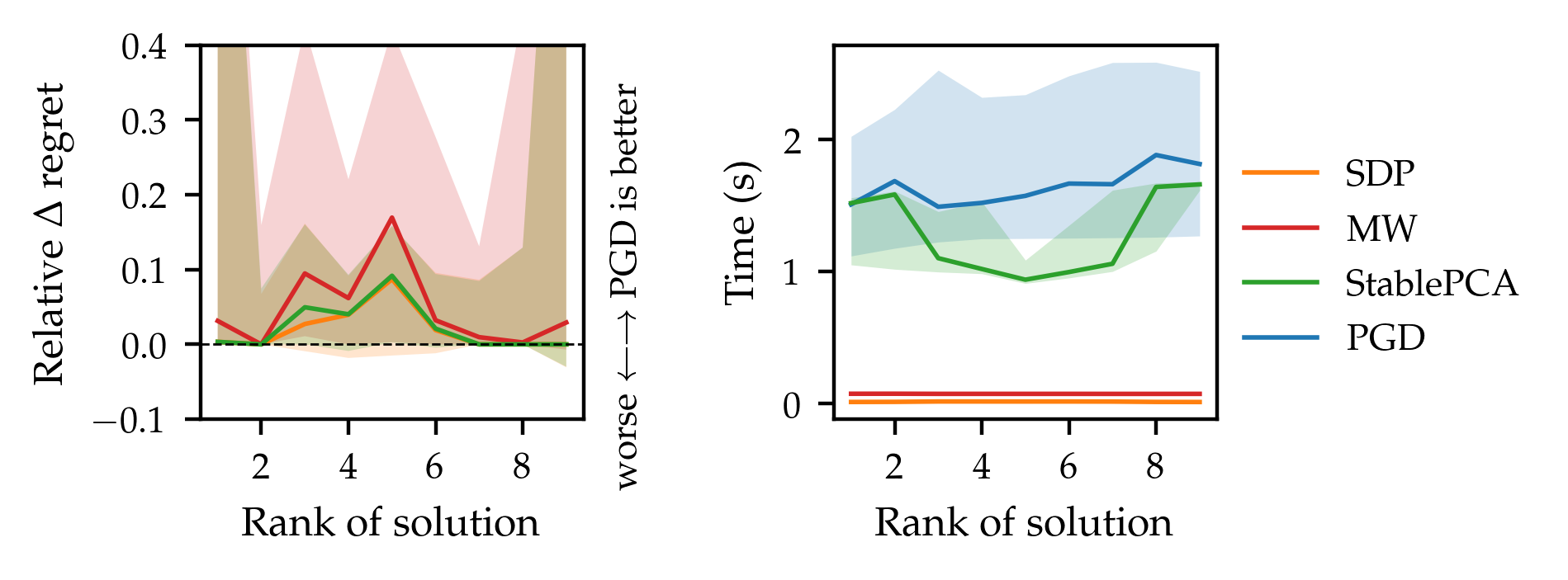}
    \caption{
        Comparison of PGD, StablePCA, MW, and SDP for solving \maxregret for $(p,E)=(10,5)$.
        PGD achieves comparable 
        (or better)
        maximum
        regret values (positive values indicate that PGD performs better, i.e., has a lower worst-case regret).
    }
    \label{fig:comp2}
\end{figure}

\subsubsection{Matrix completion}
The matrix completion objectives \poolMC and \maxMC are non-convex in 
$(L_e, R)_{e\in\E}$,
due to the bilinear term $L_e R^\top$ (see~\eqref{eqn:maxMC}).
We therefore employ a standard alternating-minimization scheme, iteratively updating $R$ given $(L_e)_{e\in\E}$ and then $(L_e)_{e\in\E}$ given $R$.
This scheme is similar to what 
\citet{kassab2026fairernonnegativematrixfactorization} propose for a regret-based variant of fair matrix completion and to what is commonly done in the pooled case \citep[e.g.,][]{jain2013low} -- even though in the pooled case, there is only a single left factor. Our implementation is adapted from Aaron Berk's implementation of alternating minimization for single-domain matrix completion\footnote{\url{https://github.com/asberk/matrix-completion-whirlwind}}.
More precisely,
given current left factors $(L_e)_{e\in\E}$, 
we update the shared right factor $R$ by solving a convex optimization problem.
For \maxMC, this step minimizes the maximum (across domains) of the normalized reconstruction error over the observed entries; for \poolMC, it minimizes the pooled reconstruction error across all observed entries.
Both problems are formulated and solved as second-order cone programs in \texttt{cvxpy}.
Given $R$, the left-factor updates decouple across domains and rows. 
Thus,
for each domain $e$ and row $i$, we estimate ${\ell}_{e,i}$ via the inductive least-squares reconstruction rule of Section~\ref{sec:inductive-mc}, restricted to the entries observed in that row.
The alternating updates are run for at most 100 iterations and terminated early when the improvement in the objective falls below $10^{-4}$.
Initialization of $\{L_e\}$ is done
using an SVD-based heuristic obtained by filling a sparse matrix with the observed entries.

\subsection{Evaluation metrics}\label{app:eval-metrics}
This appendix defines the evaluation metrics used in Sections~\ref{sec:sims-avg-vs-wc}, \ref{sec:sims-finite-sample}, and \ref{sec:sims-maxmc}. Let $\E := \{1,\ldots,5\}$, 
$\mathcal{C}_\mathrm{source} := \{\Sigma_1,\ldots,\Sigma_5\}$ 
denote the set of source covariances and
$\mathcal{C} := \mathrm{conv}(\mathcal{C}_\mathrm{source})$ 
denote their convex hull.
Recall that for a covariance matrix $\Sigma$, for all $V\in\Ok$ the reconstruction loss is
$\mathcal{L}_{\mathrm{RCS}}(V;\Sigma) 
= 
\Tr(\Sigma) - \Tr(V^\top \Sigma V)$.
\paragraph{Average and worst-case performance (see Section~\ref{sec:sims-avg-vs-wc}).}
Let $\bar\Sigma := \frac{1}{|\E|}\sum_{e\in\E}\Sigma_e$ be the average covariance and let $\Vmaxrcs$ and $\Vpool$ solve \maxRCS and \poolPCA. 
The 
difference in average reconstruction error (over the source domains) relative to the average performance of \poolPCA is
\begin{equation}\label{eqn:rel-error-avg}
\Delta\ \mathrm{average}
:=
\frac{
\mathcal{L}_{\mathrm{RCS}}(\Vmaxrcs;\bar\Sigma)
-
\mathcal{L}_{\mathrm{RCS}}(\Vpool;\bar\Sigma)
}{
\mathcal{L}_{\mathrm{RCS}}(\Vpool;\bar\Sigma)
}.
\end{equation}
The 
difference in worst-case reconstruction error (over the convex hull of the source covariances), relative to the average performance of \poolPCA is
\begin{equation}\label{eqn:rel-error-wc}
\Delta\ \mathrm{worst\text{-}case}
:=
\frac{
\sup_{\Sigma\in\mathcal{C}}\mathcal{L}_{\mathrm{RCS}}(\Vmaxrcs;\Sigma)
-
\sup_{\Sigma\in\mathcal{C}}\mathcal{L}_{\mathrm{RCS}}(\Vpool;\Sigma)
}{
\mathcal{L}_{\mathrm{RCS}}(\Vpool;\bar\Sigma)
}.
\end{equation}
Since $\mathcal{L}_{\mathrm{RCS}}$ is linear in $\Sigma$, the supremum over $\mathcal{C}$ is attained at its extreme points; in practice, it is computed as a maximum over the source covariances.
The quantities~\eqref{eqn:rel-error-avg} and~\eqref{eqn:rel-error-wc} correspond to the values plotted in Figure~\ref{fig:sim_wc_vs_avg}.
\paragraph{Convergence and finite-sample robustness (see Section~\ref{sec:sims-finite-sample}).}
To assess convergence of the empirical estimator $\VmaxrcsHAT$ to the population solution $\Vmaxrcs$, we compute the difference in worst-case population reconstruction error (evaluated over the convex hull $\mathrm{conv}(\mathcal{C}_\mathrm{source})$): 
\begin{equation}\label{eqn:diff-in-rcs}
\sup_{\Sigma\in\mathcal{C}}\mathcal{L}_{\mathrm{RCS}}(\VmaxrcsHAT;\Sigma)
-
\sup_{\Sigma\in\mathcal{C}}\mathcal{L}_{\mathrm{RCS}}(\Vmaxrcs;\Sigma).
\end{equation}
This metric corresponds to the quantity plotted in Figure~\ref{fig:sim_finite_sample} (left).
To compare finite-sample robustness 
of empirical \maxRCS and empirical \poolPCA, we additionally report
\begin{equation}\label{eqn:rel-error-fs}
\sup_{\Sigma\in\mathcal{C}}\mathcal{L}_{\mathrm{RCS}}(\VmaxrcsHAT;\Sigma)
-
\sup_{\Sigma\in\mathcal{C}}\mathcal{L}_{\mathrm{RCS}}(\VpoolHAT;\Sigma).
\end{equation}
This metric corresponds to Figure~\ref{fig:sim_finite_sample} (right).
Negative values indicate that the empirical \maxRCS estimator achieves lower worst-case population loss than pooled PCA. Again, in practice, the supremum is computed as a maximum over the source covariances.
\paragraph{Matrix completion (see Section~\ref{sec:sims-maxmc}).}
For the matrix completion experiments, for all $e\in\E$, let $X_e \in \R^{n_{\mathrm{test}}\times p}$ denote the matrix of test observations from domain $e$, and for all $R\in\Ok$, let $\hat X_e(R)$ denote their reconstructions obtained using the inductive least-squares rule of Section~\ref{sec:inductive-mc}. For each domain $e\in\E$, we define the per-entry mean squared reconstruction error as
\begin{equation*}
    \mathcal{L}_e(R)
    :=
    \frac{1}{n_{\mathrm{test}}\,p}
    \|X_e - \hat X_e(R)\|_F^2 .
\end{equation*}
Let $R^{\mathrm{maxMC}}$ and $R^{\mathrm{poolMC}}$ solve \maxMC and \poolMC, respectively. We consider the difference in average and worst-case performance, that is,
\begin{align*}
    \Delta\, \mathrm{average}
    &:= 
    \frac{1}{|\E|}
    \sum_{e\in\E}
    \big(
    \mathcal{L}_e(R^{\mathrm{maxMC}})
    -
    \mathcal{L}_e(R^{\mathrm{poolMC}})
    \big),
    \\
    \Delta\, \mathrm{worst\text{-}case}
    &:= 
    \max_{e\in\E}
    \mathcal{L}_e(R^{\mathrm{maxMC}})
    -
    \max_{e\in\E}
    \mathcal{L}_e(R^{\mathrm{poolMC}}).
\end{align*}
Negative values indicate that \maxMC achieves lower error than \poolMC. In Figure~\ref{fig:maxmc_all}, we report $10^4 \cdot \Delta\, \mathrm{average}$ and $10^4 \cdot \Delta\, \mathrm{worst\text{-}case}$.

\subsection{Inductive matrix completion with fully observed sources}\label{app:sims-mc}
We consider the regime in which the source domains are fully observed, corresponding to the setting of Theorem~\ref{thm:mc}, while target domains remain partially observed. In this case, \maxMC reduces to \maxRCS on the source data. Consistent with the theory, \maxMC typically achieves lower worst-case target reconstruction error than \poolMC (Figure~\ref{fig:maxmc_all_observed_sources}).
For low domain heterogeneity, however, the empirical advantage is occasionally negligible or slightly reversed. This can be attributed to (i) the fact that Theorem~\ref{thm:mc} provides an approximate
worst-case guarantee rather than a strict improvement over pooling, and (ii) finite-sample variability and 
non-optimality when solving
the non-convex optimization procedure, for which improved algorithms may yield tighter empirical performance.
Overall, the results mirror those in Section~\ref{sec:sims-maxmc}: the worst-case advantage of \maxMC is most pronounced under stronger domain heterogeneity and persists across varying levels of missingness.
\begin{figure}
    \centering
    \includegraphics[scale=1]{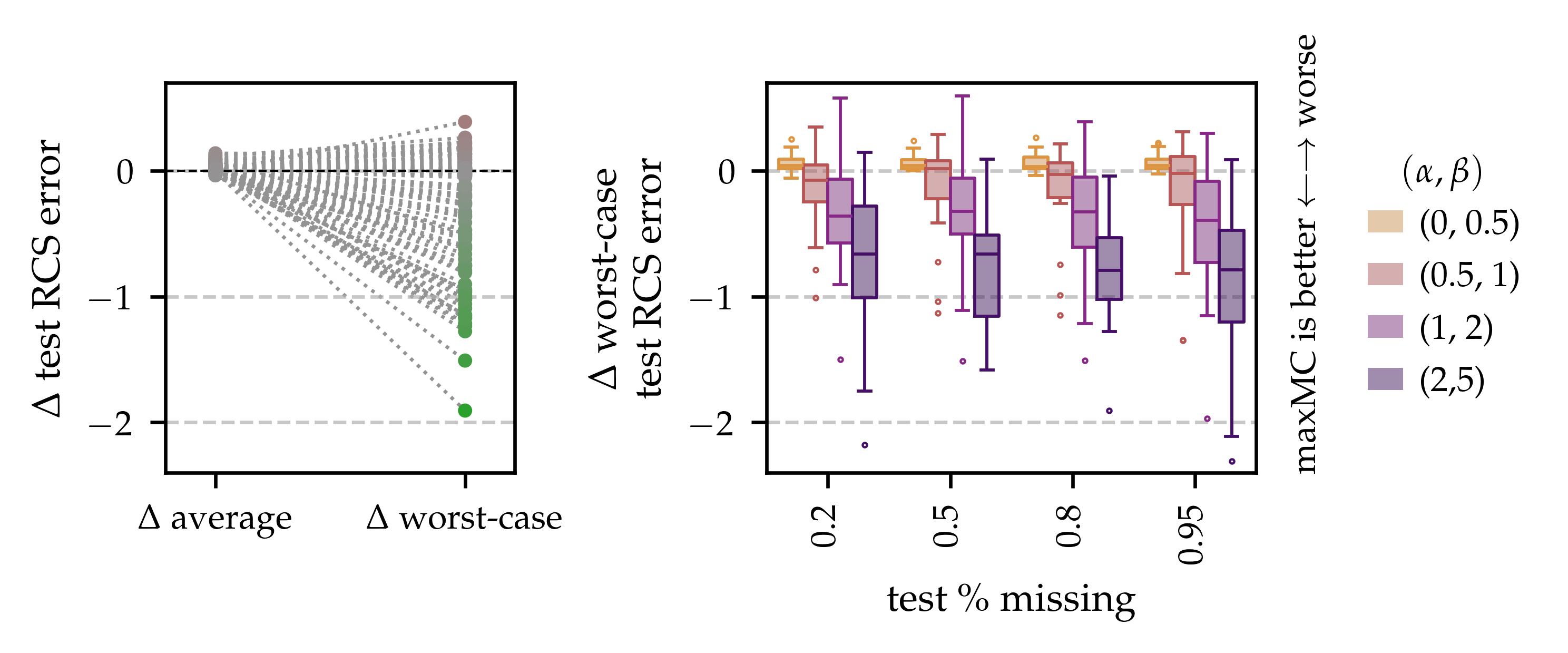}
    \caption{
        \textit{Worst-case inductive matrix completion with fully observed source domains; regime~(i).}
        \textit{Left:} Difference in average and worst-case test reconstruction error between \maxMC and \poolMC. 
        In line with Theorem~\ref{thm:mc} (which covers the population case and allows for some $\epsilon$ deviation),
        \maxMC 
        generally
        improves worst-case performance; 
        in terms of average error, it incurs only minor changes.
        \textit{Right:} Worst-case difference as the proportion of missing target entries varies, for different heterogeneity levels $(\alpha,\beta)$.  \maxMC achieves lower median worst-case error than \poolMC, with larger gains under stronger domain heterogeneity.            
    }
    \label{fig:maxmc_all_observed_sources}
\end{figure}

\section{Additional application details}\label{app:application}
\subsection{Details on the FLUXNET application (Section~\ref{sec:appl:fluxnet})}\label{app:appl:fluxnet}
\paragraph{Data preprocessing.}
We use quality-controlled FLUXNET data \citep{pastorello2020fluxnet2015}, aggregated to daily means.
Let us denote the thirteen TransCom regions as $\E$. For each TransCom region $e\in\E$ with data matrix $X_e^{\mathrm{raw}} \in \R^{n_e\times p}$, we first remove the domain mean and denote the centered matrix by $\tilde{X}_e$. 
We then stack the centered data $\tilde{X}:=[\tilde{X}_1;\dots;\tilde{X}_E]$ and compute pooled columnwise standard deviations
on $\tilde X$. Using these, we standardise $\tilde X$ to obtain $X_{\mathrm{pool}}$, and for each $e\in\E$, we define $X_e$ as the corresponding block of rows of $X_{\mathrm{pool}}$ belonging to domain $e$.
For all $e\in\E$, the empirical covariances are 
$\hat\Sigma_e=\tfrac{1}{n_e} X_e^\top X_e$, and the pooled covariance equals the weighted sum $\hat\Sigma_{\mathrm{pooled}}=\tfrac{1}{n} X_{\mathrm{pool}}^\top X_{\mathrm{pool}} =\sum_e w_e\hat\Sigma_e$ where $n=\sum_j n_j$ and $w_e=n_e/n$. 
The average covariance is given by $\tfrac{1}{|\E|}\sum_{e\in\E} \hat\Sigma_e$.
Intuitively,
this preprocessing removes between-region mean shifts and focuses the analysis on differences in covariance structure across regions.

\paragraph{Choosing the number of principal components.}
Based on scree plots of the source regions (Figure~\ref{fig:scree_plots}), the explained variance levels off after the second component. 
We 
thus use rank-2 approximations in Section~\ref{sec:appl:fluxnet}.
\begin{figure}[t]
    \centering
    \includegraphics[scale=1]{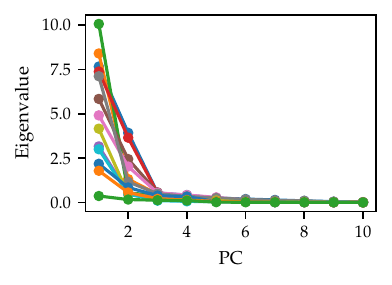}
    \caption{Scree plots for each TransCom region (shown in different colours). 
            In Section~\ref{sec:appl:fluxnet}, we use rank-2 approximations.
    }
    \label{fig:scree_plots}
\end{figure}

\paragraph{Random splits of source and target regions.}
For 20 repetitions, the TransCom regions are randomly partitioned into five source regions and eight target regions. Figure~\ref{fig:fn_reps} reports the relative difference in worst-case proportion of explained variance between \poolPCA and the competing methods (\maxregret, \normmaxregret, \normmaxRCS, \avgcovPCA) over the repetitions --- shown across ranks and on the source and target domains.
\begin{figure}[th!]
    \centering
    \includegraphics[scale=1]{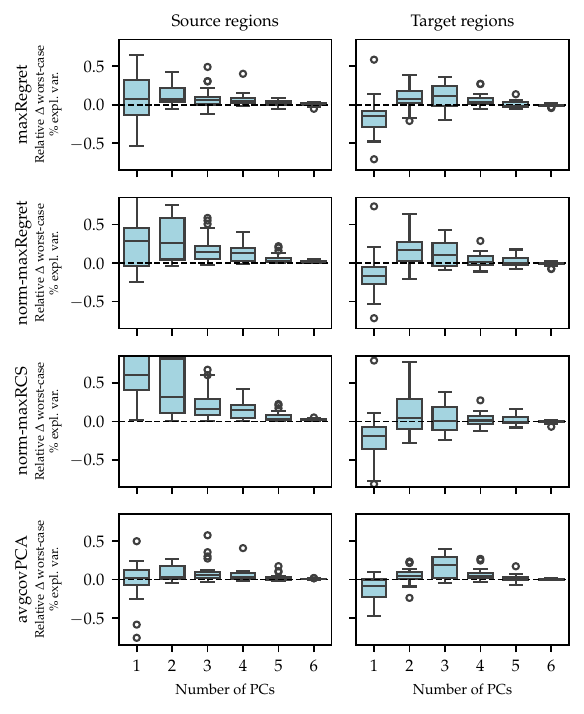}
    \caption{
        Relative difference in worst-case proportion of explained variance between \poolPCA and, from top to bottom, \maxregret, \normmaxregret, \normmaxRCS, and \avgcovPCA. 
        Positive values indicate improved worst-case performance relative to \poolPCA. 
        Results are shown for 20 random region splits, evaluated on the source and target regions across varying numbers of principal components.
        Apart from the rank-one case, the worst-case objectives exhibit improved robustness or similar behavior to \poolPCA.
    }
    \label{fig:fn_reps}
\end{figure}

\subsection{Details on the reanalysis of \citet{migliavacca2021three} (Section~\ref{sec:appl:ecosys})}\label{app:appl:ecosys}
\paragraph{Data description and preprocessing.}
We use the site-level ecosystem function dataset constructed by \citet{migliavacca2021three}, based on FLUXNET observations \citep{baldocchi2008breathing, pastorello2020fluxnet2015}. Each site contributes a single vector of aggregated functional properties (described in Table~\ref{tab:ecosys_traits}). We define continents as domains. Preprocessing follows the procedure described 
in
Appendix~\ref{app:appl:fluxnet}, with continents replacing the TransCom regions.
\begin{table}[t]
\centering
\caption{Ecosystem functional properties used in the reanalysis of \citet{migliavacca2021three}.}
\label{tab:ecosys_traits}
\begin{tabular}{|l|l|}
    \hline
    \textbf{Variable} & \textbf{Description} \\
    \hline
    GPP$_\textrm{sat}$ & Maximum gross CO$_2$ uptake at light saturation \\
    NEP$_\textrm{max}$ & Maximum net ecosystem productivity \\
    ET$_\textrm{max}$ & Maximum evapotranspiration \\
    EF & Evaporative fraction \\
    EF$_\textrm{ampl}$ & EF amplitude \\
    GS$_\textrm{max}$ & Maximum dry canopy surface conductance \\
    Rb & Mean basal ecosystem respiration \\
    RB$_\textrm{max}$ & Maximum basal ecosystem respiration \\
    aCUE & Apparent carbon-use efficiency (fraction of assimilated carbon retained) \\
    uWUE & Underlying water-use efficiency \\
    WUE$_\textrm{t}$ & Transpiration-based water-use efficiency \\
    G1 & Ecosystem-scale stomatal slope parameter \\
    \hline
\end{tabular}
\end{table}

\paragraph{Additional methods.}
As an additional baseline, we solve PCA on the unweighted average of the domain covariance matrices
($\frac{1}{|\E|} \sum_{e\in\E}\hat\Sigma_e $), denoted \emph{\avgcovPCA}.
This estimator removes the implicit weighting of domains by sample size present in \poolPCA. 
In addition, we fit \maxregret. 
Figure~\ref{fig:ecosys:extras} reports the corresponding performance and shows that \maxregret performs almost identically to \normmaxregret. Using the average covariance instead of \poolPCA substantially improves worst-case performance, but it remains slightly inferior to \normmaxRCS both in average explained variance and in the worst-performing continent. 
However, to the best of our knowledge, \avgcovPCA does not come with any extrapolation guarantee. 
\begin{figure}
    \centering
    \includegraphics[scale=1]{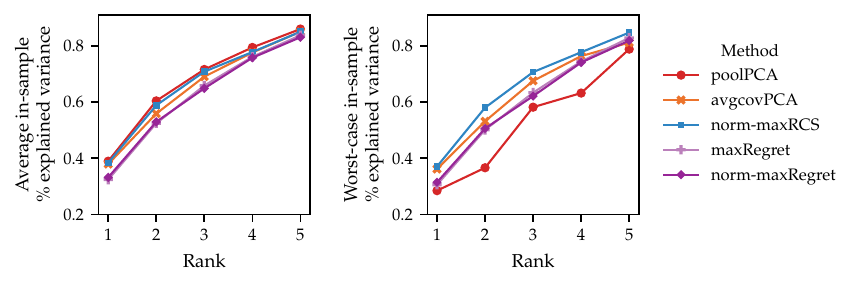}
    \caption{Comparison of dimensionality reduction methods on the ecosystem function dataset.
    \textit{Left}: Proportion of variance explained on the pooled dataset.
    \textit{Right}: Worst-case proportion of variance explained across continents.
    Methods that account for domain heterogeneity outperform \poolPCA in the worst-case continent, with \normmaxRCS most closely matching pooled average performance.
    }
    \label{fig:ecosys:extras}
\end{figure}
\section{Proofs}\label{appendix:proofs}
\subsection{Additional results}\label{app:stable-ols}
There are several variations of the following statement that are known,
    both for single-case deletion, that is, Cook's formula \citep{Cook1977}, and for subset removal \citep[e.g., Section~3.6 in][]{CookWeisberg1982}.
    Bounding this perturbation via $\mu$-incoherence \citep{candes2012exactmc} is a standard technique in randomized linear algebra \citep{mahoney2011randomized}.
We add the proof for completeness.
\begin{proposition}[Stability of OLS under subset removal]\label{prop:stable-ols}
Let $X \in \Rnp$ be a deterministic design matrix satisfying the orthogonality condition $X^\top X = I_p$ and the $\mu$-incoherence condition:
\begin{equation}\label{eqn:mu-incoherence-requirement}
    \|x^{(i)}\|_2 \le \mu \sqrt{\tfrac{p}{n}}, \quad \forall i \in \{1, \dots, n\},
\end{equation}
where $x^{(i)}$ is the $i$-th row of $X$.
Let $y \in \mathbb{R}^n$ be a response vector.
Let $\beta^*$ be the OLS estimator on the full dataset. 
Consider any subset $S \subseteq \{1, \dots, n\}$, 
define $s := |S|$,
and let $X_{-S}$ (resp.\ $y_{-S}$) denote the submatrix (resp.\ subvector) of $X$ (resp.\ $y$) consisting of the rows  (resp.\ entries) indexed by $[n] \setminus S$. Let $\beta_{-S}$ be the OLS estimator on the dataset without the entries in $S$, that is, $ \beta_{-S} \in \argmin_{b\in\R^p} \|y_{-S} - X_{-S} b \|^2_2$.
For
all
$\epsilon \in (0,1)$, if the subset size $s$ satisfies
\begin{equation}\label{eqn:sample-size-requirement}
    s \le \frac{n \epsilon}{p \mu^2 (2\epsilon+1)}
\end{equation}
then the relative increase in squared error is bounded by $\epsilon$, that is,
$$
\|y-X \beta_{-S}\|^2_2 \le (1+\epsilon) \|y-X\beta^*\|^2_2.
$$
\end{proposition}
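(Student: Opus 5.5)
Write $r := y - X\beta^*$ for the full-data residual. Since $X^\top X = I_p$, we have $\beta^* = X^\top y$ and $X^\top r = 0$. The plan is to write $\beta_{-S} - \beta^*$ in closed form and bound its size using the leverage of the removed rows. By the Pythagorean identity coming from $X^\top r = 0$,
\begin{equation*}
\|y - X\beta_{-S}\|_2^2 = \|r\|_2^2 + \|X(\beta^* - \beta_{-S})\|_2^2 = \|r\|_2^2 + \|\beta^* - \beta_{-S}\|_2^2,
\end{equation*}
so it suffices to show $\|\beta^*-\beta_{-S}\|_2^2 \le \epsilon \|r\|_2^2$.

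First, let $X_S \in \R^{s\times p}$ collect the removed rows. Then $X_{-S}^\top X_{-S} = I_p - X_S^\top X_S$. The spectral norm of $X_S^\top X_S$ equals $\|X_S\|_2^2 \le \|X_S\|_F^2 \le s\mu^2 p/n =: \eta$ by incoherence. Condition~\eqref{eqn:sample-size-requirement} gives $\eta \le \epsilon/(2\epsilon+1) < 1/2$, so $X_{-S}^\top X_{-S}$ is invertible and $\beta_{-S}$ is unique.

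Next, the normal equations give $(I - X_S^\top X_S)\beta_{-S} = X^\top y - X_S^\top y_S$. Subtracting $(I - X_S^\top X_S)\beta^*$ yields
\begin{equation*}
(I - X_S^\top X_S)(\beta_{-S} - \beta^*) = -X_S^\top (y_S - X_S\beta^*) = -X_S^\top r_S.
\end{equation*}
Hence $\beta_{-S} - \beta^* = -(I - X_S^\top X_S)^{-1} X_S^\top r_S$, which gives
\begin{equation*}
\|\beta_{-S}-\beta^*\|_2 \le \frac{\|X_S\|_2 \,\|r_S\|_2}{1-\eta} \le \frac{\sqrt{\eta}}{1-\eta}\,\|r\|_2 .
\end{equation*}

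Finally, squaring gives $\|\beta_{-S}-\beta^*\|_2^2 \le \frac{\eta}{(1-\eta)^2}\|r\|_2^2$, and it remains to check $\eta/(1-\eta)^2 \le \epsilon$ under $\eta \le \epsilon/(2\epsilon+1)$. Here $1-\eta \ge (\epsilon+1)/(2\epsilon+1)$, so
\begin{equation*}
\frac{\eta}{(1-\eta)^2} \le \frac{\epsilon(2\epsilon+1)}{(\epsilon+1)^2},
\end{equation*}
and $(2\epsilon+1) \le (\epsilon+1)^2$ always holds. This is the main point of tension: the constant must match the specific threshold $\epsilon/(2\epsilon+1)$. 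The crude bound above closes the argument, with slack. If it turned out too weak, I would instead compute via the Woodbury identity, giving $\beta_{-S}-\beta^* = -X_S^\top (I_s - X_SX_S^\top)^{-1} r_S$, and bound it the same way.
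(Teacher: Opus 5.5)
Your proof is correct and follows the paper's argument essentially step for step. The shared steps are: invertibility of $I - X_S^\top X_S$ via $\|X_S\|_{op}^2 \le \|X_S\|_F^2 \le s\mu^2 p/n$; the Pythagorean split using $X^\top r = 0$ and $X^\top X = I_p$; the identity $\beta_{-S}-\beta^* = -(I-X_S^\top X_S)^{-1}X_S^\top r_S$; and the resulting factor $\eta/(1-\eta)^2$. The only difference is the final algebra: you substitute the threshold directly and use $2\epsilon+1\le(\epsilon+1)^2$, whereas the paper compares $\delta$ with the smaller root of $\epsilon\delta^2-(2\epsilon+1)\delta+\epsilon$. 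Your version is slightly cleaner, and your Woodbury fallback is never needed.
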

\begin{proof}
    First, we show that $\beta_{-S}$ is unique 
    (step 1),
    then we bound the difference 
    between the
    reconstruction error of 
    $\beta^*$ 
    and
    $\beta_{-S}$ 
    (step 2).

Step 1:
    We 
    show that $X_{-S}^\top X_{-S}$ is invertible, which implies that $\beta_{-S}$ is uniquely defined.
    Let $X_{S}$ (resp.\ $y_{S}$) denote the sub-matrix (resp.\ sub-vector) of $X$ (resp.\ $y$) consisting of the rows  (resp.\ entries) indexed by $S$. 
    The spectral norm (denoted 
    by
    $\|\cdot\|_{op}$) of 
    $X_S$ is bounded 
    as follows
    \begin{equation}\label{eqn:Xs_op_bound}
        \|X_S\|_{op}^2 \le \text{Tr}(X_S^\top X_S) 
        = \sum_{i \in S} \|x^{(i)}\|_2^2 
        \le s \cdot \frac{\mu^2 p}{n} 
        < 1,
    \end{equation}
    where the second inequality follows from the incoherence condition~\eqref{eqn:mu-incoherence-requirement} and the third inequality follows from the subset size assumption~\eqref{eqn:sample-size-requirement}.
    Hence, the maximum eigenvalue $\lambda_{\max}(X_S^\top X_S)<1$ and thus $I_p - X_S^\top X_S = X_{-S}^\top X_{-S}$ is invertible and $\beta_{-S}$ is unique and given by
    \begin{equation}\label{eqn:beta-s}
        \beta_{-S} = (X_{-S}^\top X_{-S})^{-1} X_{-S}^\top y_{-S} = (I - X_S^\top X_S)^{-1} (X^\top y - X_S^\top y_S).
    \end{equation}

Step 2:
    We 
    now
    bound the reconstruction error using $\beta_{-S}$ as 
    \begin{align*}
        \|y-X\beta_{-S}\|^2_2 &= \|y-X\beta^* + X\beta^* -X\beta_{-S}\|^2_2 \\
        &= \|y-X\beta^*\|^2_2 + \|X\beta^* -X\beta_{-S}\|^2_2 \\
        &= \|y-X\beta^*\|^2_2 + \|\beta^* -\beta_{-S}\|^2_2,
    \end{align*}
    where the first equality follows as 
    $X^\top(y-X\beta^*) = 0$
    and the last equality follows as $X$ is orthogonal.
    It thus suffices to bound $\|\beta^* -\beta_{-S}\|^2_2$ which we do using~\eqref{eqn:beta-s} as follows,
    \begin{align*}
        \|\beta^* -\beta_{-S}\|^2_2 
        &= \|\beta^* - (I - X_S^\top X_S)^{-1} (X^\top y - X_S^\top y_S)\|_2^2 \\
        &= \|(I - X_S^\top X_S)^{-1} (I - X_S^\top X_S)\beta^* - (I - X_S^\top X_S)^{-1} (\beta^* - X_S^\top y_S)\|_2^2 \\
        &= \|(I - X_S^\top X_S)^{-1} X_S^\top (y_S - X_S \beta^*)\|_2^2 \\
        &\le \|(I - X_S^\top X_S)^{-1}\|_{op}^2 \|X_S^\top\|_{op}^2 \|y_S - X_S \beta^*\|_2^2\\
        &\le \|(I - X_S^\top X_S)^{-1}\|_{op}^2 \|X_S^\top\|_{op}^2 \|y - X \beta^*\|_2^2.
    \end{align*}
    In addition,
    $\|X_S^\top\|_{op}^2 =  \|X_S\|_{op}^2 \le \tfrac{s\mu^2 p}{n} 
    =:\delta
    < 1$ by~\eqref{eqn:Xs_op_bound}.
    Using the Neumann series bound (for all square matrices $A$ such that $\|A\|_{op}<1$,
    $\|(I-A)^{-1}\|_{op} \le 1/(1-\|A\|_{op})$) and~\eqref{eqn:Xs_op_bound}, we have that
    \begin{align*}
        \|(I - X_S^\top X_S)^{-1}\|_{op} \le \frac{1}{1-\|X_S^\top X_S\|_{op}} = \frac{1}{1-\|X_S\|_{op}^2}.
    \end{align*}
   It follows that
    \begin{equation*}
    \frac{
     \|y-X\beta_{-S}\|^2_2
    }{
    \|y - X \beta^*\|_2^2
    }    \leq
    1+ 
    \|(I - X_S^\top X_S)^{-1}\|_{op}^2 \|X_S^\top\|_{op}^2 \le  
    1+
    \frac{\delta}{(1-\delta)^2}.
    \end{equation*}
We further have that 
\begin{equation*}
 \delta \le \frac{\epsilon}{2\epsilon+1}
 \le 
 \frac{2\epsilon+1-\sqrt{4\epsilon+1}}{2\epsilon},
\end{equation*}
where the first inequality follows from~\eqref{eqn:sample-size-requirement}
and the second from $\epsilon > 0$. This implies
$\epsilon\delta^2 - (2\epsilon+1)\delta + \epsilon \ge 0$
(by analyzing the roots of the left-hand side)
and thus
$\frac{\delta}{(1-\delta)^2} \le \epsilon$.
This finishes the proof of 
Proposition~\ref{prop:stable-ols}.
\end{proof}

\subsection{Proof of Theorem~\ref{thm:comparing-solutions}}\label{app:pf:comparing-solutions}
\paragraph{\ref{thm:comparing-solutions-normalization} Effect of normalization.}
We prove the statement by counterexample.
Consider two domains $\E := \{1, 2\}$ with
population covariance matrices
\begin{equation*}
   \Sigma_1 := \diag(0.1, 0,9),
   \qquad 
   \Sigma_2 := \diag(9, 1).           
\end{equation*}
$\Vnormminpca$ solves rank-$1$ \normminPCA if
\begin{equation*}
\begin{aligned}
     \Vnormminpca &\in 
    \argmax_{\substack{v=(a, b) \in\R^2; \\ \|v\| = 1}}   \left\{
    \min \left(
        0.1 a^2 + 0.9 b^2 , 
        0.9 a^2 + 0.1 b^2
    \right)\right\}.
\end{aligned}
\end{equation*}
The solution equalizes the two terms, hence $\Vnormminpca = \frac{1}{\sqrt{2}} (1, 1)^\top$ solves \normminPCA. 
$\Vminpca$ solves rank-1 \minPCA if
\begin{equation*}
\begin{aligned}
     \Vminpca &\in 
    \argmax_{\substack{v=(a, b) \in\R^2; \\ \|v\| = 1}}\left\{ 
    \min  \left( 
        0.1 a^2 + 0.9 b^2 , 
        9 a^2 + 1 b^2
    \right)\right\} 
    = \argmax_{\substack{v=(a, b) \in\R^2; \\ \|v\| = 1}}  
        \left(  0.1 a^2 + 0.9 b^2 \right).
\end{aligned}
\end{equation*}
Hence $\Vminpca = (0, 1)^\top$ solves \minPCA.
Then,
\begin{align*}
    \min_{e\in\{1,2\}} \mathcal{L}_\mathrm{normVar}(\Vnormminpca;\Sigma_e) = 0.5 
    > \min_{e\in\{1,2\}} \mathcal{L}_\mathrm{normVar}(\Vminpca;\Sigma_e) = 0.1 \\
    \mathrm{and}\ 
    \min_{e\in\{1,2\}} \mathcal{L}_\mathrm{var}(\Vminpca;\Sigma_e) = 0.9 
    > \min_{e\in\{1,2\}} \mathcal{L}_\mathrm{var}(\Vnormminpca;\Sigma_e) = 0.5.
\end{align*}
Hence, $\Vminpca$ does not solve \normminPCA and $\Vnormminpca$ does not solve \minPCA.
\paragraph{\ref{thm:comparing-solutions-var-vs-rcs} Explained variance vs.\ reconstruction error.}
The first claim (that, in general, \minPCA and \maxRCS have different solution sets) follows from Example~\ref{example1}.
We now show that the solutions of \normminPCA are the same as those of \normmaxRCS.
Let us begin by simplifying the objective for \normmaxRCS:
\begin{align*}
    \argmin_{V\in\Ok} 
        \left\{ \max_{e\in \E} \
        \frac{\Ex \left[ \|\mathbf{x}_e - \mathbf{x}_e VV^\top\|^2_2 \right]}{\Ex \left[ \|\mathbf{x}_e \|^2_2\right]} \right\} 
    &= \argmin_{V\in\Ok} 
        \left\{ \max_{e\in \E} 
        \frac{\Tr(\Sigma_e) - \Tr(V^\top \Sigma_e V)}{\Tr(\Sigma_e)} \right\} \\
    &= \argmin_{V\in\Ok} 
        \left\{ 1- \min_{e\in \E} 
        \left(\frac{\Tr(V^\top \Sigma_e V)}{\Tr(\Sigma_e)} \right) \right\} \\
    &= \argmax_{V\in\Ok}  
    \left\{ \min_{e\in \E} \frac{\Tr(V^\top \Sigma_e V)}{\Tr(\Sigma_e)} \right\}. 
\end{align*}
The final line is the objective for \normminPCA. Hence, the set of solutions to \normminPCA and \normmaxRCS are identical.
\paragraph{\ref{thm:comparing-solutions-regret} Regret.}
Finally, we verify that the solutions to \maxregret are the same whether it is defined via reconstruction error or explained variance.
For $V \in\Ok$, for all domains $e\in\E$, recall that the explained variance and reconstruction error are, respectively,
\begin{align*}
    \mathcal{L}_\mathrm{var}(V;P_e) &= \Tr(V^\top \Sigma_e V), \\
    \mathcal{L}_\mathrm{RCS}(V;P_e)
    &= \Ex\!\left[\|\mathbf{x}_e - \mathbf{x}_e VV^\top\|_2^2\right]
    = \Tr(\Sigma_e) - \Tr(V^\top \Sigma_e V)
    = \Tr(\Sigma_e) - \mathcal{L}_\mathrm{var}(V;P_e).
\end{align*}
Thus, for any $e \in \E$, the regret is
\begin{align*}
    \mathcal{L}_\mathrm{reg}(V;P_e)
    &= \mathcal{L}_\mathrm{RCS}(V;P_e)
       - \min_{W \in \Ok} \mathcal{L}_\mathrm{RCS}(W;P_e) \\
    &= \big(\Tr(\Sigma_e) - \mathcal{L}_\mathrm{var}(V;P_e)\big)
       - \big(\Tr(\Sigma_e) - \max_{W \in \Ok} \mathcal{L}_\mathrm{var}(W;P_e)\big) \\
    &= \max_{W \in \Ok} \mathcal{L}_\mathrm{var}(W;P_e)
       - \mathcal{L}_\mathrm{var}(V;P_e).
\end{align*}
Hence, the regret computed by minimizing reconstruction error is identical to that obtained by maximizing explained variance, and the two formulations therefore yield the same set of optimal solutions. 
The proof for the normalized case follows from $\mathcal{L}_\mathrm{normRCS}(V;P_e) = 1 - \mathcal{L}_\mathrm{normVar}(V;P_e)$.
\hfill\BlackBox

\subsection{Proof of Theorem~\ref{thm:maxrcs-convex-hull}}\label{app:pf:maxrcs-convex-hull}
    We write the proof for \maxRCS. The guarantees for the variants \normmaxRCS, \minPCA, and \normminPCA hold with 
    minor modifications to the proof and thus are omitted. The modification for \maxregret for Statement~\ref{bullet:wc-optimality-equality-sup-max} is given at the end of the proof, from which the subsequent statements and guarantees for \normmaxregret follow.
    
    Let $V^*_k$ solve \maxRCS. Let $P \in \mathcal{P}$ be any distribution with zero mean and covariance $\Sigma\in \mathcal{C} = \mathrm{conv}(\{\Sigma_e\}_{e \in \E})$. Then, there exist weights $(\mu_e)_{e \in \E}$ such that for all $e\in\E$, $\mu_e \ge 0$, $\sum_{e \in \E} \mu_e = 1$, and
    \begin{equation}\label{eqn:convex-hull-covariance}
        \Sigma = \sum_{e \in \E} \mu_e \Sigma_e.
    \end{equation}
\paragraph{Proof of \ref{bullet:wc-optimality-equality-sup-max}.} 
    For all $P \in \mathcal{P}$ and all $V\in\Ok$, the expected reconstruction error is
    \begin{align*}
        \mathcal{L}_\mathrm{RCS}(V;P) 
        &= \Tr(\Sigma) - \Tr(V^\top \Sigma V) \\
        &= \sum_{e \in \E} \mu_e \left( \Tr(\Sigma_e) - \Tr(V^\top \Sigma_e V) \right) \qquad \textrm{(by~\eqref{eqn:convex-hull-covariance})}\\
        &= \sum_{e \in \E} \mu_e \mathcal{L}_\mathrm{RCS}(V;P_e) \\ 
        &\le \max_{e\in\E}\mathcal{L}_\mathrm{RCS}(V;P_e).
    \end{align*}
    Hence $\sup_{P\in\mathcal{P}} \mathcal{L}_\mathrm{RCS}(V;P) \le \max_{e\in\E}\mathcal{L}_\mathrm{RCS}(V;P_e)$. 
    Further, because $\{P_e\}_{e \in \E} \subseteq \mathcal{P}$, it trivially holds that
    $\max_{e\in\E}\mathcal{L}_\mathrm{RCS}(V;P_e) \le \sup_{P \in \mathcal{P}}\mathcal{L}_\mathrm{RCS}(V;P)$. 
    Thus, equality holds. 
\paragraph{Proof of \ref{bullet:wc-optimality-ordering}.}
    Statement~\ref{bullet:wc-optimality-equality-sup-max} implies that for all $V_k, W_k \in\Ok$, if 
    $\max_{e\in\E}\mathcal{L}_\mathrm{RCS}(V_k;P_e) 
    < \max_{e\in\E}\mathcal{L}_\mathrm{RCS}(W_k;P_e)$, 
    then 
    $\sup_{P \in \mathcal{P}}\mathcal{L}_\mathrm{RCS}(V_k;P_e) 
    < \sup_{P \in \mathcal{P}}\mathcal{L}_\mathrm{RCS}(W_k;P_e)$.
\paragraph{Proof of \ref{bullet:wc-optimality-optimality}.}  
    Since $\{P_e\}_{e \in \E} \subseteq \mathcal{P}$, we have for all $V \in \Ok$, 
    \begin{align*}
        \sup_{P \in \mathcal{P}} \mathcal{L}_\mathrm{RCS}(V;P)
        \ge \max_{e \in \E} \mathcal{L}_\mathrm{RCS}(V;P_e) 
        \ge \max_{e \in \E} \mathcal{L}_\mathrm{RCS}(V^*_k;P_e) 
        = \sup_{P \in \mathcal{P}} \mathcal{L}_\mathrm{RCS}(V^*_k;P),
    \end{align*}
    where the second inequality follows by the optimality of $V^*_k$, and the equality follows from Statement~\ref{bullet:wc-optimality-equality-sup-max}.
\paragraph{Proof of \ref{bullet:wc-optimality-poolpca}.}  
    A counterexample showing that pooled and separate PCA solutions can fail to satisfy Statement~\ref{bullet:wc-optimality-optimality} is provided in Example~\ref{example1}. 
\paragraph{Modification for \maxregret.} 
    We show Statement~\ref{bullet:wc-optimality-equality-sup-max} for the regret.
    For all $P\in\mathcal{P}$ and all $V\in\Ok$, 
    the expected regret is
    \begin{align*}
        \mathcal{L}_\mathrm{reg}(V;P) 
        &= \Tr(\Sigma) - \Tr(V^\top \Sigma V) - \min_{W\in\Ok} \big\{\Tr(\Sigma) - \Tr(W^\top \Sigma W)\big\} \\
        &= \sum_{e \in \E} \mu_e \left( \Tr(\Sigma_e) - \Tr(V^\top \Sigma_e V) \right) - \min_{W\in\Ok} \big\{\sum_{e \in \E} \mu_e \left( \Tr(\Sigma_e) - \Tr(W^\top \Sigma_e W) \right)\big\} \\
        &\le \sum_{e \in \E} \mu_e \left( \Tr(\Sigma_e) - \Tr(V^\top \Sigma_e V) \right) - \sum_{e \in \E}\mu_e \min_{W\in\Ok} \big\{  \left( \Tr(\Sigma_e) - \Tr(W^\top \Sigma_e W) \right)\big\} \\
        &= \sum_{e \in \E} \mu_e \mathcal{L}_\mathrm{reg}(V;P_e) 
        \le  \max_{e\in\E}\mathcal{L}_\mathrm{reg}(V;P_e).
    \end{align*}
\hfill\BlackBox

\subsection{Proof of Proposition~\ref{prop:consistency}}\label{app:pf:consistency}
    Let $V_k^*$ and $\hat V_k$ denote the population and empirical solutions, respectively, to any of the considered problems (\minPCA, \normminPCA, \maxRCS, \normmaxRCS, \maxregret, or \normmaxregret), and let $\mathcal{L}$ denote the corresponding loss.\footnote{So $\mathcal{L} \in \{-\mathcal{L}_{\mathrm{var}},\ -\mathcal{L}_{\mathrm{normVar}},\ \mathcal{L}_{\mathrm{RCS}},\ \mathcal{L}_{\mathrm{normRCS}},\ \mathcal{L}_{\mathrm{reg}},\ \mathcal{L}_{\mathrm{normReg}}\}$, as in Theorems~\ref{thm:maxrcs-convex-hull} and~\ref{thm:normmaxrcs}.}
    For all $V\in\Ok$ and $e\in\E$, let the population and empirical domain-specific objectives be
    \begin{align*}
        M_e^{\mathcal{L}}(V) := -\mathcal L (V; \Sigma_e), 
        \qquad
        \hat M_e^{\mathcal{L}}(V) := -\mathcal L (V; \hat\Sigma_e).
    \end{align*}
    Similarly, let the population and empirical worst-case objectives be
    \begin{align*}
        M^{\mathcal{L}}(V) := \min_{e\in\E} -\mathcal L (V; \Sigma_e), 
        \qquad
        \hat M^{\mathcal{L}}(V) := \min_{e\in\E} -\mathcal L (V; \hat\Sigma_e).
    \end{align*}
    All six population and empirical estimators can thus be written as $V^*_k\in\argmax_{V\in\Ok}M^{\mathcal{L}}(V)$ and $\hat V_k\in\argmax_{V\in\Ok} \hat M^{\mathcal{L}}(V)$.
    As $\Ok$ is compact 
    and $M$ and $\hat M$ are continuous with respect to the metric induced by the Frobenius norm, the maxima are attained.
    We now show that 
    \begin{enumerate}[label=\roman*)]
    \item for all $e\in\E$, $\sup_{V \in \Ok} |\hat M_e^{\mathcal{L}}(V) - M_e^{\mathcal{L}}(V)| \conv 0$ as $n_e\to\infty$, 
    \item $\hat M^{\mathcal{L}}$ uniformly converges to $M^{\mathcal{L}}$,
    \item the maximum of $M^{\mathcal{L}}$ is well-separated.
    \end{enumerate}
    Then, by Theorem 5.7.\ in \cite{Vaart_1998}, the estimators are consistent, that is,
    $d(\hat{V}_k, V^*_k) \conv 0$ 
    as $n_\mathrm{min}\to\infty$.
    \paragraph{Proof of i): Domain-specific uniform convergence.}
        We first establish 
        some general
        bounds.
        By the law of large numbers, for all $e\in\E$,
        \begin{equation}\label{eqn:sigma-conv}
            \|\hat\Sigma_e - \Sigma_e\|_F \conv 0 \quad\text{as }n_e\to\infty.
        \end{equation}
        For all $V\in\Ok$, by definition, $\|VV^\top\|_F = \sqrt{\Tr(V^\top V)} = \sqrt{\Tr(I_k)} = \sqrt k$ and $\|I_p\|_F = \sqrt p$. Hence, by Cauchy-Schwarz, for all $V\in\Ok$,
        \begin{align}\label{eqn:cs1}
            &|\Tr(V^\top[\hat\Sigma_e - \Sigma_e]V)| \le \sqrt{k}\|\hat\Sigma_e - \Sigma_e\|_F, 
        \end{align}
        \begin{align}\label{eqn:cs2}
            &|\Tr(\hat\Sigma_e - \Sigma_e)| \le \|I_p\|_F\|\hat\Sigma_e - \Sigma_e\|_F = \sqrt{p}\|\hat\Sigma_e - \Sigma_e\|_F.
        \end{align}
        We now apply these bounds to each specific objective function.

    \textbf{Case 1: \minPCA.}
            By~\eqref{eqn:cs1}, we have that for all $e\in\E$ and for all $V\in\Ok$,
            \begin{equation*}
                |\hat M_e^{\mathcal{L}}(V) - M_e^{\mathcal{L}}(V)| 
                = |\Tr(V^\top[\hat\Sigma_e - \Sigma_e]V)| 
                \le 
                \sqrt k\|\hat\Sigma_e - \Sigma_e\|_F.
            \end{equation*}
            Hence, for all $e\in\E$, $\sup_{V \in \Ok} |\hat M_e^{\mathcal{L}}(V) - M_e^{\mathcal{L}}(V)| \le \sqrt k\|\hat\Sigma_e - \Sigma_e\|_F \conv 0$ as $n_e \to \infty$ by~\eqref{eqn:sigma-conv}.
            
    \textbf{Case 2: \maxRCS.} 
            By~\eqref{eqn:cs1} and ~\eqref{eqn:cs2}, we have that for all $e\in\E$ and for all $V\in\Ok$,
            \begin{equation*}
                |\hat M_e^{\mathcal{L}}(V) - M_e^{\mathcal{L}}(V)| 
                \le |\Tr(V^\top[\hat\Sigma_e - \Sigma_e]V)| + |\Tr(\hat\Sigma_e - \Sigma_e)| \le (\sqrt k + \sqrt p) \|\hat\Sigma_e - \Sigma_e\|_F.
            \end{equation*}
            Hence, for all $e\in\E$, $\sup_{V \in \Ok} |\hat M_e^{\mathcal{L}}(V) - M_e^{\mathcal{L}}(V)| \le (\sqrt k +\sqrt p)\|\hat\Sigma_e - \Sigma_e\|_F \conv 0$ as $n_e \to \infty$.
            
    \textbf{Case 3: \normminPCA.} 
        Let $\tau := \min_{e\in\E} \Tr(\Sigma_e) >0$. 
        Then, gathering~\eqref{eqn:cs1},~\eqref{eqn:cs2}, and 
        using
        that $\Tr(V^\top \hat\Sigma_e V) \le \Tr(\hat\Sigma_e)$ 
        (indeed, as $I-VV^\top$ and $\hat\Sigma_e$ are positive semi-definite, we have 
        $\Tr(\hat\Sigma_e - \hat\Sigma_e V V^\top) \geq 0$), we have that for all $e\in\E$ and for all $V\in\Ok$,
        \begin{align}\label{eqn:upperbound-norm-mincpca}
        |\hat M_e^{\mathcal{L}}(V) - M_e^{\mathcal{L}}(V)| 
        &=
            \left| \frac{\Tr(V^\top \hat\Sigma_e V)}{\Tr(\hat\Sigma_e)}  - \frac{\Tr(V^\top \Sigma_e V)}{\Tr(\Sigma_e)}  \right|  \notag \\
            &\le 
                \frac{|\Tr(V^\top[\hat\Sigma_e - \Sigma_e]V)|}{\Tr(\Sigma_e)} 
                + \Tr(V^\top \hat\Sigma_e V) \left|\frac{1}{\Tr(\hat\Sigma_e)} - \frac{1}{\Tr(\Sigma_e)}\right| \notag\\
            &\le 
                \frac{|\Tr(V^\top[\hat\Sigma_e - \Sigma_e]V)|}{\Tr(\Sigma_e)} 
                + \Tr(V^\top \hat\Sigma_e V) \frac{|\Tr(\hat\Sigma_e-\Sigma_e)|}{\Tr(\hat\Sigma_e)\Tr(\Sigma_e)} \notag\\
            &\le 
                \frac{|\Tr(V^\top[\hat\Sigma_e - \Sigma_e]V)|}{\Tr(\Sigma_e)} 
                + \Tr(\hat\Sigma_e) \frac{|\Tr(\hat\Sigma_e-\Sigma_e)|}{\Tr(\hat\Sigma_e)\Tr(\Sigma_e)} \notag\\
            &\le 
                 \tfrac{\sqrt k + \sqrt p}{\tau}  \|\hat\Sigma_e -\Sigma_e\|_F .   
        \end{align}
        Thus, 
        $
            \sup_{V\in\Ok} |\hat M_e^{\mathcal{L}}(V) - M_e^{\mathcal{L}}(V)|  
            \le \tfrac{\sqrt k + \sqrt p}{\tau}  \|\hat\Sigma_e -\Sigma_e\|_F 
            \conv 0
        $
        as
        $n_e \to\infty$.

    \textbf{Case 4: \normmaxRCS.} 
        By Theorem~\ref{thm:comparing-solutions}\ref{thm:comparing-solutions-normalization}, the solutions of population \normminPCA and population \normmaxRCS coincide; the same algebra shows the solutions of the empirical problems also coincide.
        Thus, the consistency of \normminPCA implies the consistency of \normmaxRCS.
        
    \textbf{Case 5: \maxregret.} 
        For all $e\in\E$ and for all $V\in\Ok$, consider
        \begin{align*}
            |\hat M_e^{\mathcal{L}}(V) - M_e^{\mathcal{L}}(V)| 
            &= \left| \Tr(V^\top \hat\Sigma_e V) - \max_{W\in\Ok}\Tr(W^\top \hat\Sigma_e W)
                - \left\{ \Tr(V^\top \Sigma_e V) - \max_{W\in\Ok}\Tr(W^\top \Sigma_e W) \right\}\right| \\
            &\le 
            \underbrace{\big| \Tr(V^\top \hat\Sigma_e V) -  \Tr(V^\top \Sigma_e V) \big| }_{=:A}
                + \underbrace{\Big|\sum_{i=1}^k \lambda_i(\hat\Sigma_e) - \sum_{i=1}^k \lambda_i(\Sigma_e) \Big|,}_{=:B} \\
        \end{align*}
        where $\lambda_i(\cdot)$ gives the $i$-th largest eigenvalue.
        By~\eqref{eqn:cs1}, $A \le \sqrt k \|\hat\Sigma_e-\Sigma_e\|_F$ and by Weyl's inequality for symmetric matrices \citep[e.g., Theorem 4.3.1 in][]{Horn_Johnson_1985}
        \begin{equation}\label{eqn:weyl}
            B   \le k\|\hat\Sigma_e - \Sigma_e\|_\mathrm{op} 
            \le k\|\hat\Sigma_e - \Sigma_e\|_F. 
        \end{equation}
        Hence,
        $
            \sup_{V \in \Ok} |\hat M_e^{\mathcal{L}}(V) - M_e^{\mathcal{L}}(V)| \le (\sqrt k + k)\|\hat\Sigma_e - \Sigma_e\|_F \conv 0
        $
        as $n_\mathrm{min}\to\infty$.
    
    \textbf{Case 6: \normmaxregret.}
        By the triangle inequality, for all $e\in\E$ and for all $V\in\Ok$, $|\hat M_e^{\mathcal{L}}(V) - M_e^{\mathcal{L}}(V)|$ is bounded by
        \begin{align*}
            \left| \frac{\Tr(V^\top \hat\Sigma_e V)}{\Tr(\hat\Sigma_e)} - \frac{\Tr(V^\top \Sigma_e V)}{\Tr(\Sigma_e)} \right|
                + \left| \max_{W\in\Ok}\frac{\Tr(W^\top \hat\Sigma_e W)}{\Tr(\hat\Sigma_e)} 
                    - \max_{W\in\Ok}\frac{\Tr(W^\top \Sigma_e W)}{\Tr(\Sigma_e)} \right|.
        \end{align*}
        The first term is bounded by $\frac{\sqrt k + \sqrt p}{\tau}\|\hat\Sigma_e - \Sigma_e\|_F$ exactly as in~\eqref{eqn:upperbound-norm-mincpca}. Following 
        a similar
        algebraic sequence 
        as in~\eqref{eqn:upperbound-norm-mincpca}, the second term is bounded by
        \begin{align}\label{eqn:eig-value-diff-bound}
            &\tfrac{1}{\Tr(\Sigma_e)}\big|\max_{W\in\Ok} \Tr(W^\top \hat\Sigma_e W) - \max_{W\in\Ok} \Tr(W^\top \Sigma_e W)\big| + \max_{W\in\Ok} \Tr(W^\top \hat\Sigma_e W) \left| \frac{1}{\Tr(\hat\Sigma_e)} - \frac{1}{\Tr(\Sigma_e)}\right|\notag\\
            &\le \tfrac{k}{\tau} \|\hat\Sigma_e - \Sigma_e\|_\mathrm{op} + \Tr(\hat\Sigma_e) \left|\frac{\Tr(\hat\Sigma_e - \Sigma_e)}{\Tr(\hat\Sigma_e)\Tr(\Sigma_e)}\right|\notag\\
            &\le \tfrac{k +\sqrt p}{\tau} \|\hat\Sigma_e - \Sigma_e\|_F,
        \end{align}
        where the first inequality follows from Weyl's inequality (as in case 5) as $\tau$ is still defined as $\min_{e\in\E} \Tr(\Sigma_e)$.
        Hence, $\sup_{V \in \Ok}  |\hat M_e^{\mathcal{L}}(V) - M_e^{\mathcal{L}}(V)| \le \left(\frac{k + \sqrt k + 2\sqrt p }{\tau}\right) \|\hat\Sigma_e - \Sigma_e\|_F \conv 0$ as $n_e\to\infty$.
    \paragraph{Proof of ii).}
            We show that $\hat M^{\mathcal{L}}$ uniformly converges to $M^{\mathcal{L}}$ by bounding 
            \begin{align*}\label{eqn:sup-rearranged}
                \sup_{V \in \Ok} |\hat M^{\mathcal{L}}(V) - M^{\mathcal{L}}(V)|
                &= \sup_{V \in \Ok} |\min_{e\in\E} \hat M_e^{\mathcal{L}}(V) - \min_{e\in\E} M_e^{\mathcal{L}}(V)| \notag\\
                &\le \sup_{V \in \Ok} \max_{e\in\E} |\hat M_e^{\mathcal{L}}(V) - M_e^{\mathcal{L}}(V)|\\
                &= \max_{e\in\E} \sup_{V \in \Ok} |\hat M_e^{\mathcal{L}}(V) - M_e^{\mathcal{L}}(V)|. 
            \end{align*}
            As $\E$ is finite, Statement~i) implies that $\sup_{V \in \Ok} |\hat M^{\mathcal{L}}(V) - M^{\mathcal{L}}(V)| \conv 0$ as $n_\mathrm{min}\to\infty$.
    \paragraph{Proof of iii).}
            Consider Assumption \ref{ass:uniqueness} and assume by contradiction that the maximum is not well-separated, i.e., there exists $\epsilon > 0$ with 
            $
                \sup_{V\in\Ok; \ d(V, V^*_k) > \epsilon} M^{\mathcal{L}}(V) = M^{\mathcal{L}}(V^*_k).
            $
            This implies that there exists a sequence $(V_n)_{n\in\mathbb{N}}$ 
            in $\Ok$ with $\lim_{n\to\infty}M^{\mathcal{L}}(V_n) = M^{\mathcal{L}}(V^*_k)$ and
            $
                \forall n\in\mathbb{N}, \ d(V_n, V^*_k) > \epsilon.
            $
            Since $\Ok$ is compact 
            under the standard metric induced by the Frobenius norm,
            the Bolzano–Weierstrass theorem 
            guarantees the existence of
             a convergent 
             (w.r.t.\ the standard metric)
             subsequence 
             of $(V_n)_{n \in \mathbb{N}}$
            with limit $\tilde{V}\in\Ok$. 
            Because $d$ is continuous with respect to this standard metric, taking the limit yields 
            \begin{equation}\label{eqn:separation}
                d(\tilde V, V_k^*) \ge \epsilon.
            \end{equation} 
            Continuity of $M^{\mathcal{L}}$ then implies
            $
            M^{\mathcal{L}}(\tilde V) 
            = M^{\mathcal{L}}(V^*_k).
            $
            By Assumption \ref{ass:uniqueness}, there exists $Q\in \R^{k\times k}$ with $Q^\top Q = QQ^\top = I_k$ such that $\tilde V = V^*_k Q$. Hence, $d(\tilde V, V^*_k) =0$ contradicting~\eqref{eqn:separation}. Thus, the maximum is a well-separated point.
\hfill\BlackBox

\subsection{Proof of Proposition~\ref{prop:consistency-of-guarantees}}\label{app:pf:consistency-of-guarantees}
Fix any of the considered problems (\minPCA, \normminPCA, \maxRCS, \normmaxRCS, \maxregret, or \normmaxregret), let $\mathcal{L}$ denote the corresponding loss, and
    let $V_k^*$ and $\hat V_k$ denote 
    population and empirical solutions, respectively. 
    \paragraph{Proof of~\ref{prop:consistency-of-guarantees-i}.}
    By Theorems~\ref{thm:maxrcs-convex-hull} and~\ref{thm:normmaxrcs},
    \begin{align}\label{eqn:sup=max}
        \sup_{P\in\mathcal{Q}} \mathcal{L}(\hat V_k;P)
        = \max_{e\in\E} \mathcal{L}(\hat V_k;
        \Sigma_e) , 
            \qquad
        \sup_{P\in\mathcal{Q}} \mathcal{L}(V_k^*;P)
        = \max_{e\in\E} \mathcal{L}(V^*_k;\Sigma_e).
    \end{align}
    For all $V,W\in\Ok$ and for all $e\in\E$, the map $V\mapsto \mathcal{L}(V;\Sigma_e)$ is $c_e$-Lipschitz with respect to the projection distance $d(V, W) = \|VV^\top - WW^\top\|_F$, where
    \begin{align*}
        c_e := \begin{cases}
            \|\Sigma_e\|_F 
            &\textrm{for \minPCA, \maxRCS,  \maxregret} \\
            \frac{\|\Sigma_e\|_F}{\Tr(\Sigma_e)} 
            &\textrm{for \normminPCA, \normmaxRCS, \normmaxregret}.
        \end{cases}
    \end{align*}
    To see this, consider \minPCA. For all $e\in\E$ and for all $V,W \in \Ok$
    \begin{align*}
        \big| \mathcal{L}_\mathrm{var}(V;\Sigma_e) - 
        \mathcal{L}_\mathrm{var}(W;\Sigma_e) \big| 
        &=
        \big|\Tr(V^\top \Sigma_e V) - \Tr(W^\top\Sigma_e W)\big| \\
        &= \big|\Tr(\Sigma_e(VV^\top - WW^\top))\big| \\
        & \le \|\Sigma_e\|_F \|VV^\top - WW^\top\|_F = \|\Sigma_e\|_F\ d(V,W),
    \end{align*}
    by Cauchy–Schwarz and cyclicity of the trace.
    For \maxRCS and \maxregret, the objectives differ from $\mathcal{L}_{\mathrm{var}}(V;\Sigma_e)$ only by additive terms independent of $V$, so the same bound applies.
    The normalized variants introduce only the multiplicative factor $1/\Tr(\Sigma_e)$. 
    This concludes the Lipschitz argument.
    The maximum of Lipschitz functions is Lipschitz, 
    in this case
    with constant equal to $\max_{e\in\E} c_e$. Therefore, for all $V,W\in \Ok$,
    \begin{align}\label{eqn:lipschitz-bound}
    \Big|\max_{e\in\E}\mathcal{L}(V;\Sigma_e)
          - \max_{e\in\E}\mathcal{L}(W;\Sigma_e)\Big| 
    \le \big(\max_{e\in\E}c_e\big) \, d(V,W).
    \end{align}
    Combining~\eqref{eqn:sup=max} with the Lipschitz bound~\eqref{eqn:lipschitz-bound} at $V=\hat V_k$, $W=V_k^*$ yields
    \begin{align*}
        \Big|\sup_{P\in\mathcal{Q}} \mathcal{L}(\hat V_k;P) - \sup_{P\in\mathcal{Q}} \mathcal{L}(V_k^*;P)\Big| 
        \le \big(\max_{e\in\E}c_e\big) \, d(\hat V_k, V_k^*).
    \end{align*}
    Since $d(\hat V_k,V_k^*) \conv 0$ as $n_\mathrm{min}\to\infty$ by Proposition~\ref{prop:consistency}, 
    the right-hand side converges in probability to zero, proving Statement~\ref{prop:consistency-of-guarantees-i}.
    \paragraph{Proof of~\ref{prop:consistency-of-guarantees-ii}.}
     It is sufficient to show that 
    $$
     \Big|\sup_{P\in\mathcal{Q}} \mathcal{L}(\hat V_k;P) - \hat m_k \Big| \conv 0 \quad\textrm{as }n_\mathrm{min}\to\infty,
    $$
    where $\hat m_k = \max_{e\in\E} \mathcal{L}(\hat V_k;\hat\Sigma_e)$.
    By~\eqref{eqn:sup=max}, we have that
    \begin{align*}
        \Big|\sup_{P\in\mathcal{Q}} \mathcal{L}(\hat V_k;P) - \hat m_k \Big| 
        &= \Big|
            \max_{e\in\E} \mathcal{L}(\hat V_k;\Sigma_e) 
            - \max_{e\in\E} \mathcal{L}(\hat V_k;\hat \Sigma_e)
            \Big| 
        \le \max_{e\in\E}\Big|
             \mathcal{L}(\hat V_k;\Sigma_e) 
            - \mathcal{L}(\hat V_k;\hat \Sigma_e)
            \Big|.
    \end{align*}
    We bound the final term for each of the objectives, 
    using arguments that are analogous to the proof of i) in the proof of Proposition~\ref{prop:consistency}.
    For \minPCA, by~\eqref{eqn:cs1},
    \begin{align}\label{eqn:var-bound}
        \max_{e\in\E}\Big|
             \mathcal{L}_\mathrm{var}(\hat V_k;\Sigma_e) 
            - \mathcal{L}_\mathrm{var}(\hat V_k;\hat \Sigma_e)
            \Big|
        &= \max_{e\in\E}\Big|
            \Tr(\hat V_k^\top\Sigma_e\hat V_k)
            -\Tr(\hat V_k^\top\hat\Sigma_e\hat V_k)
            \Big| \notag \\
        &\le \sqrt{k} \, \max_{e\in\E} \|\Sigma_e - \hat\Sigma_e\|_F.
    \end{align}
    Similarly, for \maxRCS, by the cyclic property of the trace,
    \begin{align*}
        \max_{e\in\E}\Big|
             \mathcal{L}_\mathrm{RCS}(\hat V_k;\Sigma_e) 
            - \mathcal{L}_\mathrm{RCS}(\hat V_k;\hat \Sigma_e)
            \Big| 
    &= \max_{e\in\E}\Big| 
        \Tr\big((\Sigma_e - \hat\Sigma_e) (I - \hat V_k \hat{V}_k^\top)\big)\Big| \\
    &\le \sqrt{p-k} \, \max_{e\in\E} \|\Sigma_e - \hat\Sigma_e\|_F,
    \end{align*}
        where $\|I - \hat V_k \hat V_k^\top\|_F = \sqrt{p-k}$ as it is an orthogonal projector of rank $p-k$.
    Let $\tau := \min_{e\in\E}\Tr(\Sigma_e)>0$. Then for \normminPCA and \normmaxRCS,
    \begin{align*}
        \max_{e\in\E}\Big|
             \mathcal{L}_\mathrm{normVar}(\hat V_k;\Sigma_e) 
            - \mathcal{L}_\mathrm{normVar}(\hat V_k;\hat \Sigma_e)
            \Big|
        & = \max_{e\in\E}\Big|
             \mathcal{L}_\mathrm{normRCS}(\hat V_k;\Sigma_e) 
            - \mathcal{L}_\mathrm{normRCS}(\hat V_k;\hat \Sigma_e)
            \Big| \\
        &= \max_{e\in\E} \Big| 
            \frac{\Tr(\hat V_k^\top \hat\Sigma_e \hat V_k)}{\Tr(\hat\Sigma_e)}
            - 
            \frac{\Tr(\hat V_k^\top \Sigma_e \hat V_k)}{\Tr(\Sigma_e)} 
        \Big| \\
        &\le \frac{\sqrt k + \sqrt p}{\tau} \max_{e\in\E}  \|\hat\Sigma_e -\Sigma_e\|_F, 
    \end{align*}
    where the last inequality follows from~\eqref{eqn:upperbound-norm-mincpca}.
    For \maxregret,
    \begin{align*}
        &\max_{e\in\E}\Big|
             \mathcal{L}_\mathrm{reg}(\hat V_k;\Sigma_e) 
            - \mathcal{L}_\mathrm{reg}(\hat V_k;\hat \Sigma_e)
            \Big|\\
        &= \max_{e\in\E} \Big| 
                \max_{W\in\Ok}\Tr(W^\top\Sigma_e W) 
                - \Tr(\hat V_k^\top \Sigma_e \hat V_k) 
                - \max_{W\in\Ok}\Tr(W^\top\hat\Sigma_e W) 
                + \Tr(\hat V_k^\top \hat\Sigma_e \hat V_k) 
            \Big|\\
        &\le \max_{e\in\E} \Big| 
                \max_{W\in\Ok}\Tr(W^\top\Sigma_e W)
                - \max_{W\in\Ok}\Tr(W^\top\hat\Sigma_e W) \Big|  
            + \max_{e\in\E} \Big| 
                \Tr(\hat V_k^\top \Sigma_e \hat V_k) 
                - \Tr(\hat V_k^\top \hat\Sigma_e \hat V_k)  \Big| \\
        &= \max_{e\in\E} \Big| 
                \sum_{i=1}^k\lambda_i(\Sigma_e)
                - \sum_{i=1}^k\lambda_i(\hat\Sigma_e) \Big|  
            + \max_{e\in\E} \Big| 
                \Tr(\hat V_k^\top \Sigma_e \hat V_k) 
                - \Tr(\hat V_k^\top \hat\Sigma_e \hat V_k)  \Big| \\
        &\le (k + \sqrt{k}) \, \max_{e\in\E} \|\Sigma_e - \hat\Sigma_e\|_F,
    \end{align*}
    where the last inequality 
    follows as in~\eqref{eqn:weyl}.
    Finally, for \normmaxregret, as in case~6 of the proof of Proposition~\ref{prop:consistency},
    \begin{align*}
        &\max_{e\in\E}\Big|
             \mathcal{L}_\mathrm{normReg}(\hat V_k;\Sigma_e) 
            - \mathcal{L}_\mathrm{normReg}(\hat V_k;\hat \Sigma_e)
            \Big|
          \le \tfrac{k+\sqrt k + 2\sqrt p}{\tau} \, \max_{e\in\E}  \|\hat\Sigma_e - \Sigma_e\|_F. 
    \end{align*}

    Since $\E$ is finite and for all $e\in\E$ we have
    $\|\hat\Sigma_e-\Sigma_e\|_F \conv 0$ as $n_e\to\infty$ (by~\eqref{eqn:sigma-conv}),
    it follows that
    \begin{equation*}
        \max_{e\in\E}\|\hat\Sigma_e-\Sigma_e\|_F \conv 0
    \qquad \text{as } n_{\min}\to\infty,
    \end{equation*}
    which proves Statement~\ref{prop:consistency-of-guarantees-ii}.
\hfill\BlackBox

\subsection{Proof of Theorem~\ref{thm:mc}}
    Let $V^*_k \in \Ok$ be a rank-$k$ solution to \maxRCS and assume $V^*_k$ is $\mu$-incoherent.
    Fix $\mathbf{x}\in\R^{1\times p}$ and a mask $\omega \in\{0,1\}^p$ such that the number 
    $s$
    of unobserved entries 
    satisfies
    $s \le \tfrac{ p \epsilon}{k \mu^2 (2\epsilon + 1)}$ and
    let $S(\omega):=\{i\in[p]:\omega_i=0\}$ denote the set of unobserved coordinates.
    Inductive matrix completion via least squares is equivalent to ordinary least squares of the response $y:=\mathbf{x}^\top$ on the design matrix $X:=V^*_k$, with the rows indexed by $S(\omega)$ removed.
    Hence, Proposition~\ref{prop:stable-ols} (Appendix~\ref{app:stable-ols}) applies with 
    \begin{equation*}
        X := V^*_k \in \R^{p\times k},
        \quad y := \mathbf{x}^\top \in \R^{p},
        \quad S := S(\omega),
    \end{equation*}
    which yields\footnote{This is the transpose-version of Proposition~\ref{prop:stable-ols}; the Euclidean norm is invariant under transposition.}
    \begin{equation*}
        \|\mathbf{x} - \ell(\mathbf{x},\omega,V^*_k)(V^*_k)^\top\|_2^2
        \le
        (1+\epsilon)
        \|\mathbf{x} - \mathbf{x} V^*_k(V^*_k)^\top\|_2^2.
    \end{equation*}
    Since 
    $s \le \tfrac{ p \epsilon}{k \mu^2 (2\epsilon + 1)}$
    holds $Q$-almost surely 
    by Assumption~\ref{ass:sufficient-obs},
    the above inequality holds $Q$-almost surely for $\omega\sim Q$.
    For any $P\in\mathcal P$, we then take expectations over $\mathbf{x}\sim P$ and $\omega\sim Q$, yielding
    \begin{align*}
    \mathcal{L}_{P,Q}(V^*_k)
    &=
    \Ex_{\mathbf{x}\sim P,\omega\sim Q}
    \|\mathbf{x} - \ell(\mathbf{x},\omega,V^*_k)(V^*_k)^\top\|_2^2 \\
    &\le
    (1+\epsilon) 
    \Ex_{\mathbf{x}\sim P,\omega\sim Q}
    \|\mathbf{x} - \mathbf{x} V^*_k (V^*_k)^\top\|_2^2
    =
    (1+\epsilon) 
    \Ex_{\mathbf{x}\sim P}
    \|\mathbf{x} - \mathbf{x}V^*_k (V^*_k)^\top\|_2^2.
    \end{align*}
    Taking the supremum over $P\in\mathcal{P}$ and applying statement~(i) of Theorem~\ref{thm:maxrcs-convex-hull}  gives
    \begin{align}\label{eqn:bound1_clean}
        \sup_{P\in\mathcal P} \mathcal{L}_{P,Q}(V^*_k) 
        &\le (1+\epsilon) \sup_{P\in\mathcal P} \Ex_{\mathbf{x}\sim P}\| \mathbf{x} - \mathbf{x} V^*_k (V^*_k)^\top \|^2_2 \notag\\
        &= (1+\epsilon) \max_{e\in\E} \Ex_{\mathbf{x}\sim P_e}\| \mathbf{x} - \mathbf{x} V^*_k (V^*_k)^\top \|^2_2 \notag\\
        &=
        (1+\epsilon)\min_{V\in\Ok} \max_{e\in\E} 
        \Ex_{\mathbf{x}\sim P_e}\| \mathbf{x} - \mathbf{x} V V^\top \|^2_2, 
    \end{align}
    where the last equality follows from the definition of $V_k^*$ as a rank-$k$ solution to \maxRCS.
    Finally,
    for any fixed $\mathbf{x}$ and $V\in\Ok$, the choice $\ell=\mathbf{x}V$ minimizes $\|\mathbf{x}-\ell V^\top\|_2^2$ over $\ell\in\R^k$. 
    Hence, for all $\mathbf{x}$ and for all $\omega$,
    \begin{equation*}
        \| \mathbf{x} - \mathbf{x} V V^\top \|^2_2
        \le 
        \| \mathbf{x} - \ell(\mathbf{x},\omega,V) V^\top \|^2_2,
    \end{equation*}
    and therefore for each domain $e$,
    \begin{equation*}
        \Ex_{\mathbf{x}\sim P_e}
            \| \mathbf{x} - \mathbf{x} V V^\top \|^2_2
        \le \Ex_{\mathbf{x}\sim P_e,\omega\sim Q}
            \| \mathbf{x} - \ell(\mathbf{x},\omega,V) V^\top \|^2_2
        \le \sup_{P\in\mathcal{P}} \mathcal{L}_{P,Q}(V).
    \end{equation*}
    Taking the maximum over $e\in\E$ and then the minimum over $V\in\Ok$ gives
    \begin{equation*}
    \min_{V\in\Ok} \max_{e\in\E} 
        \Ex_{\mathbf{x}\sim P_e}\| \mathbf{x} - \mathbf{x} V V^\top \|^2_2 
        \le 
        \min_{V\in\Ok} \sup_{P \in \mathcal{P}}
        \mathcal{L}_{P,Q}(V).
    \end{equation*}
    Combining this identity with~\eqref{eqn:bound1_clean} 
    yields
    \begin{equation*}
        \sup_{P\in\mathcal P} \mathcal{L}_{P,Q}(V^*_k) 
        \le (1+\epsilon) 
            \min_{V\in\Ok} 
            \sup_{P \in \mathcal{P}}
            \mathcal{L}_{P,Q}(V).
    \end{equation*}
    as claimed.
\hfill\BlackBox
\end{document}